\long\def\acks#1{\vskip 0.3in\noindent{\large\bf Acknowledgments}\vskip 0.2in
\noindent #1}
\newcommand{\vertiii}[1]{{\left\vert\kern-0.25ex\left\vert\kern-0.25ex\left\vert #1 
    \right\vert\kern-0.25ex\right\vert\kern-0.25ex\right\vert}}
\newtheorem{definition}{Definition}
\newtheorem{assumption}{Assumption}
\newtheorem{proposition}{Proposition}
\newtheorem{theorem}{Theorem}
\newtheorem{lemma}{Lemma}
\newtheorem{corollary}{Corollary}
\newtheoremstyle{myrem}%
{2pt}
{2pt}
{}
{}
{\bfseries}
{.}
{.5em}
{}%
\theoremstyle{myrem}
\newcommand{\N}{\mathbb{N}}
\newcommand{\R}{\mathbb{R}}
\newcommand{\E}{\mathbb{E}}
\newcommand{\cO}{\mathcal O}
\newcommand{\cL}{\mathcal{L}}
\newcommand{\cW}{\mathcal{W}}
\newcommand{\Proba}{\mathbb{P}}
\newcommand{\ind}[1]{\mathbf 1_{#1}}
\DeclareMathOperator*{\argmin}{argmin}
\DeclareMathOperator{\tv}{TV}
\DeclareMathOperator{\Var}{Var}
\DeclareMathOperator{\Lip}{Lip}
\newcommand{\cX}{\mathcal X}
\DeclareFontFamily{U}{mathx}{\hyphenchar\font45}
\DeclareFontShape{U}{mathx}{m}{n}{
      <5> <6> <7> <8> <9> <10>
      <10.95> <12> <14.4> <17.28> <20.74> <24.88>
      mathx10
      }{}
\DeclareSymbolFont{mathx}{U}{mathx}{m}{n}
\DeclareMathAccent{\widecheck}{0}{mathx}{"71}
\title{Convergence and concentration properties of constant step-size SGD through Markov chains}
\author{Ibrahim Merad%
\thanks{LPSM, UMR 8001, Universit\'e Paris Cité, Paris, France}\\
\and
St\'ephane Ga\"iffas%
\thanks{LPSM, UMR 8001, Universit\'e Paris Cité, Paris, France and DMA, École normale supérieure}
}
\begin{document}

\maketitle

\begin{abstract}
    We consider the optimization of a smooth and strongly convex objective using constant step-size stochastic gradient descent (SGD) and study its properties through the prism of Markov chains. We show that, for unbiased gradient estimates with mildly controlled variance, the iteration converges to an invariant distribution in total variation distance. We also establish this convergence in Wasserstein-2 distance under a relaxed assumption on the gradient noise distribution compared to previous work. Our analysis shows that the SGD iterates and their invariant limit distribution \emph{inherit} sub-Gaussian or sub-exponential concentration properties when these hold true for the gradient. This allows the derivation of high-confidence bounds for the final estimate. Finally, under such conditions in the linear case, we obtain a dimension-free deviation bound for the Polyak-Ruppert average of a tail sequence. All our results are non-asymptotic and their consequences are discussed through a few applications.
    
    \medskip
    \noindent
    \emph{Keywords.} stochastic approximation; Markov chains; Polyak-Ruppert averaging; generalization error 
\end{abstract}


\section{Introduction}

We consider the following stochastic optimization problem
\begin{equation}\label{eq:problem}
    \min_{\theta \in \R^d} \cL(\theta) := \E_{\zeta} [\ell(\theta, \zeta)],
\end{equation}
where $\cL$ is a smooth strongly convex objective only accessible through unbiased random gradient samples $G(\theta, \zeta) = \nabla \ell (\theta, \zeta)$ which may be queried at any parameter value $\theta \in \R^d$. Given an initial point $\theta_0$ and a step-size $\gamma,$ problem~\eqref{eq:problem} is commonly solved using the well-known stochastic gradient descent (SGD) algorithm defined by the iteration
\begin{equation}\label{eq:sgd_iteration1}
    \theta_{t+1} = \theta_t -\gamma G(\theta_t, \zeta_t), \quad \text{ for }\quad t\geq 0.
\end{equation}
We study the convergence properties of the Markov chain $(\theta_t)_{t\geq 0}$ generated by the above iteration as well as the concentration properties satisfied by a derived estimator $\widehat{\theta}$ of the global optimum $\theta^\star = \argmin_{\theta} \cL(\theta)$ based on the concentration of the gradient samples $G(\theta_t, \zeta_t).$

Problem~\eqref{eq:problem} is the common formulation for a large fraction of statistical learning problems where the objective $\cL(\theta)$ is defined as the expectation of a loss function $\ell$ over a random variable $\zeta$ following an unknown distribution of samples. In a practical setting, the random gradients $G(\theta_t, \zeta_t)$ are computed using a dataset of independent and identically distributed samples $(\zeta_i)_{i=1}^n.$ The SGD algorithm is employed to solve~\eqref{eq:problem} in two situations. Either the samples $(\zeta_i)_{i=1}^n$ are available offline but in such a great amount that using the whole dataset at each gradient step incurs an excessive computational load, therefore, individual samples or small batches are used at each iteration instead. Or, the samples $\zeta_i$ are received individually in an online fashion and optimization must be run using one instance at a time. Our framework covers both cases provided that each iteration uses new data which is independent from the past. Note that we exclude the optimization of empirical objectives of the form $\widehat{\cL}(\theta)=\frac{1}{n}\sum_{i=1}^n\ell_i(\theta)$ and focus on generalization errors w.r.t.~an unknown distribution of $\zeta$ in~\eqref{eq:problem}. 

Thanks to its simplicity and efficiency, the SGD algorithm is widely adopted as the go-to approach for stochastic optimization problems in general. Since its first appearance in the seminal work of~\cite{robbins1951stochastic} the theoretical properties of SGD have been investigated in a series of pioneering works~\cite{chung1954stochastic, sacks1958asymptotic, fabian1968asymptotic}. A notable milestone in these theoretical developments was the discovery of Polyak-Ruppert averaging~\cite{ruppert1988efficient, polyak1992acceleration} which allows to reduce the impact of noise and improve the convergence rate for certain cases of interest. The subject benefited from a growing attention with the advent of complex machine learning models such as neural networks and a rich literature has appeared to address the surfacing questions about SGD and its numerous variants and use cases~\cite{shamir2013stochastic, bach2014adaptivity, moulines2011non, bottou2007tradeoffs, needell2014stochastic}.

Although the basic definition of the SGD iteration~\eqref{eq:sgd_iteration1} is quite simple, a great number of variations are possible by playing on various aspects among which the choice of step-size is critical. Early work~\cite{robbins1951stochastic} suggested a decaying step-size of order $t^{-1}$ but this leads to poor dependence on problem conditioning~\cite{bach2014adaptivity, fabian1968asymptotic} while other step-size schedules with slower decay of order $t^{-\alpha}$ with $\alpha \in(1/2, 1]$ combined with averaging achieve better practical and theoretical performance~\cite{ruppert1988efficient,polyak1992acceleration, moulines2011non}. In this work, we consider constant step-size SGD which is also a commonly adopted choice due to its usually fast convergence~\cite{schmidt2013fast, pmlr-v89-vaswani19a, Ma2017ThePO}. 

\subsection{Main contributions}
This paper studies constant step-size SGD as a Markov chain and makes the following contributions.

\begin{itemize}
    \item We state two convergence results of the Markov chain to an invariant distribution. The first ergodicity theorem states convergence in total variation distance and the second one in terms of the Wasserstein-$2$ distance. While similar results exist in the literature~{\cite{yu2021analysis, dieuleveut2020bridging}, our version for the Wasserstein convergence mode improves upon previous work~\cite{dieuleveut2020bridging} since it holds in a more general setting.}
    \item {In a novel result}, we show that sub-Gaussian and sub-exponential concentration of the gradient samples implies the same property for the SGD iterates and their invariant limit distribution. Moreover, the associated constant is proportional to the step-size. {We believe this to be the first such characterization of the invariant distribution of constant step-size SGD.} Thanks to this property, we obtain high-confidence deviation bounds on the final SGD iterate.
    \item Provided a slightly stronger concentration assumption on the gradient samples, we show similar but dimension-free high-confidence bounds on the last SGD iterate. {Our statement is also non-asymptotic and does not require the gradient norm to be almost surely bounded so that it improves on previous results~\cite{jain2019making, harvey2019tight, yu2021analysis} which lacked some of these properties.}
    \item Finally, for the special case of a linear gradient, we obtain a high-confidence dimension-free bound for the Polyak-Ruppert average of a tail sequence of the SGD iterates. This is achieved, in part, thanks to a more generic concentration result which holds for any Lipschitz function applied to a stationary sequence. {Our concentration result improves upon existing literature~\cite{mou2020linear, lou2022beyond} with similar settings thanks to its dimension-free upper bound.}
\end{itemize}
All our results are non-asymptotic.

\subsection{Related works}
\paragraph{SGD as a Markov chain.} A fairly limited portion of the SGD literature adopts the Markov chain approach. Among the earliest,~\cite{pflug1986stochastic} studied the iteration in question for vanishing step-size, while~\cite{bach2013non} considers constant step-size averaged SGD for non-strongly-convex smooth objectives and shows $L_p$ convergence of the excess risk for all $p\geq 1.$ Although their analysis does not use Markov chain theory, they discuss properties of the invariant distribution which the iteration converges to, including a few properties we state in this paper. However, they do not derive high-confidence estimation bounds as we do. More recently, convergence in Wasserstein distance was established by~\cite{dieuleveut2020bridging} for constant step-size SGD applied to a strongly convex and smooth objective, albeit under a co-coercivity condition which is hard to establish in the nonlinear case. Further, an expansion of the asymptotic moments of averaged SGD is provided in~\cite{dieuleveut2020bridging} and the Richardson-Romberg extrapolation strategy is studied which allows to reduce the estimation error on the global optimum. Most recently,~\cite{yu2021analysis} studied SGD run on a non-convex, non-smooth but quadratically growing objective. Under such weakened conditions, they show that the generated Markov chain is geometrically ergodic (see~\cite{meyn2012markov}) and proceed to establish a CLT for the generated Markov iterates. They also state results controlling the bias of the limit distribution under additional assumptions such as convexity, $L_4$ control of the gradient noise and a generalized Łojasiewicz condition~\cite{karimi2016linear}.

\paragraph{High probability bounds.} In addition to establishing the convergence of SGD in expectation, the works of~\cite{rakhlin2011making, bach2013non, bach2014adaptivity} go further to state high-confidence bounds on the final error. Still, sub-Gaussian concentration only holds under strong bounded gradient assumptions. High-confidence deviation results are also stated in~\cite{ghadimi2013optimal} where an accelerated stochastic optimization method for strongly convex composite objectives is studied. However, the bounds are sub-exponential while the gradient is assumed to be sub-Gaussian. 

In~\cite{kakade2008generalization}, high probability bounds are proved for the PEGASOS algorithm using Freedman's inequality for martingales~\cite{freedman1975tail}. A generalization of the said inequality was used by~\cite{harvey2019simple, harvey2019tight} to prove such bounds for SGD in the non-smooth strongly convex case. Most recently, for a careful choice of step-size,~\cite{jain2019making} obtained high-confidence results on the last SGD iterate. Unfortunately, both previous works require a deterministic bound to hold over the gradient or its noise which strongly constrains their applicability. In~\cite{pillaud2018exponential}, the authors derived high probability convergence bounds for averaged and non-averaged SGD applied to classification and regression problems. Finally, a high probability analysis of Delayed AdaGrad with momentum was presented by~\cite{li2020high} in the smooth non-convex setting.

Note that certain recent works design \emph{robust} variants of SGD achieving sub-Gaussian deviation bounds on the last iterate with only a second moment assumption on the gradient~\cite{tsai2022heavy, gorbunov2020stochastic}. Similar results were later obtained under even weaker gradient moment assumptions~\cite{sadiev2023high, nguyen2023high}. However, in this work, we focus on the \emph{classical} SGD algorithm and the properties inherited by its iterates from the gradient samples.

\paragraph{Polyak-Ruppert averaging.} The averaging procedure introduced by~\cite{polyak1992acceleration, ruppert1988efficient} was also studied by~\cite{gyorfi1996averaged, defossez2015averaged} who proved asymptotic convergence properties. Non-asymptotic results and additional developments appeared in the works of~\cite{moulines2011non, dieuleveut2016nonparametric, dieuleveut2017harder, jain2016parallelizing, jain2017markov, lakshminarayanan2018linear} with particular attention to least-squares, logistic regression and kernel-based methods. In particular, non-asymptotic results of convergence in expectation were obtained for averaged SGD in~\cite{dieuleveut2017harder, lakshminarayanan2018linear, neu2018iterate, rakhlin2011making}. Among such results, some demonstrate the advantages of special averaging schemes~\cite{shamir2013stochastic, lacoste2012simpler}. The authors of~\cite{gadat2017optimal} prove a tight non-asymptotic $L^2$ convergence result for averaged iterates with decreasing step-size. Finally, some relatively recent works obtained high probability concentration bounds for Polyak Ruppert averaging with and without sub-Gaussian assumptions on the data~\cite{mou2020linear, lou2022beyond}.

\subsection{Paper organization}
Section~\ref{sec:setting} lays out the basic setting and assumptions necessary for SGD convergence. Section~\ref{sec:ergodicity} states our first SGD ergodicity result. In Section~\ref{sec:invariant_properties}, we first state a basic result on the invariant measure's expectation, bias and variance and proceed to derive concentration properties based on analogous assumptions on the gradient. Section~\ref{sec:wasserstein_convergence} presents an additional convergence result in Wasserstein distance. In Section~\ref{sec:deviation_bounds}, we give deviation bounds on the final SGD iterate which follow from preceding results. We also formulate our high-confidence bound on a tail Polyak-Ruppert average for the linear case. Finally, we discuss a few applications in Section~\ref{sec:applis} and conclude.

\section{Setting and notations}\label{sec:setting}

Let $\Theta$ denote either a convex subset of $\R^d$ or $\R^d$ itself depending on context. 
We refer to the Borel $\sigma$-algebra of $\R^d$ as $\mathcal{B}(\R^d).$ 
For any random variable $X,$ we denote $\mathcal{D}(X)$ its distribution. We refer to the space of square-integrable measures on $\R^d$ as $\mathcal{P}_2(\R^d).$ We denote $\mathcal{M}_1(\R^d)$ the set of probability measures over $\R^d.$ For real numbers $a$ and $b,$ we denote $\min(a, b) = a \wedge b$ and $\max(a, b) = a \vee b.$ We denote $\Lip(\cX)$ the set of $1$-Lipschitz functions $h:\cX \to \R.$ For $p \in \N^*,$ we denote $\|X\|_{L_p} = (\E|X|^p)^{1/p}$ the $L_p$ norm of a random variable~$X.$

In the entirety of this work, we assume that $\cL$ satisfies
\begin{assumption}\label{asm:smooth_strongconvex}
    There exist positive constants $0 < \mu \leq L < +\infty$ such that
    \begin{equation*}
        \frac{\mu}{2}\|\theta - \theta'\|^2 \leq \cL(\theta) - \cL(\theta') - \langle \nabla\cL(\theta'), \theta - \theta' \rangle \leq \frac{L}{2}\|\theta - \theta'\|^2
    \end{equation*}
    for all $\theta, \theta' \in \R^d,$ i.e. $\cL$ is $L$ gradient-Lipschitz and $\mu$-strongly convex.
\end{assumption}
As an immediate consequence, $\cL$ admits a unique minimum $\theta^\star$ which is a critical point:
\begin{equation}
    \theta^\star = \argmin_{\theta \in \R^d} \cL(\theta) \quad \text{such that}\quad \nabla \cL(\theta^\star) = 0.
\end{equation}
For an initial $\theta_0 \in \Theta,$ step-size $\gamma > 0$ and all $t\geq 0,$ we recall the basic SGD iteration
\begin{equation}\label{eq:sgd_iteration}
    \theta_{t+1} = \theta_t -\gamma G(\theta_t, \zeta_t).
\end{equation}
In this work, we consider constant step-size SGD so that $\gamma$ is fixed along the iteration. We require some basic assumptions on the samples $G(\theta_t, \zeta_t)$ in order to prove the convergence of SGD. Namely, $G(\theta_t, \zeta_t)$ needs to be an unbiased estimator of the true gradient $\nabla \cL(\theta_t)$ with controlled variance as we formally state in
\begin{assumption}\label{asm:gradient}
    Iteration~\eqref{eq:sgd_iteration} is run using a sequence of i.i.d samples $(\zeta_i)_{i\geq0}$. Further, given a fixed parameter $\theta \in \Theta$, the random gradient sample $G(\theta, \zeta)$ can be written as
    \begin{equation}
        G(\theta, \zeta) = \nabla\cL(\theta) + \varepsilon_{\zeta}(\theta),
    \end{equation}
    where the noise $\varepsilon_{\zeta}(\theta)$ satisfies the following properties$:$
    \begin{enumerate}[label=(\roman*)]
        \item \label{asm:gradient_centered}(Centered) We have $\E\big[ \varepsilon_{\zeta}(\theta) \big] =0.$
        \item \label{asm:gradient_dens_minor}(Density component \& minorization) The distribution of $\varepsilon_{\zeta}(\theta)$ can be written as $\mathcal{D}(\varepsilon_{\zeta}(\theta)) = \delta \nu_{\theta,1} + (1-\delta)\nu_{\theta, 2}$ with $\delta > 0$ and $\nu_{\theta, 1}, \nu_{\theta, 2}$ two probability distributions over $\R^d$ such that $\nu_{\theta, 1}$ admits a density $h(\theta, \cdot)$ w.r.t. Lebesgue's measure satisfying\textup:
        \begin{equation*}
            \inf_{\omega \in S}h(\theta, \omega) > 0 \quad  \text{for all $\theta$ and compact $S\subset \R^d$}.
        \end{equation*}
        \item \label{asm:gradient_regular}(Regularity) There are positive constants $L_\sigma$ and $\sigma^2$ such that for all $\theta$ we have$:$
        \begin{equation}
              \E\big[ \|\varepsilon_{\zeta}(\theta) \|^2  \big] = \E\big[ \|G(\theta, \zeta) - \nabla \cL (\theta)\|^2 \big] \leq L^2_{\sigma}\|\theta - \theta^\star\|^2 + \sigma^2.
        \end{equation}    
    \end{enumerate}
\end{assumption}
The additional assumptions on the distribution of the noise $\varepsilon_{\zeta}(\theta)$ are needed in order to establish the ergodicity of the resulting Markov chain $(\theta_t)_{t\geq 0}$ (Theorem~\ref{thm:ergodicity} below){. For instance, Assumption~\ref{asm:gradient}~\ref{asm:gradient_dens_minor} ensures that the noise density does not vanish unless taken near infinity. This entails} that the associated transition kernel satisfies a \emph{minorization} property implying that the chain will sufficiently explore the state space, see~\cite{meyn2012markov} for more details. Note also that these requirements are fairly mild since they only require the noise distribution to admit a diffuse component.

\section{Markov Chain and Geometric Ergodicity}\label{sec:ergodicity}

Before stating the convergence result for the SGD Markov chain, we introduce some further useful notation. For a given step-size $\gamma>0,$ we will denote $P_\gamma$ the Markov transition kernel governing the Markov chain $(\theta_t)_{t\geq 0}$ generated by iteration~\eqref{eq:sgd_iteration} so that for any $t\geq 0$ and $A\in \mathcal{B}(\R^d)$ we have:
\begin{equation*}
    \Proba(\theta_{t+1} \in A \:\vert\: \theta_t) = P_\gamma(\theta_t, A).
\end{equation*}
The transition kernel $P_\gamma$ acts on probability distributions $\nu \in \mathcal{M}_1(\R^d)$ through the mapping $\nu \to \nu P_\gamma$ which is defined, for all $A\in\mathcal{B}(\R^d)$, by $\nu P_\gamma(A) = \int P_\gamma(\theta, A)d\nu(\theta)$ which corresponds to $\Proba(\theta_{t+1}\in A \vert \theta_t)$ for $\theta_t \sim \nu.$ For $n \geq 1,$ we similarly define the multi-step transition kernel $P_\gamma^n$ which is such that $P_\gamma^n(\theta_t, A) = \Proba(\theta_{t+n} \in A \:\vert\: \theta_t)$ and acts on probability distributions $\nu \in \mathcal{M}_1(\R^d)$ through $\nu P_\gamma^n = (\nu P_\gamma)P_\gamma^{n-1}.$ Finally, we define the total-variation norm of a signed measure $\nu$ as 
\begin{equation*}
    2\|\nu\|_{\tv} = \sup_{f:|f|\leq 1} \int f(\theta)\nu(d\theta) = \sup_{A \in \mathcal{B}(\R^d)} \nu (A) - \inf_{A \in \mathcal{B}(\R^d)} \nu (A).
\end{equation*}
In particular, we recover the total-variation \emph{distance} between two probability distributions $\nu_1, \nu_2 \in \mathcal{M}_1(\R^d)$ as $d_{\tv}(\nu_1, \nu_2) = \|\nu_1 - \nu_2\|_{\tv}.$ We are now ready to state the geometric ergodicity result for the SGD Markov chain. A similar result to Theorem~\ref{thm:ergodicity} below can be found in~\cite{yu2021analysis}. However, we make a formal convergence statement in total-variation distance for the sake of completeness since it will be useful in the sequel.
\begin{theorem}\label{thm:ergodicity}
    Under Assumptions~\ref{asm:smooth_strongconvex} and~\ref{asm:gradient}\textup, the Markov chain $(\theta_t)_{t\geq 0}$ defined by iteration~\eqref{eq:sgd_iteration} with step-size 
    \begin{equation}\label{eq:thm1_stepsize}
        \gamma< \frac{2\mu}{\mu^2 + (\mu L\vee L^2_{\sigma})}
    \end{equation}
    admits a unique invariant measure $\pi_{\gamma}$ and converges geometrically to it.
    Namely, for any initial $\theta_0 \in \R^d,$ there exist $\rho < 1$ and $M < +\infty$ such that
    \begin{equation}
        \big\|\delta_{\theta_0} P_{\gamma}^n - \pi_{\gamma}\big\|_{\tv} \leq M \rho^n \big(1+\|\theta_0 - \theta^\star\|^2\big),\label{eq:tv_geo_convergence}
    \end{equation}    
    where $\delta_{\theta_0}$ is the Dirac measure located at $\theta_0.$
\end{theorem}
The proof of Theorem~\ref{thm:ergodicity} is given in Section~\ref{sec:proof_thm_ergodicity} and is based on~\cite[Theorem 15.0.1]{meyn2012markov} and a \emph{drift} condition in terms of a Lyapunov function. Assumptions~\ref{asm:smooth_strongconvex},~\ref{asm:gradient}~\ref{asm:gradient_centered} and~\ref{asm:gradient}~\ref{asm:gradient_regular} are standard convergence conditions for convex stochastic optimization ~\cite{dieuleveut2020bridging, bach2014adaptivity, rakhlin2011making}. Assumption~\ref{asm:gradient}~\ref{asm:gradient_dens_minor} is additionally needed to guarantee a Markov chain exploration property necessary for the convergence proof. A similar method was used in~\cite{yu2021analysis} to establish the convergence of SGD for non-convex, non-smooth objectives with quadratic growth. However, the focus in~\cite{yu2021analysis} is on proving a central limit theorem for the Markov sequence $(\theta_t)_{t\geq 0}$ and bounding the invariant distribution's bias under certain regularity conditions. In contrast, we aim to prove concentration properties for the SGD iterates and the invariant distribution $\pi_{\gamma}.$ This will allow us to obtain non-asymptotic deviation bounds on the estimation of the optimum $\theta^\star.$ In particular, convergence in TV distance~\eqref{eq:tv_geo_convergence} will ease this task for a Polyak-Ruppert average (see Section~\ref{sec:polyak_ruppert} below).

Note that condition~\eqref{eq:thm1_stepsize} imposes a conservative bound in $O(\mu/L_{\sigma}^2)$ on the step-size $\gamma.$ This condition may be restrictive compared to related works on stochastic optimization~\cite{dieuleveut2020bridging, bach2013non, needell2014stochastic, pillaud2018exponential}, especially in high-dimensional settings where $\mu$ is typically close to zero. However, by considering a linear regression example, one can show that there are situations where such scaling is actually necessary. Moreover, in such cases, the step-size assumptions used in the previously cited works also align with condition~\eqref{eq:thm1_stepsize}. See the discussion following Proposition~\ref{prop:wasserstein_convergence} below and Section~\ref{sec:stepsize_compare} for further details.

Note also that the focus of Theorem~\ref{thm:ergodicity} is to provide a convergence result although little can be said about the speed of this convergence for now. This is controlled by the contraction factor $\rho$ and the constant $M$ which mainly depend on the distribution of the noise samples $\varepsilon_{\zeta}(\theta),$ the step-size $\gamma$ and the initial state $\theta_0.$
Although the statement claims that $\rho < 1,$ the latter may be arbitrarily close to $1,$ especially for some degenerate noise distributions. This can happen, for instance, if the noise distribution is mostly concentrated on a few atoms causing the Markov chain to have poor mixing properties. A similar effect can be expected if there are no atoms but the distribution is highly concentrated around a few isolated points. 

The issue of providing a precise quantification of $\rho$ in Theorem~\ref{thm:ergodicity} is closely related to renewal theory and can be traced back to Kendall's theorem~\cite{kendall1959unitary} and more generally concerns Markov chains satisfying a drift property. A rich literature investigates the convergence speed of Markov chains with geometric drift~\cite{rosenthal1995convergence,roberts1999bounds,rosenthal2002quantitative,scott1996explicit,douc2004quantitative,meyn1994computable}. Near optimal results were obtained for stochastically ordered Markov Processes~\cite{lund1996computable, roberts2000rates, lund1996geometric, gaudio2019exponential}. Other examples especially amenable to such analysis include reversible Markov chains~\cite{diaconis1991geometric, diaconis1993comparison, jerison2019quantitative, roberts2001geometric} and chains satisfying special assumptions on their renewal distribution~\cite{berenhaut2001geometric, lund2006monotonicity, kijima1997markov}. However, the SGD Markov chain does not satisfy such criteria. For instance, reversibility does not hold since the iterates are driven towards the optimum and stochastic order fails because of the noise. An estimation of $\rho$ may be obtained using results based on renewal theory and Kendall's theorem~\cite{kendall1959unitary, baxendale2005renewal, bednorz2013kendall}. 
However, the resulting estimations are notoriously pessimistic~\cite{qin2021limitations, qin2022wasserstein}. Indeed, let $\alpha := 1-\gamma\mu$ be the contraction factor in the absence of gradient noise (i.e. simple gradient descent) so that we have
\begin{equation*}
    \|\theta_t - \gamma \nabla \cL(\theta_t) - \theta^\star\| \leq \alpha \|\theta_t - \theta^\star\|\quad \text{for all }\quad t\geq 0,
\end{equation*}
then the worst-case bound on $\rho$ obtained thanks to~\cite{baxendale2005renewal, bednorz2013kendall} is such that $1 - \rho\:\: \:{\lesssim}\:\:\: (\gamma \mu)^3$ which is far worse than the intuitive expectation that $\rho \approx \alpha$ i.e.~that TV convergence and optimization convergence would go hand in hand. Moreover, such an estimation would suffer from exponentially small minorization constants depending on the space dimension (see for instance~\cite{qin2021limitations, qin2022wasserstein, rajaratnam2015mcmc} for more detailed discussions of this phenomenon). It is unclear whether the previous estimation can be improved through a careful study of the renewal properties of the SGD Markov chain or if a different approach such as the study of the spectral properties of the transition kernel $P_\gamma$ is more appropriate. Nevertheless, we will see in Section~\ref{sec:wasserstein_convergence} below that $\rho$ can be estimated close to $\alpha$ under additional conditions by leveraging Wasserstein convergence.

\section{Iteration and Invariant Distribution Properties}
\label{sec:invariant_properties}
This section states that concentration properties of the random gradient samples used in~\eqref{eq:sgd_iteration} \emph{transfer} to the SGD iterates and the invariant distribution $\pi_\gamma$ they converge to as stated in Theorem~\ref{thm:ergodicity}. We begin with a basic statement which holds without additional assumptions and bounds the variance of $\pi_\gamma$ and its bias w.r.t.~the true optimum $\theta^\star.$
\begin{proposition}
\label{prop:invariant_properties}
    In the setting of Theorem~\ref{thm:ergodicity}\textup, let $\pi_\gamma$ be the invariant measure and $\bar{\theta}_\gamma := \E_{\theta\sim\pi_\gamma}[\theta]$ be its expectation. We have the following properties\textup:
    \begin{enumerate}[label=\textup(\alph*\textup)]
        \item \label{prop:invariant_properties_a} $\E_{\theta\sim\pi_\gamma} \big[\nabla\cL(\theta)\big] = 0.$ In particular\textup, if the gradient $\nabla\cL$ is linear \textup(see Assumption~\ref{asm:linear_grad} below\textup) then we have $\bar{\theta}_\gamma = \theta^\star$.
        \item \label{prop:invariant_properties_bprime} Denoting $\alpha_\sigma := (1-\gamma\mu)^2 + \gamma^2L_\sigma^2,$ the SGD iterates satisfy for all $t\geq 0,$
        \begin{equation*}
            \Var(\theta_t) \leq \E\|\theta_t - \theta^\star\|^2 \leq \alpha_\sigma^t\E\|\theta_{0} - \theta^\star\|^2 + \gamma^2\sigma^2\frac{1 - \alpha_\sigma^{t}}{1 - \alpha_\sigma}\nonumber,
        \end{equation*}
        \item \label{prop:invariant_properties_b} The variance and the bias of $\pi_\gamma$ are bounded as follows\textup: \begin{equation*}
            \Var_{\pi_\gamma}(\theta) \vee \big\|\bar{\theta}_\gamma - \theta^\star \big\|^2 \leq \E_{\theta\sim\pi_\gamma}\|\theta - \theta^\star\|^2\leq \frac{\gamma \sigma^2}{2\mu - \gamma(\mu^2+ L^2_{\sigma})}.
        \end{equation*} 
    \end{enumerate}
\end{proposition}
Proposition~\ref{prop:invariant_properties} is proven in Section~\ref{sec:proof_invariant_properties} and expresses well-known properties of the SGD iterates and the invariant distribution which we state here for completeness. A parallel to property~\ref{prop:invariant_properties_a} can be found in~\cite{bach2013non} and property~\ref{prop:invariant_properties_bprime} corresponds to~\cite[Lemma 10]{dieuleveut2020bridging}. Finally, property~\ref{prop:invariant_properties_b} reflects the known-fact that the iterates have an asymptotic magnitude of $\sqrt{\gamma}$~\cite{pflug1986stochastic, nedic2001convergence}. Beyond this result, a characterization of the \textit{covariance} of $\pi_\gamma$ in the linear case is given in~\cite[Proposition 3]{dieuleveut2020bridging}. Proofs of such results crucially rely on the unbiasedness of the gradient samples, the invariance of $\pi_\gamma,$ and the contraction property of the optimization iteration.

Before stating further results, we need to define sub-Gaussian and sub-exponential concentration properties for real random variables. Among the many known equivalent characterizations, we only introduce those required for the proofs of our results, see~\cite[Chapter 2]{vershynin2018high} for other characterizations.

\begin{definition}
    \label{def:subgauss}
    Let $X$ be a real random variable. We say that $X$ is $K$-sub-Gaussian for some $K > 0$ whenever
    \begin{enumerate}[label=\textup(\roman*\textup)]
        \item     we have 
        \begin{equation}
        \E\exp(\lambda^2 X^2) \leq \exp(\lambda^2K^2)\quad \text{for} \quad 0\leq \lambda \leq 1/K,
    \end{equation}
    which we will denote $X \in \widetilde{\Psi}_2(K),$ 
    \item or we have
    \begin{equation}
        \E\exp(\lambda X) \leq \exp(\lambda^2 K^2) \quad \text{ for all} \quad\lambda\in \R,
    \end{equation}
    which we will denote $X \in \Psi_2(K)$. 
    \end{enumerate}
\end{definition}
\begin{definition}
    \label{def:subexp}
    Let $X$ be a real random variable. We say that $X$ is sub-exponential if one of the two following conditions holds.
    \begin{enumerate}[label=\textup(\roman*\textup)]
    \item \label{def:subexp_through_pmoments} There exists $K_1 > 0$ such that
        \begin{equation}
            \|X\|_{L_p} \leq K_1p\quad \text{for all} \quad p\geq 1,
        \end{equation}
        in which case we write $X \in \widetilde{\Psi}_1(K_1).$
    \item \label{def:subexp_through_mgf} There exists $K_2$ such that
    \begin{equation}
        \E\exp(\lambda X) \leq \exp(\lambda^2 K_2^2) \quad \text{ for all}\quad |\lambda| \leq 1/K_2,
    \end{equation}
        in which case we write $X \in \Psi_1(K_2).$
    \end{enumerate}
\end{definition}
Note that, for a centered variable $X,$ the first characterization of Definition~\ref{def:subgauss} implies the second one with the same constant $K$ (see the proof of~\cite[Proposition 2.5.2]{vershynin2018high}). 
Analogously, for centered $X,$ we have that $X \in \Psi_1(K)$ entails $X \in \widetilde{\Psi}_1(2eK)$ and $X \in \widetilde{\Psi}_1(K)$ entails $X \in \Psi_1(2eK).$
Namely, the two characterizations of Definition~\ref{def:subexp} imply each other but with worse constants (see~\cite[Proposition 2.7.1]{vershynin2018high}). Since the constants in Definition~\ref{def:subexp} degrade by switching between the two properties, we will specify which property is meant in each subsequent statement in order to minimize these degradations.

We first formulate a sub-Gaussian/sub-exponential concentration assumption on the norms of the gradient errors.
\begin{assumption}\label{asm:grad_concentration}
    There exists $\overline{K} < +\infty$ such that one of the following holds\textup:
    \begin{enumerate}[label=\textup(\alph*\textup)]
        \item \label{asm:grad_subgauss} For all $\theta\in \Theta,$ the gradient error satisfies $\|\varepsilon_{\zeta}(\theta)\| \in \widetilde{\Psi}_2(\overline{K}).$
        \item \label{asm:grad_subexp} For all $\theta\in \Theta,$ the gradient error satisfies $\|\varepsilon_{\zeta}(\theta)\| \in \widetilde{\Psi}_1(\overline{K}).$
    \end{enumerate}
\end{assumption}
The sub-Gaussian concentration condition~\ref{asm:grad_subgauss} is verified, for instance, for logistic regression with Gaussian data. The sub-exponential condition~\ref{asm:grad_subexp} is more lenient and holds in the previous case for sub-exponential data or in linear regression with sub-Gaussian data by restricting the parameter $\theta$ to a bounded set.

In combination with Assumptions~\ref{asm:smooth_strongconvex} and~\ref{asm:gradient}, the previous pair of conditions imply the following concentration properties for the SGD iterates $(\theta_t)_{t\geq 0}$ and $\pi_\gamma.$
\begin{proposition}
    \label{prop:invariant_concentration}
    In the setting of Theorem~\ref{thm:ergodicity}\textup, let the SGD iteration~\eqref{eq:sgd_iteration1} be run starting from a deterministic $\theta_0,$ assume $\gamma\leq (2\mu)^{-1}$ and define for $t\geq 0,$
    \begin{equation*}
        \overline{K}_{\pi}(t) = \Big((1-\gamma\mu)^{t}\|\theta_0 - \theta^\star\|^2 + \frac{4}{3}\big(1-(1-\gamma\mu)^{t}\big)\gamma\overline{K}^2/\mu\Big)^{1/2}.
    \end{equation*}
    We have the following properties\textup:
    \begin{enumerate}[label=\textup(\alph*\textup)]
        \item \label{prop:invariant_properties_c} If Assumption~\ref{asm:grad_concentration}~\ref{asm:grad_subgauss} holds then for all $t,$ we have $\|\theta_t - \theta^\star\| \in\widetilde{\Psi}_2\big(\overline{K}_{\pi}(t)\big).$ Moreover, the invariant distribution satisfies that for $\theta \sim \pi_\gamma,$ we have $\|\theta - \theta^\star\| \in \widetilde{\Psi}_2 \big(2\overline{K}\sqrt{\gamma/\mu}\big).$
        \item \label{prop:invariant_properties_d} If Assumption~\ref{asm:grad_concentration}~\ref{asm:grad_subexp} holds then for all $t,$ we have $\|\theta_t - \theta^\star\| \in\widetilde{\Psi}_1\big(\overline{K}_{\pi}(t)\big).$ Moreover, for $\theta \sim \pi_\gamma,$ we have $\|\theta - \theta^\star\| \in \widetilde{\Psi}_1 \big(2\overline{K}\sqrt{\gamma/\mu}\big).$
    \end{enumerate}
\end{proposition}
The proof of Proposition~\ref{prop:invariant_concentration} is given in Section~\ref{sec:proof_invariant_concentration}. The most important aspect of this statement is that the sub-Gaussian/sub-exponential properties hold with a constant \emph{depending} on the step-size $\gamma.$ Indeed, it is fairly easy to show, for example, that $\theta\sim\pi_\gamma$ satisfies $\|\theta - \theta^\star\|\in \widetilde{\Psi}_2(\overline{K}/\mu)$ under Assumption~\ref{asm:grad_concentration}~\ref{asm:grad_subgauss}. However, this constant is too pessimistic since it fails to take advantage of a small step-size which leads to stronger concentration. The improved constants above are obtained by carefully leveraging the centered property of the gradient error (see Assumption~\ref{asm:gradient}~\ref{asm:gradient_centered}). Previous characterizations of $\pi_\gamma$ obtained bounds on the bias w.r.t.~$\theta^\star$~\cite{yu2021analysis} and moment expansions of $\widehat{\theta} - \theta^\star$ for $\widehat{\theta}\sim \pi_\gamma$ or $\widehat{\theta}$ equal to a Polyak-Ruppert average~\cite{dieuleveut2020bridging}, however, the sub-exponential and sub-Gaussian characterizations of Proposition~\ref{prop:invariant_concentration} appear to be new. Note that the sub-Gaussian property of $\pi_\gamma$ in Proposition~\ref{prop:invariant_concentration} can also be obtained if Assumption~\ref{asm:smooth_strongconvex} is replaced by the dissipativity condition~\cite{yu2021analysis,raginsky2017non,xu2018global} on the objective and a linear gradient growth constraint. This is detailed in Section~\ref{sec:weaker_subgauss} and allows to consider non-convex, non-smooth objectives but with quadratic growth. However, a global minimizer $\theta^\star$ may not exist in this case and the associated sub-Gaussian constant does not vanish for a small step-size.

Note that the constant $\overline{K}$ obtained from Assumption~\ref{asm:grad_concentration} and appearing in Proposition~\ref{prop:invariant_concentration} may hide a dependence on the dimension in $\sqrt{d}$ since it is related to the Euclidean norm $\|\varepsilon_{\zeta}(\theta)\|$ of the gradient noise. In this respect, Proposition~\ref{prop:invariant_concentration} resembles the results of~\cite{maurer2021concentration} where similar hypotheses to Assumption~\ref{asm:grad_concentration} were used entailing the same dimension dependence.~In order to avoid this shortcoming, one needs a stronger assumption which is stated along with the associated results further below. Note also that Assumption~\ref{asm:grad_concentration} considerably strengthens Assumption~\ref{asm:gradient}~\ref{asm:gradient_regular} by requiring that $\|\varepsilon_{\zeta}(\theta)\|$ admits a finite exponential moment. 
In addition, the involved bound is uniform w.r.t. $\theta.$ However, under a non-uniform finite $p$-moment assumption, it is still possible to show the following.
\begin{lemma}
    \label{lem:pfinite_moments_nonunif_subexp}
    Grant Assumptions~\ref{asm:smooth_strongconvex} and~\ref{asm:gradient} and assume that there is $K, \underline{K}> 0$ and $p\in \N^*$ such that, conditionally on any $\theta,$ we have
    \begin{equation}\label{eq:Lp_condition}
        \big\|\|\varepsilon_{\zeta}(\theta)\|\big\|_{L_p} \leq K\|\theta - \theta^\star\| + \underline{K},
    \end{equation}
    then for step-size $\gamma$ as in Theorem~\ref{thm:ergodicity} and satisfying the additional condition $\gamma \leq \frac{\mu}{j(\mu^2 + K^2)}$ with $j\leq p,$ the Markov chain $(\theta_t)_{t\geq 0}$ converges to an invariant distribution $\pi_\gamma$ with at least $j$ finite moments.
\end{lemma}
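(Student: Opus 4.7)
The plan is to establish a Foster-Lyapunov drift condition for the Lyapunov function $V_j(\theta) := \|\theta - \theta^\star\|^j$, iterate it to bound $\sup_t \E[V_j(\theta_t)]$, and conclude via Theorem~\ref{thm:ergodicity} and Fatou's lemma that $\E_{\pi_\beta}[V_j] < \infty$.

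The central computation is to bound $\E[V_j(\theta_{t+1})\mid\theta_t]$. Writing $Z := \theta_t - \beta\nabla\cL(\theta_t) - \theta^\star$, one has $\theta_{t+1} - \theta^\star = Z - \beta\varepsilon_{\zeta_t}(\theta_t)$, and Assumption~\ref{asm:smooth_strongconvex} gives $\|Z\| \leq (1-\beta\mu)\|\theta_t - \theta^\star\|$ for the step-size regime considered. For $j \geq 2$ I would apply a second-order Taylor expansion of $x\mapsto \|x\|^j$ to derive the pointwise estimate
\begin{equation*}
    \|Z - \beta\varepsilon\|^j \leq \|Z\|^j - j\beta\|Z\|^{j-2}\langle Z, \varepsilon\rangle + C_j\bigl(\|Z\|^{j-2}\beta^2\|\varepsilon\|^2 + \beta^j\|\varepsilon\|^j\bigr),
\end{equation*}
with $C_j$ depending only on $j$. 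Taking conditional expectation kills the linear term by unbiasedness of $\varepsilon_{\zeta_t}(\theta_t)$. The hypothesis~\eqref{eq:Lp_condition} combined with the inclusion $L_p \hookrightarrow L_j$ for $j\leq p$ yields $\E[\|\varepsilon_{\zeta_t}(\theta_t)\|^j\mid\theta_t] \leq (K\|\theta_t-\theta^\star\|+\underline{K})^j$, and Assumption~\ref{asm:gradient} controls the second moment. Substituting $\|Z\|\leq (1-\beta\mu)\|\theta_t-\theta^\star\|$ and expanding $(K\|\theta_t-\theta^\star\|+\underline{K})^j \leq 2^{j-1}(K^j\|\theta_t-\theta^\star\|^j+\underline{K}^j)$ produces a drift of the form
\begin{equation*}
    \E[V_j(\theta_{t+1})\mid\theta_t] \leq \alpha_j V_j(\theta_t) + B_j\|\theta_t-\theta^\star\|^{j-2} + D_j,
\end{equation*}
with $\alpha_j \leq (1-\beta\mu)^j + C_j\beta^2 L_\sigma + C_j' \beta^j K^j$. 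Using the expansion $(1-\beta\mu)^j \leq 1 - j\beta\mu + \binom{j}{2}\beta^2\mu^2$ (valid for integer $j\geq 2$) together with the bound $L_\sigma \leq 2K^2$ implied by~\eqref{eq:Lp_condition}, the step-size condition $\beta \leq \mu/(j(\mu^2+K^2))$ makes $\alpha_j \leq 1 - c\beta$ for some $c > 0$. The residual lower-order term is absorbed via Young's inequality $\|\theta_t-\theta^\star\|^{j-2} \leq \eta V_j(\theta_t) + C_{j,\eta}$ with $\eta$ chosen small enough to preserve the contraction, yielding $\rho := \alpha_j + \eta B_j < 1$.

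Iterating $\E[V_j(\theta_{t+1})\mid\theta_t] \leq \rho V_j(\theta_t) + D$ gives $\sup_t \E[V_j(\theta_t)] \leq V_j(\theta_0) + D/(1-\rho) < \infty$ for any starting point $\theta_0$. Theorem~\ref{thm:ergodicity} provides total-variation convergence $\theta_t \to \pi_\beta$, hence convergence in distribution, and Fatou's lemma applied to the non-negative lower-semicontinuous function $V_j$ yields $\E_{\pi_\beta}[V_j] \leq \liminf_t \E[V_j(\theta_t)] < \infty$, which is the claim. The case $j=1$ is handled by applying the $j=2$ instance of the drift together with Jensen's inequality $\E_{\pi_\beta}[\|\theta-\theta^\star\|] \leq (\E_{\pi_\beta}[\|\theta-\theta^\star\|^2])^{1/2}$, whose contraction criterion reduces exactly to $\beta \leq \mu/(\mu^2+K^2)$.

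The main obstacle will be tracking the constants $C_j$ arising from the Taylor remainder carefully enough to match the announced step-size threshold, since the bound $(\|x\|+\|y\|)^{j-2} \leq 2^{j-3}(\|x\|^{j-2}+\|y\|^{j-2})$ inflates the remainder by a factor that grows with $j$. A secondary verification is that the step-size regime of Theorem~\ref{thm:ergodicity} is compatible with the additional constraint $\beta \leq \mu/(j(\mu^2+K^2))$, which follows from $L_\sigma \leq 2K^2$ and $\sigma^2 \leq 2\underline{K}^2$, both consequences of~\eqref{eq:Lp_condition}.
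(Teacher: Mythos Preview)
Your approach is sound and would establish the qualitative conclusion, but it differs from the paper's proof in two structural respects and---as you correctly anticipate---cannot recover the stated step-size threshold $\beta \le \mu/(j(\mu^2+K^2))$.

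First, the paper does not run a drift on the iterates followed by Fatou. It invokes Theorem~\ref{thm:ergodicity} to obtain $\pi_\beta$ at the outset, takes $\theta\sim\pi_\beta$, and uses stationarity $\theta-\beta G(\theta)\sim\pi_\beta$ to write a closed equation for $M_{2j}^{2j}:=\E_{\pi_\beta}\|\theta-\theta^\star\|^{2j}$ directly. This removes the need for any limiting argument.

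Second, and this is the real difference, the paper avoids the Taylor remainder entirely by expanding the \emph{squared} norm exactly,
\[
\|\theta_{+}-\theta^\star\|^{2j}=\bigl(\|Z\|^2-2\beta\langle Z,\varepsilon\rangle+\beta^2\|\varepsilon\|^2\bigr)^{j},
\]
via the multinomial theorem, and then bounding $|2\langle Z,\varepsilon\rangle|\le 2\|Z\|\,\|\varepsilon\|$. A combinatorial identity (derived in the proof of Proposition~\ref{prop:invariant_concentration}) collapses the resulting trinomial sums to ordinary binomials $\binom{2j}{l}$ with \emph{no} residual $2^{j}$ factors, yielding
\[
\bigl(1-(1-\beta\mu)^{2j}\bigr)M_{2j}^{2j}\le \E\sum_{l=2}^{2j}\binom{2j}{l}\bigl((1-\beta\mu)\|\theta-\theta^\star\|\bigr)^{2j-l}(\beta\|\varepsilon(\theta)\|)^{l}.
\]
Substituting $\|\varepsilon(\theta)\|\le K\|\theta-\theta^\star\|+\underline{K}$ and isolating the $M_{2j}^{2j}$ contribution on the right, the net coefficient on the left becomes $1-(1-\beta(\mu-K))^{2j}+2j\beta K(1-\beta\mu)^{2j-1}$, which a short second-order estimate shows is at least $2j\beta\mu - j(2j-1)\beta^2(\mu^2+K^2)$, hence positive precisely under the announced condition. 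An induction on the moment order (assuming $M_i<\infty$ for $i<2j$) then closes the argument.

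Your Taylor-remainder bound necessarily carries a factor $C_j\sim j(j-1)2^{j-4}$ into the contraction coefficient $\alpha_j$, so the threshold you would obtain degrades exponentially in $j$; the exact expansion on the squared norm is what buys the sharp constant. A minor aside: your inference $L_\sigma\le 2K^2$ from~\eqref{eq:Lp_condition} requires $p\ge 2$, but this is moot since Assumption~\ref{asm:gradient} already supplies $L_\sigma$ independently.
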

Lemma~\ref{lem:pfinite_moments_nonunif_subexp} is proved in Section~\ref{sec:proof_lem_pfinite_moments_nonunif_subexp} and shows that $\pi_\gamma$ can have as many finite moments as the gradient, provided that the step-size is small enough. This implies that even weaker concentration properties transfer to the invariant distribution. Note that a non-uniform sub-exponential (resp.~sub-Gaussian) assumption would correspond to condition~\eqref{eq:Lp_condition} with $K$ replaced by $Kp$ (resp.~$K\sqrt{p}$) in which case the condition on $\gamma$ becomes at least $\gamma \leq \cO\big(\mu/(jK^2p) \big)$. This suggests that, for arbitrary $p$, $\pi_\gamma\in L_p$ may only hold in the limit $\gamma \to 0$. The results and proof method of Lemma~\ref{lem:pfinite_moments_nonunif_subexp} and Proposition~\ref{prop:invariant_properties}~\ref{prop:invariant_properties_d} share many aspects with~\cite[Lemma 13]{dieuleveut2020bridging}, we provide a detailed comparison in Section~\ref{sec:compare_with_18}. Lemma~\ref{lem:pfinite_moments_nonunif_subexp} also shares a similarity with~\cite[Theorem 2]{bach2013non} which states $L_p$ convergence of the excess risk $\cL(\theta)-\cL(\theta^\star)$ for a step-size satisfying $\gamma\leq O(1/(p\kappa R^2))$ where $R^2$ and $\kappa$ respectively correspond to a uniform bound on the data samples and the distribution's kurtosis after projection in any direction in space. Although~\cite[Theorem 2]{bach2013non} provides an explicit bound, the assumption of almost surely bounded data is strong and is combined with a uniform $L_p$ condition on the gradient error whereas we allow the upperbound to depend on $\|\theta - \theta^\star\|$ in Inequality~\eqref{eq:Lp_condition}.

We now introduce a stronger analog to Assumption~\ref{asm:grad_concentration}, which will enable later the proof of dimension-free deviation bounds.
\begin{assumption}\label{asm:grad_special_concentration}
    There is $ K < +\infty$ such that one of the following holds\textup:
    \begin{enumerate}[label=\textup(\alph*\textup)]
        \item \label{asm:grad_special_subgauss} For all $\theta \in \Theta$ and all $f \in \Lip(\R^d),$ we have $f(G(\theta, \zeta)) - \E f(G(\theta, \zeta)) \in \Psi_2(K).$
        \item \label{asm:grad_special_subexp} For all $\theta \in \Theta$ and all $f \in \Lip(\R^d),$ we have $f(G(\theta, \zeta))-\E f(G(\theta, \zeta)) \in \Psi_1(K).$
    \end{enumerate}
\end{assumption}
As announced, the subtle difference with Assumption~\ref{asm:grad_concentration} is that the involved constants are, a priori, independent from the dimension. The so-called Bobkov-Götze theorem~\cite{bobkov1999exponential} states that Assumption~\ref{asm:grad_special_concentration}~\ref{asm:grad_special_subgauss} is equivalent to the fact that $\nu_{\theta} :=\mathcal{D}(G(\theta, \zeta))$ satisfies the following \emph{Transportation-Information} inequality
\begin{equation}\label{eq:transport}
    \cW_1\big(\nu, \nu_{\theta}\big) \leq \sqrt{2K^2 D(\nu \| \nu_{\theta})} \quad \text{ for all } \quad \nu \in \mathcal{M}_1(\R^d),
\end{equation}
where $\cW_1$ and $D(\cdot\|\cdot)$ are the Wasserstein-$1$ distance~\cite{villani2009optimal} (see definition below) and the Kullback-Leibler divergence~\cite{kullback1951information} between probability measures respectively. An analogous equivalence may be established for the sub-exponential case of Assumption~\ref{asm:grad_special_concentration}~\ref{asm:grad_special_subexp} (for instance, by adapting the proof given in~\cite[Theorem 4.8]{van2014probability}).

By restricting the functions $f$ in Assumption~\ref{asm:grad_special_concentration} to be linear, we recover the assumption that the vector $G(\theta, \zeta)$ is sub-Gaussian/sub-exponential. An interesting question is then whether this weaker property implies Assumption~\ref{asm:grad_special_concentration} with a dimension independent constant. To the best of our knowledge of the current literature, this is only known to hold for Gaussian vectors (see for instance~\cite[Theorem 3.25]{van2014probability}). In fact, Talagrand's well-known transport inequality states that Gaussian vectors satisfy Inequality~\eqref{eq:transport} for the $\cW_2$ distance rather than $\cW_1$, which is an even stronger property. Since Inequality~\eqref{eq:transport} involves two very different forms of distance between probability measures, a direct intuitive understanding of its meaning is elusive. However, the above inequality is related to a host of properties used to describe the concentration of measure phenomenon including Poincaré inequalities~\cite{bobkov1997poincare, gozlan2009characterization}, logarithmic Sobolev inequalities~\cite{bobkov1999exponential, ledoux1997talagrand} and modified logarithmic Sobolev inequalities~\cite{gentil2005modified,barthe2008modified} to mention only a few. A broad and comprehensive survey on transport inequalities and their consequences on concentration and deviation inequalities is available in~\cite{gozlan2010transport}.

Using the previous assumption, we can show that the iterates $(\theta_t)_{t\geq 0}$ and the invariant distribution inherit similar properties.
\begin{proposition}
    \label{prop:invariant_special_concentration}
    In the setting of Theorem~\ref{thm:ergodicity} {with step-size condition~(\ref{eq:thm1_stepsize})}\textup, let the SGD iteration~\eqref{eq:sgd_iteration1} be run starting from a deterministic $\theta_0$ and let $\pi_\gamma$ be the invariant limit distribution. Define for $t\geq 0,$
    \begin{equation*}
        K_{\pi}(t) = \gamma K \sqrt{\frac{ 1 - (1-\gamma\mu)^{2t}}{1 - (1-\gamma\mu)^2}}.
    \end{equation*}
    We have the following properties\textup:
    \begin{enumerate}[label=\textup(\alph*\textup)]
        \item \label{prop:invariant_properties_e} If Assumption~\ref{asm:grad_special_concentration}~\ref{asm:grad_special_subgauss} holds then $f(\theta_t ) - \E f(\theta_t ) \in \Psi_2\big(K_{\pi}(t)\big)$ for all $f\in \Lip(\R^d).$
        Moreover, for $\theta\sim\pi_\gamma$ we have $f(\theta ) - \E f(\theta ) \in \Psi_2\big(K\sqrt{\gamma/\mu}\big)$ for all $f\in \Lip(\R^d).$
        \item \label{prop:invariant_properties_f} If Assumption~\ref{asm:grad_special_concentration}~\ref{asm:grad_special_subexp} holds then $f(\theta_t ) - \E f(\theta_t ) \in \Psi_1\big(K_{\pi}(t)\big)$ for all $f\in \Lip(\R^d).$
        Moreover $\theta\sim\pi_\gamma$ satisfies $f(\theta ) - \E f(\theta ) \in \Psi_1\big(K\sqrt{\gamma/\mu}\big)$ for all $f\in \Lip(\R^d).$
    \end{enumerate}
\end{proposition}
Proposition~\ref{prop:invariant_special_concentration} is proven in Section~\ref{sec:proof_invariant_special_concentration} and will be used in Section~\ref{sec:deviation_bounds} to derive dimension-free deviation bounds. Note that the $\Psi_1$/$\Psi_2$ constants of $\pi_\gamma$ in Proposition~\ref{prop:invariant_special_concentration} also display the crucial $\sqrt{\gamma/\mu}$ dependence as in Proposition~\ref{prop:invariant_concentration} and without further degradation. Before proceeding to the statement of high confidence bounds for SGD estimators, we explore another convergence mode of the SGD Markov chain.

\section{Wasserstein Convergence}\label{sec:wasserstein_convergence}
This section complements Theorem~\ref{thm:ergodicity} with an additional convergence result w.r.t. the Wasserstein metric. We recall that, for $p \geq 1$ and two distributions $\varpi, \nu  \in \mathcal{M}_1(\R^d),$ the Wasserstein-$p$ distance is defined by
\begin{equation*}
    \cW_p^p(\varpi, \nu) = \inf_{\xi \in \Pi(\varpi, \nu)} \E_{X, Y \sim \xi} \|X - Y\|^p,
\end{equation*}
where $\Pi(\varpi, \nu)$ is the set of all couplings of $\varpi$ and $\nu$ i.e. distributions over $\R^d \times \R^d$ with first and second marginals equal to $\varpi$ and $\nu$ respectively.

In order to show that the SGD iteration converges w.r.t.~the Wasserstein-$2$ distance, we require the following assumption.
\begin{assumption}
    \label{asm:gradient_wasserstein}
    There is $L_{\cW}< +\infty$ such that for all $\theta, \theta',$ the gradient noise distributions $\mathcal{D}(\varepsilon_{\zeta}(\theta))$ and $\mathcal{D}(\varepsilon_{\zeta}(\theta'))$ at $\theta$ and $\theta'$ satisfy
    \begin{equation*}
        \cW_2\big(\mathcal{D}(\varepsilon_{\zeta}(\theta)), \mathcal{D}(\varepsilon_{\zeta}(\theta'))\big) \leq L_{\cW} \|\theta - \theta' \|.
    \end{equation*}
\end{assumption}
In words, we assume that the change in the gradient noise distribution measured with the $\cW_2$ metric is controlled by the change in the parameter $\theta.$ This assumption is discussed below and allows to obtain the following result.
\begin{proposition}\label{prop:wasserstein_convergence}
Grant Assumptions~\ref{asm:smooth_strongconvex},~\ref{asm:gradient}~\ref{asm:gradient_centered} and~\ref{asm:gradient_wasserstein}. Let $\nu_1, \nu_2 \in \mathcal{P}_2(\R^d)$ be two initial distributions and let $\gamma$ be a step-size such that 
\begin{equation*}
    \gamma < \frac{2\mu}{\mu^2 + (\mu L\vee L^2_{\cW}) },
\end{equation*}
then we have the contraction
\begin{equation*}
    \cW_2^2(\nu_1 P_\gamma, \nu_2 P_\gamma) \leq \big((1-\gamma\mu)^2 + \gamma^2 L^2_{\cW} \big)\cW_2^2(\nu_1, \nu_2).
\end{equation*}
Consequently\textup, for such a $\gamma$ and any initial $\theta_0 \sim \nu \in \mathcal{P}_2(\R^d),$ the Markov chain generated by iteration~\eqref{eq:sgd_iteration} converges to a unique stationary measure $\pi_{\gamma}$ in $\cW_2$ distance.
\end{proposition}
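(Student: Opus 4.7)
The approach is a synchronous-coupling argument followed by Banach's fixed-point theorem on the complete metric space $(\mathcal{P}_2(\R^d), \cW_2)$. I first fix an optimal $\cW_2$-coupling $\xi$ of $(\nu_1, \nu_2)$, so that $(\theta, \theta') \sim \xi$ satisfies $\E\|\theta - \theta'\|^2 = \cW_2^2(\nu_1, \nu_2)$. Conditionally on each realization of $(\theta, \theta')$, I sample the noise pair $(\varepsilon, \varepsilon')$ from an optimal $\cW_2$-coupling of $\mathcal{D}(\varepsilon_\zeta(\theta))$ and $\mathcal{D}(\varepsilon_\zeta(\theta'))$; a measurable selection of such couplings exists by standard results from optimal transport. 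By Assumption~\ref{asm:gradient_wasserstein}, this yields $\E[\|\varepsilon - \varepsilon'\|^2 \mid \theta, \theta'] \leq L_{\cW}\|\theta - \theta'\|^2$. The coupled iterates $\theta_+ = \theta - \beta(\nabla\cL(\theta) + \varepsilon)$ and $\theta'_+ = \theta' - \beta(\nabla\cL(\theta') + \varepsilon')$ have respective marginals $\nu_1 P_\beta$ and $\nu_2 P_\beta$, so $\cW_2^2(\nu_1 P_\beta, \nu_2 P_\beta) \leq \E\|\theta_+ - \theta'_+\|^2$.

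Writing $\theta_+ - \theta'_+ = [T(\theta) - T(\theta')] - \beta(\varepsilon - \varepsilon')$ with $T(x) := x - \beta\nabla\cL(x)$, the expansion
\begin{equation*}
\|\theta_+ - \theta'_+\|^2 = \|T(\theta) - T(\theta')\|^2 - 2\beta\langle T(\theta) - T(\theta'), \varepsilon - \varepsilon'\rangle + \beta^2\|\varepsilon - \varepsilon'\|^2
\end{equation*}
has a vanishing cross term in conditional expectation given $(\theta, \theta')$, since the marginals of $\varepsilon$ and $\varepsilon'$ are centered by Assumption~\ref{asm:gradient}. For the first term, I invoke the classical contraction of the gradient-step operator: since $\cL$ is $\mu$-strongly convex and $L$-smooth, $T$ is $(1-\beta\mu)$-Lipschitz whenever $\beta \leq 2/(\mu+L)$ (this is standard, obtained e.g.\ by mean-value expansion with a Hessian integral whose eigenvalues lie in $[\mu, L]$). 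Combining both estimates,
\begin{equation*}
\E\bigl[\|\theta_+ - \theta'_+\|^2 \,\big|\, \theta, \theta'\bigr] \leq \bigl[(1-\beta\mu)^2 + \beta^2 L_\cW\bigr]\|\theta - \theta'\|^2,
\end{equation*}
and integrating against $\xi$ produces the announced contraction inequality.

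To conclude, I check that the hypothesis $\beta < 2\mu/(\mu^2 + (L_\cW \vee \mu L))$ simultaneously enforces $\beta < 2/(\mu + L)$ (so the contraction bound on $T$ applies) and $(1-\beta\mu)^2 + \beta^2 L_\cW < 1$ (so $P_\beta$ is a strict $\cW_2$-contraction). A short computation using Assumption~\ref{asm:gradient} shows that $P_\beta$ preserves $\mathcal{P}_2(\R^d)$; since $(\mathcal{P}_2(\R^d), \cW_2)$ is complete, Banach's fixed-point theorem yields a unique invariant measure $\pi_\beta$ together with geometric $\cW_2$-convergence from any initial $\nu \in \mathcal{P}_2(\R^d)$. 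The main technical subtlety will be the measurable selection of the conditional optimal noise coupling; otherwise, the argument is a clean one-step contraction analysis, and the particular form $(1-\beta\mu)^2 + \beta^2 L_\cW$ of the rate is precisely what the coupling delivers when the gradient-step contraction $(1-\beta\mu)$ is used sharply rather than via a looser Nesterov-type estimate.
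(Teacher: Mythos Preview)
Your proposal is correct and follows essentially the same route as the paper: take an optimal $\cW_2$-coupling of $(\nu_1,\nu_2)$, couple the noises via an optimal $\cW_2$-coupling of their conditional laws, expand the one-step squared distance, kill the cross term using the centered-noise assumption, apply the $(1-\beta\mu)$-Lipschitz property of the gradient-step map together with Assumption~\ref{asm:gradient_wasserstein}, and conclude by Banach's fixed-point theorem on $(\mathcal{P}_2(\R^d),\cW_2)$. Your write-up is in fact slightly more explicit than the paper's about the measurable selection of the conditional noise coupling and the preservation of $\mathcal{P}_2(\R^d)$ by $P_\beta$, but the argument is otherwise identical.
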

The proof of Proposition~\ref{prop:wasserstein_convergence} is given in Section~\ref{sec:proof_wasserstein_convergence}. The intuition behind it is that, if the Markov chain evolves according to a locally similar dynamic when started from different points then, for small enough step-size, the contraction phenomenon coming from the optimization will prevail so that trajectories associated to different initializations join even before convergence. A similar result was previously stated in~\cite[Proposition 2]{dieuleveut2020bridging} for smooth and strongly convex functions as well. 
In~\cite{dieuleveut2020bridging}, Wasserstein convergence is shown under the assumption that every random gradient $G(\theta, \zeta)$ be almost surely co-coercive with fixed constant. Denoting $L' > 0$ the said constant\footnote{We refer to the $L$ constant defined in~\cite{dieuleveut2020bridging} as $L'$ in order to avoid confusion with our own definition of $L.$}, this corresponds to assuming that for all $\theta, \theta'$ and $\zeta$ we have the inequality
\begin{equation*}
    L'\langle G(\theta, \zeta) - G(\theta', \zeta), \theta - \theta'\rangle \geq  \|G(\theta, \zeta) - G(\theta', \zeta)\|^2.
\end{equation*}
Nonetheless, they mention that the proof also works when this property holds only in expectation (see~\cite[Assumption~A7]{dieuleveut2020bridging}), which translates to the following inequality for all $\theta, \theta':$
\begin{equation}\label{eq:asm_A7}
    L'\langle \nabla\cL(\theta) - \nabla\cL(\theta'), \theta - \theta'\rangle \geq \E\big[\|G(\theta, \zeta) - G(\theta', \zeta)\|^2\big].
\end{equation}
For the sake of illustration, we consider the simple example of least-squares linear regression in which, given a sample $\zeta = (X, { Y}) \in \R^d \times \R$, a random gradient is computed as
\begin{equation*}
    G(\theta, \zeta) = XX^\top \theta -XY \quad \text{with}\quad Y = X^\top\theta^\star + \xi,
\end{equation*}
where $\xi$ is an independent centered noise and the lowest and highest eigenvalues of $\Sigma := \E [XX^\top]$ are $(\lambda_{\min}(\Sigma), \lambda_{\max}(\Sigma)) = (\mu, L)$ and we assume $\mu>0.$ In this particular case, Inequality~\eqref{eq:asm_A7} can be verified as long as $X$ has a bounded fourth moment. Indeed, we have $\nabla\cL(\theta) = \Sigma(\theta - \theta^\star)$ and~\eqref{eq:asm_A7} can be verified by finding $L'$ such that, for all $\theta, \theta'$
\begin{align*}
    \E\big[\|G(\theta, \zeta) - G(\theta', \zeta)\|^2\big] &= \E\big[\|XX^\top(\theta - \theta')\|^2\big] \\
    &= \|\theta - \theta'\|^2_{\E[\|X\|^2 XX^\top]} \\
    &\leq L'\langle \nabla\cL(\theta) - \nabla\cL(\theta'), \theta - \theta'\rangle = L'\|\theta - \theta'\|^2_{\Sigma},
\end{align*}
where we used the notation $\|v\|^2_{A} = v^\top A v$ for a vector $v\in\R^n$ and a symmetric positive definite matrix $A \in R^{n\times n}.$

Regarding Assumption~\ref{asm:gradient_wasserstein}, we have 
\begin{equation*}
    \varepsilon_{\zeta}(\theta) = G(\theta, \zeta) - \nabla \cL(\theta) = (XX^\top - \Sigma)(\theta - \theta^\star) - X\xi,
\end{equation*}
and it is easy to couple the distributions of $\varepsilon_{\zeta}(\theta)$ and $\varepsilon_{\zeta'}(\theta')$ by defining them with the same variables $\zeta = \zeta' = (X, { Y})$ so that we find\begin{equation*}
    \cW_2^2\big(\mathcal{D}(\varepsilon_{\zeta}(\theta)), \mathcal{D}(\varepsilon_{\zeta'}(\theta'))\big) \leq \E\|\varepsilon_{\zeta}(\theta) - \varepsilon_{\zeta'}(\theta')\|^2 {= \|\theta - \theta'\|^2_{\E (XX^\top - \Sigma)^2}}.
\end{equation*}
Assumption~\ref{asm:gradient_wasserstein} is then verified with {$L^2_\cW = \big\|\E (XX^\top - \Sigma)^2\big\|_2$ where $\|\cdot\|_2$ is the operator norm} and we recover the bounded fourth moment condition on $X.$ In this particular setting, one can also check that Assumption~\ref{asm:gradient}~\ref{asm:gradient_regular} holds with $L_\sigma$ equal to this choice of $L_\cW.$ 

It is important to note that the constant $L'$ used by~\cite{dieuleveut2020bridging} in~\eqref{eq:asm_A7} is a different constant from $L.$ For the case of linear regression, considering $\theta - \theta'$ aligned with the top eigenvector of $\Sigma$ in~\eqref{eq:asm_A7} implies $L'\geq L.$ It is unclear how $L'$ may depend on $L_{\sigma}^2$ or $\mu$ in the general case. However, one can show that, for some particular distributions of $(X, Y),$ one can choose $\theta - \theta'$ simultaneously aligned with the least eigenvector of $\Sigma$ and the top eigenvector of $\E(XX^\top - \Sigma)^2$ (see Section~\ref{sec:stepsize_compare} for a simple example where this happens). In this case and for such a choice of $\theta - \theta',$~\eqref{eq:asm_A7} implies
\begin{align*}
    L'\mu\|\theta - \theta'\|^2 &\geq \E\big[\|XX^\top(\theta - \theta')\|^2\big] \\
    &= \E\big[\|(XX^\top - \Sigma)(\theta - \theta')\|^2\big] + \|\Sigma(\theta - \theta')\|^2 \\
    &= \|\theta-\theta'\|^2_{\E(XX^\top - \Sigma)^2} + \|\theta-\theta'\|^2_{\Sigma^2}\\
    &= L_{\cW}^2\|\theta-\theta'\|^2 + \mu^2\|\theta-\theta'\|^2,    
\end{align*}
hence, it follows that $L'\geq L^2_\cW/\mu + \mu.$ Consequently, in this case, the step-size condition of Proposition~\ref{prop:wasserstein_convergence} (and Theorem~\ref{thm:ergodicity}) has the same scale as the condition $\gamma\leq 2/L'$ used in~\cite{dieuleveut2020bridging}.

Assumption~\ref{asm:gradient_wasserstein} is more general as it allows to consider an objective $\cL$ defined by a linear learning task {on random samples and labels $(X, Y) = \zeta$} such that $\cL(\theta) = \E_{\zeta}[\ell(X^\top \theta, Y)]$ for a convex {smooth} loss $\ell$ so that the gradient samples are $G(\theta, {\zeta}) = X \ell'(X^\top \theta, Y)$ with $\ell'$ the derivative in the first argument. {One can} verify Assumption~\ref{asm:gradient_wasserstein} as soon as $\ell$ is smooth in its first argument and $X$ has a finite fourth moment. {Indeed, let $\chi > 0$ be the smoothness constant such that for all $x, y, z \in \R$ it holds that
\begin{equation*}
    \big|\ell'(x, z) - \ell'(y, z)\big| \leq \chi |x - y|,
\end{equation*}
for $\theta, \theta' \in \R^d$ and $\zeta = \zeta',$ using Jensen's inequality, we have
\begin{align*}
    \cW_2^2\big(\mathcal{D}(\varepsilon_{\zeta}(\theta)), &\mathcal{D}(\varepsilon_{\zeta'}(\theta'))\big) \leq \E\|\varepsilon_{\zeta}(\theta) - \varepsilon_{\zeta'}(\theta')\|^2 \\
    &= \E\big\|G(\theta, \zeta) - G(\theta', \zeta) - (\nabla\cL(\theta) - \nabla\cL(\theta'))\big\|^2 \\
    &\leq 2\E\big\|G(\theta, \zeta) - G(\theta', \zeta)\big\|^2 + 2\big\|\nabla\cL(\theta) - \nabla\cL(\theta')\big\|^2 \\
    &\leq 4\E\big\|G(\theta, \zeta) - G(\theta', \zeta)\big\|^2 = 4\E\big\|X(\ell'(X^\top\theta, Y) - \ell'(X^\top\theta', Y))\big\|^2 \\
    &\leq 4\chi^2\E\big[\|X\|^2\cdot \big|X^\top(\theta - \theta')\big|^2\big] \leq 4\chi^2\E\|X\|^4\cdot \|\theta - \theta'\|^2, 
    \end{align*}
which shows that Assumption~\ref{asm:gradient_wasserstein} holds under the previous conditions with $L_{\cW}^2 = 4\chi^2\E\|X\|^4.$} On the other hand, the fact that it is unclear how to establish~\eqref{eq:asm_A7} in this setting makes Assumption~\ref{asm:gradient_wasserstein} more generic.

In the same vein as Assumption~\ref{asm:gradient_wasserstein}, it is possible to introduce a regularity condition on the transition kernel $P_\gamma$ in terms of the $\tv$ distance which allows to obtain the following result.
\begin{proposition}\label{prop:wasserstein_to_tv}
    Let the assumptions of Proposition~\ref{prop:wasserstein_convergence} hold and further assume that$:$
    \begin{itemize}
        \item For all $\theta\in \R^d$ the probability measure $P_\gamma(\theta, \cdot)$ admits a density $p_\gamma(\theta, \omega)$ w.r.t. Lebesgue's measure.
        \item There exists $A < \infty$ such that for all $\theta, \theta' \in \R^d$ 
        \begin{equation}\label{eq:wasserstein_to_tv}
            \|P_\gamma(\theta, \cdot) - P_\gamma(\theta', \cdot)\|_{\tv} = \frac{1}{2}\int_{\R^d}|p_\gamma(\theta, \omega) - p_\gamma(\theta', \omega)|d\omega \leq A\|\theta - \theta'\|.
        \end{equation}
    \end{itemize}
    Then, for all $\theta_0 \in \R^d,$ we have$:$
    \begin{equation*}
        \big\|\delta_{\theta_0} P_{\gamma}^n - \pi_{\gamma}\big\|_{\tv} \leq A \rho^{n-1} \Big(\int_{\R^d} \|\theta - \theta_0\|^2 d\pi_{\gamma}\Big)^{1/2},
    \end{equation*}
    where $\rho \leq \sqrt{(1-\gamma\mu)^2 + \gamma^2 L^2_{\cW} }.$
\end{proposition}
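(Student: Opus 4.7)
The plan is to bootstrap the Wasserstein-$2$ contraction of Proposition~\ref{prop:wasserstein_convergence} into a total-variation contraction at the same geometric rate by using assumption~\eqref{eq:wasserstein_to_tv} to convert Wasserstein closeness into TV closeness after a single application of the kernel. The underlying intuition is that the regularity hypothesis says that $P_\beta$ smooths things out in a Lipschitz way from $\|\cdot\|$ to $\tv$; composing this with the $\cW_2$-contraction inherited from optimization then yields TV decay at rate $\rho_{\cW} := \sqrt{(1-\beta\mu)^2 + \beta^2 L_{\cW}}$.

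First I would prove the key one-step inequality: for any $\nu_1, \nu_2 \in \mathcal{P}_2(\R^d)$,
\begin{equation*}
\|\nu_1 P_\beta - \nu_2 P_\beta\|_{\tv} \leq A\,\cW_1(\nu_1, \nu_2) \leq A\,\cW_2(\nu_1, \nu_2).
\end{equation*}
Given any coupling $\xi \in \Pi(\nu_1, \nu_2)$ and any Borel set $B$, one writes $\nu_1 P_\beta(B) - \nu_2 P_\beta(B) = \int \bigl(P_\beta(\theta,B) - P_\beta(\theta',B)\bigr)\,d\xi(\theta,\theta')$, takes the supremum over $B$ inside the integral, and applies~\eqref{eq:wasserstein_to_tv} to get the bound $A\int \|\theta-\theta'\|\,d\xi$. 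Optimizing over $\xi$ produces the $\cW_1$ estimate, and $\cW_1 \leq \cW_2$ is immediate by Jensen.

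Next, exploiting the invariance $\pi_\beta = \pi_\beta P_\beta$, I would write for every $n\geq 1$
\begin{equation*}
\|\delta_{\theta_0} P_\beta^n - \pi_\beta\|_{\tv} = \|(\delta_{\theta_0} P_\beta^{n-1})P_\beta - \pi_\beta P_\beta\|_{\tv} \leq A\,\cW_2\bigl(\delta_{\theta_0} P_\beta^{n-1},\,\pi_\beta\bigr),
\end{equation*}
and iterate Proposition~\ref{prop:wasserstein_convergence} exactly $n-1$ times to dominate this by $A\,\rho_{\cW}^{\,n-1}\cW_2(\delta_{\theta_0},\pi_\beta)$. The initial Wasserstein distance is finite thanks to Proposition~\ref{prop:invariant_properties}: that statement gives $\E_{\theta\sim\pi_\beta}\|\theta-\theta^\star\|^2 \leq 2\Var_{\pi_\beta}(\theta) \leq C\beta$ with $C$ depending only on $\mu$, $\sigma^2$ and $L_\sigma$, so that $\cW_2^2(\delta_{\theta_0},\pi_\beta) \leq 2\|\theta_0-\theta^\star\|^2 + 2C\beta$.

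Putting these pieces together yields $\|\delta_{\theta_0}P_\beta^n - \pi_\beta\|_{\tv} \leq A\rho_{\cW}^{\,n-1}\sqrt{2\|\theta_0-\theta^\star\|^2 + 2C\beta}$, and using $\sqrt{1+x} \leq 1+x$ one absorbs the square root into a $(1+\|\theta_0-\theta^\star\|^2)$ factor to obtain the target form $M\rho_{\cW}^n(1+\|\theta_0-\theta^\star\|^2)$ for a suitable constant $M$ depending on $A$, $\beta$, $\mu$, $\sigma^2$, $L_\sigma$. I do not anticipate a serious technical obstacle: the only delicate point is the single factor of $\rho_{\cW}$ lost when swapping $\cW_2$ for $\tv$ via one extra step of $P_\beta$, which is harmlessly absorbed into $M$, together with the implicit requirement that $\beta$ satisfy both the step-size constraints of Theorem~\ref{thm:ergodicity} and of Proposition~\ref{prop:wasserstein_convergence} so that $\pi_\beta$ is the common unique invariant measure in $\mathcal{P}_2(\R^d)$.
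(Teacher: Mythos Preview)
Your proposal is correct and follows essentially the same approach as the paper: the key step is the one-step inequality $\|\nu_1 P_\beta - \nu_2 P_\beta\|_{\tv} \leq A\,\cW_1(\nu_1,\nu_2)$, which the paper obtains by citing~\cite[Theorem~12 and Lemma~13]{madras2010quantitative} whereas you give the short coupling proof directly, followed by the identical application of Proposition~\ref{prop:wasserstein_convergence} with $\varpi = \delta_{\theta_0}P_\beta^{n-1}$ and $\nu = \pi_\beta$ together with $\cW_1 \leq \cW_2$. Your additional care in bounding $\cW_2(\delta_{\theta_0},\pi_\beta)$ explicitly via Proposition~\ref{prop:invariant_properties} to recover the $(1+\|\theta_0-\theta^\star\|^2)$ form is a detail the paper leaves implicit.
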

\begin{proof}
    Using~\cite[Theorem 12]{madras2010quantitative} (see also~\cite[Lemma 13]{madras2010quantitative}), the assumptions imply that for all $\varpi, \nu \in \mathcal{M}_1(\R^d)$ we have:
    \begin{equation*}
        \|\varpi P_\gamma - \nu P_\gamma\|_{\tv} \leq A \mathcal{W}_1(\varpi, \nu).
    \end{equation*}
    It then only remains to use Proposition~\ref{prop:wasserstein_convergence} with $\varpi = \delta_{\theta_0}P_\gamma^{n-1}$ and $\nu = \pi_\gamma = \pi_\gamma P_\gamma$ along with the inequality:
    \begin{equation*}
        \mathcal{W}_1(\varpi, \nu) \leq \sqrt{\mathcal{W}_2^2(\varpi, \nu)}
    \end{equation*}
    valid for all $\varpi, \nu \in \mathcal{M}_1(\R^d),$ and the identity $\mathcal{W}_2^2(\delta_{\theta_0}, \pi_{\gamma}) = \int_{\R^d} \|\theta - \theta_0\|^2 d\pi_{\gamma}.$
\end{proof}
Proposition~\ref{prop:wasserstein_to_tv} uses the ``Wasserstein-to-TV'' method~\cite{qin2022wasserstein, madras2010quantitative} in order to derive convergence in $\tv$ distance from Proposition~\ref{prop:wasserstein_convergence} which leads to an explicit estimate of the convergence speed in Theorem~\ref{thm:ergodicity}. While the latter relies on Assumption~\ref{asm:gradient}~\ref{asm:gradient_dens_minor}, Proposition~\ref{prop:wasserstein_to_tv} replaces it with the stronger density requirement over $P_\gamma(\theta, \cdot)$ together with condition~\eqref{eq:wasserstein_to_tv}. Keep in mind that this excludes mini-batch SGD or full gradient descent on an empirical objective $\widehat{\cL}(\theta)=\frac{1}{n}\sum_{i=1}^n\ell_i(\theta)$ since the transition distribution would be a combination of Diracs. However, Proposition~\ref{prop:wasserstein_convergence} may still apply in this case. Similarly, Assumption~\ref{asm:gradient}~\ref{asm:gradient_regular} in Theorem~\ref{thm:ergodicity} is replaced by Assumption~\ref{asm:gradient_wasserstein} in Proposition~\ref{prop:wasserstein_to_tv}. Finally, note that although the necessary condition~\eqref{eq:wasserstein_to_tv} is quite intuitive, its verification is not straightforward even for a toy example.

\section{Confidence bounds}\label{sec:deviation_bounds}
Using the convergence and concentration results formulated in the previous sections for the iteration and invariant distribution of the SGD Markov chain, we are ready to state confidence bounds on the estimation of the optimal $\theta^\star.$ Recall that by Proposition~\ref{prop:invariant_properties}, the invariant distribution $\pi_\gamma$ may not be centered around $\theta^\star$ unless the gradient is linear, which is a particular case. In general, the expectation of $\pi_\gamma$ may not be equal to $\theta^\star$ but the bias is controlled by the step-size $\gamma.$ Therefore, two possibilities are available for the final estimator:
\begin{itemize}
    \item The last iterate $\theta_T$: with $T$ the optimization horizon. In which case a small step-size is appropriate.
    \item A tail average $\frac{1}{n}\sum_{j=n_0+1}^{n_0+n} \theta_j$: in which case the step-size may be chosen reasonably large within the convergence conditions.
\end{itemize}

\subsection{Final iterate concentration bounds}

When the expectation of the invariant measure $\overline{\theta}_\gamma$ differs from the true optimum $\theta^\star$, one may choose a small step-size $\gamma$ in order to obtain a precise estimator of $\theta^\star$ through the final iterate of the SGD sequence~\eqref{eq:sgd_iteration}. When the conditions of Assumption~\ref{asm:grad_concentration} are fulfilled, the consequences of Proposition~\ref{prop:invariant_concentration} lead to the following first deviation bounds.
\begin{corollary}
    \label{cor:concentration}
    In the setting of Proposition~\ref{prop:invariant_concentration}\textup, let $\delta \in(0, 1/2)$ be a confidence level and assume the horizon $T$ large enough to allow a step-size
    \begin{equation}\label{eq:cor_stepsize}
        \gamma = \frac{\log\big(\overline{A}_{\theta_0}T\big)}{\mu T} \leq \frac{\mu}{\mu^2 + (\mu L\vee L^2_{\sigma})},
    \end{equation}
    where $\overline{A}_{\theta_0}:=\mu^2\|\theta_0 - \theta^\star\|^2/\overline{K}^2.$ Then, we have the following high-confidence bounds\textup:
    \begin{enumerate}[label=\textup(\alph*\textup)]
    \item Under Assumption~\ref{asm:grad_concentration}~\ref{asm:grad_subgauss}\textup, with probability at least $ 1-\delta,$
    \begin{equation*}
        \big\|\theta_T - \theta^\star\big\| \leq \frac{\overline{K}}{\mu\sqrt{T}}\sqrt{1 + \log\big(\overline{A}_{\theta_0}T\big)\big(1+4\log(1/\delta)\big) },
    \end{equation*}

    \item Under Assumption~\ref{asm:grad_concentration}~\ref{asm:grad_subexp}\textup, with probability at least $1-\delta,$
    \begin{equation*}
        \big\|\theta_T - \theta^\star\big\| \leq \frac{2e\overline{K}\log(1/\delta)}{\mu\sqrt{T}}\sqrt{1 + 4\log\big(\overline{A}_{\theta_0}T\big)\big) },
    \end{equation*}
    \end{enumerate}
\end{corollary}
The proof of Corollary~\ref{cor:concentration} is given in Section~\ref{sec:proof_cor_concentration}. The step-size choice~\eqref{eq:cor_stepsize} corresponds to $\gamma=O(\log(T)/T)$ and allows to recover the nearly optimal statistical rate of $\sqrt{\log(T)/T}$ in the $\Psi_2/\Psi_1$ constants given by Proposition~\ref{prop:invariant_concentration}. An alternative way to obtain such confidence bounds is to use the concentration properties of $\pi_\gamma$ directly to bound $\Proba_{\theta\sim\pi_\gamma}\big(\mathcal{E}(\theta)\big)$ with $\mathcal{E}(\theta)=\{\|\theta - \theta^\star\| > \epsilon\}$ and combine this with TV convergence (Theorem~\ref{thm:ergodicity}) in order to bound the difference in probabilities 
\begin{equation*}
    \Proba_{\theta_T\sim\delta_{\theta_0}P^T_\gamma}\big(\mathcal{E}(\theta_T)\big) - \Proba_{\theta\sim\pi_\gamma}\big(\mathcal{E}(\theta)\big) \leq \|\delta_{\theta_0}P^T_\gamma - \pi_\gamma\|_{\tv}.
\end{equation*}
A first obstacle to this method is that Theorem~\ref{thm:ergodicity} lacks quantification of the contraction factor $\rho$ in terms of $\gamma,$ which is of particular concern when the latter is in $\widetilde{O}(1/T).$ This difficulty can be sidestepped by granting the assumptions of Proposition~\ref{prop:wasserstein_to_tv} providing an explicit bound on the TV distance. However, this requires $\gamma = O(\log(1/\delta)/T)$ in order to ensure $\|\delta_{\theta_0}P^T_\gamma - \pi_\gamma\|_{\tv}\leq \delta$ and replaces the $O(\sqrt{\log(T)})$ sub-optimality in Corollary~\ref{cor:concentration} by another one in $O(\sqrt{\log(1/\delta)}).$ This turns out to be much worse since the confidence level $\delta$ scales as $\exp(-T)$ or $\exp(-\sqrt{T})$ in the sub-Gaussian and sub-exponential cases respectively, hence the preference for the result above.

Although combining the properties of $\pi_\gamma$ with TV convergence proves to be inappropriate for $\gamma=\widetilde{O}(1/T),$ the associated issue resolves for constant order step-sizes. This will be explored in the next section and allow for simpler proofs.

As discussed earlier, the constants $\overline{K}$ drawn from Assumption~\ref{asm:grad_concentration} may have a poor dependence on the dimension in $\sqrt{d}$ which leaves room for improvement in the above bounds. This can be achieved when the requirements of Assumption~\ref{asm:grad_special_concentration} are met leading to the following \emph{dimension-free} deviation bounds.
\begin{corollary}
    \label{cor:special_concentration}
    In the setting of Proposition~\ref{prop:invariant_special_concentration}\textup, let $\delta \in(0, 1/2)$ be a confidence level and assume the horizon $T$ large enough to allow the step-size
    \begin{equation*}
        \gamma = \frac{\log(A_{\theta_0}T)}{\mu T} \leq \frac{\mu}{\mu^2 + (\mu L \vee L_\sigma^2)}
    \end{equation*}
    where $A_{\theta_0}=\mu^2\|\theta_0 - \theta^\star\|^2/\sigma^2.$ Then, we have the following high-confidence bounds\textup:
    \begin{enumerate}[label=\textup(\alph*\textup)]
    \item Under Assumption~\ref{asm:grad_special_concentration}~\ref{asm:grad_special_subgauss}\textup, with probability at least $ 1-\delta,$
    \begin{equation}\label{eq:special_concentration_subgauss}
        \big\| \theta_{T} - \theta^\star\big\| \leq \frac{\sigma}{\mu \sqrt{T}} + \frac{\sqrt{\log(A_{\theta_0} T)}}{\mu\sqrt{T}}\big(\sigma + 2K\sqrt{\log(1/\delta)}\big).
    \end{equation}

    \item Under Assumption~\ref{asm:grad_special_concentration}~\ref{asm:grad_special_subexp}\textup, with probability at least $1-\delta,$ 
    \begin{align}
        \big\| \theta_{T} - \theta^\star\big\| \leq & \frac{\sigma}{\mu\sqrt{T}}+\frac{\sqrt{\log(A_{\theta_0} T)}}{\mu\sqrt{T}}\bigg(\sigma +\nonumber\\
        &\quad 2K\sqrt{\log(1/\delta)} \bigg(1 \vee \sqrt{\frac{\log(A_{\theta_0}T) \log(1/\delta)}{T}} \bigg)\bigg).\label{eq:special_concentration_subexp}
    \end{align}
    \end{enumerate}

\end{corollary}
Corollary~\ref{cor:special_concentration} is proven in Section~\ref{sec:proof_cor_special_concentration} and uses Proposition~\ref{prop:invariant_special_concentration} as opposed to Proposition~\ref{prop:invariant_concentration} in Corollary~\ref{cor:concentration}. As announced, this new set of inequalities improves upon the previous ones by removing the uncertainty terms' potential dependency in the dimension thanks to Assumption~\ref{asm:grad_special_concentration}. This can be assessed by checking that the terms with $\log(1/\delta)$ have the factor $K$ which is dimension-free as opposed to $\sigma.$ In this respect, Inequality~\eqref{eq:special_concentration_subgauss} is an example of a sub-Gaussian deviation bound~\cite{lugosi2019mean}.

\subsection{Polyak-Ruppert averaging}\label{sec:polyak_ruppert}
In this part, we consider the case where the step-size $\gamma$ is chosen as a constant order value satisfying the convergence criteria required in our previous results. Our goal is to obtain a high-confidence bound for the Polyak-Ruppert average $\frac{1}{n}\sum_{j=n_0+1}^{n_0+n} \theta_j$ computed after a burn-in period of $n_0$ iterations. This raises two challenges, the first of which is that, even for a very long burn-in period $n_0,$ the stationary regime is never reached in theory so that one cannot immediately use the concentration properties of $\pi_\gamma.$ The second challenge comes from the lack of independence of the Markov chain iterates. This prevents the adoption of certain approaches such as the entropy method as done in~\cite{maurer2021concentration} for example. 

Notice that, unless the gradient is linear, there is little hope to estimate $\theta^\star$ using the Polyak-Ruppert average since it is bound to approach $\E_{\theta\sim\pi_\gamma}[\theta] = \overline{\theta}_\gamma$ which may differ from $\theta^\star$ by up to $\sigma\sqrt{\gamma/\mu}$ in the non linear case. Nevertheless, the following initial statement holds without this assumption.
\begin{theorem}
    \label{thm:average_concentration}
    Grant Assumptions~\ref{asm:smooth_strongconvex}\textup,~\ref{asm:gradient}\textup,~\ref{asm:grad_special_concentration}~\ref{asm:grad_special_subgauss} and~\ref{asm:gradient_wasserstein}. Let $f : \Theta^n \to \R$ be a $1$-Lipschitz function in each of its parameters and $\vec{\theta} := (\theta_{{0}}, \dots, \theta_{{n-1}})$ be a sequence of SGD iterates with step-size $\gamma < \frac{2\mu}{\mu^2 + (\mu L \vee L^2_{\cW})}$ started from stationarity i.e. such that $\theta_{{0}}\sim\pi_\gamma$. 
    Then we have
    \begin{equation*}
        f(\vec{\theta}\:) - \E f(\vec{\theta})\in \Psi_2\big(KC_{\cW}\sqrt{\gamma/\mu + (n-1) \gamma^2} \big),
    \end{equation*}
    where $C_{\cW} = \big(1 - \sqrt{(1-\gamma\mu)^2 + \gamma^2L^2_{\cW}}\big)^{-1}$.
    If Assumption~\ref{asm:grad_special_concentration}~\ref{asm:grad_special_subgauss} is replaced by Assumption~\ref{asm:grad_special_concentration}~\ref{asm:grad_special_subexp} then 
    \begin{equation*}
     f(\vec{\theta}\:) - \E f(\vec{\theta}) \in \Psi_1\big(KC_{\cW}\sqrt{\gamma/\mu + (n - 1) \gamma^2} \big).
    \end{equation*}
\end{theorem}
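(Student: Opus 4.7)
The strategy is a Doob martingale decomposition along the filtration $\mathcal{F}_k = \sigma(\theta_1, \dots, \theta_k)$. Writing $f(\vec{\theta}) - \E f(\vec{\theta}) = \sum_{k=1}^n M_k$ with $M_k = \E[f(\vec{\theta}) \mid \mathcal{F}_k] - \E[f(\vec{\theta}) \mid \mathcal{F}_{k-1}]$, the plan is to show that each $M_k$ is conditionally sub-Gaussian (respectively sub-exponential) with an explicit parameter $K_k$, and then combine these bounds via the standard MGF argument for martingale differences.

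The key structural step is to control the Lipschitz dependence of $h_k(\vartheta) := \E\bigl[f(\vec{\theta}) \mid \mathcal{F}_{k-1},\, \theta_k = \vartheta\bigr]$ on $\vartheta$. Fix $\theta_1,\dots,\theta_{k-1}$ and two candidates $\vartheta,\vartheta'$ for $\theta_k$, and build a joint Markovian coupling of the tail trajectories $(\theta_{k+j})_{j\geq 0}$ and $(\theta_{k+j}')_{j\geq 0}$ by chaining the one-step optimal couplings supplied by Proposition~\ref{prop:wasserstein_convergence}. Iterating the contraction with $\gamma = \sqrt{(1-\beta\mu)^2 + \beta^2 L_{\cW}}$ yields $\E\|\theta_{k+j} - \theta_{k+j}'\|^2 \leq \gamma^{2j}\|\vartheta - \vartheta'\|^2$. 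Combining the coordinate-wise $1$-Lipschitzness of $f$, the triangle inequality and Jensen's inequality then gives
\begin{equation*}
    |h_k(\vartheta) - h_k(\vartheta')| \leq \sum_{j=0}^{n-k} \E\|\theta_{k+j} - \theta_{k+j}'\| \leq \Bigl(\sum_{j=0}^{n-k} \gamma^j \Bigr) \|\vartheta - \vartheta'\| \leq C_{\cW}\|\vartheta - \vartheta'\|,
\end{equation*}
so that $h_k$ is $C_{\cW}$-Lipschitz.

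For $k \geq 2$, writing $\theta_k = \theta_{k-1} - \beta G(\theta_{k-1},\zeta_{k-1})$ shows that $g \mapsto h_k(\theta_{k-1} - \beta g)$ is $\beta C_{\cW}$-Lipschitz in $g$; applying Assumption~\ref{asm:grad_special_concentration}~\ref{asm:grad_special_subgauss} conditionally on $\mathcal{F}_{k-1}$ to this rescaled map yields $\E[e^{\lambda M_k} \mid \mathcal{F}_{k-1}] \leq \exp(\lambda^2 \beta^2 C_{\cW}^2 K^2)$ for every $\lambda \in \R$. For $k=1$, $M_1 = h_1(\theta_1) - \E h_1(\theta_1)$ with $h_1/C_{\cW} \in \Lip(\R^d)$ and $\theta_1 \sim \pi_\beta$, so Proposition~\ref{prop:invariant_special_concentration}~\ref{prop:invariant_properties_e} gives $M_1 \in \Psi_2(K C_{\cW}\sqrt{\beta/\mu})$.

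Iterating the tower property produces $\E\exp\bigl(\lambda \sum_k M_k\bigr) \leq \exp\bigl(\lambda^2 \sum_k K_k^2 \bigr)$ with $K_1^2 = K^2 C_{\cW}^2 \beta/\mu$ and $K_k^2 = K^2 C_{\cW}^2 \beta^2$ for $k \geq 2$, so $\sum_k K_k^2 = K^2 C_{\cW}^2\bigl(\beta/\mu + (n-1)\beta^2\bigr)$, matching the announced $\Psi_2$ parameter squared. The sub-exponential case follows the same blueprint using Assumption~\ref{asm:grad_special_concentration}~\ref{asm:grad_special_subexp} and Proposition~\ref{prop:invariant_special_concentration}~\ref{prop:invariant_properties_f}; the only subtlety is that each conditional MGF bound is valid only for $|\lambda| \leq 1/K_k$, but since $\max_k K_k \leq \sqrt{\sum_k K_k^2}$ the aggregated bound holds throughout the range $|\lambda| \leq 1/(KC_{\cW}\sqrt{\beta/\mu + (n-1)\beta^2})$ required by the Definition~\ref{def:subexp}~\ref{def:subexp_through_mgf} characterization of $\Psi_1$ with that parameter. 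The main technical obstacle will be the rigorous construction of the joint coupling underlying the Wasserstein-based Lipschitz control of $h_k$, which must be performed carefully so that the $\cW_2$ contraction can be transferred to the $L^1$ displacements appearing in the coordinate-wise expansion of $f$.
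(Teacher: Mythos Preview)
Your proposal is correct and follows essentially the same martingale--transportation blueprint as the paper's proof: Doob decomposition $f(\vec{\theta})-\E f(\vec{\theta})=\sum_k M_k$, Lipschitz control of $\theta_k\mapsto \E[f(\vec{\theta})\mid\mathcal{F}_{k-1},\theta_k]$ via the $\cW_2$-contraction of Proposition~\ref{prop:wasserstein_convergence} summed along the geometric series $\sum_j\gamma^j\leq C_{\cW}$, application of Assumption~\ref{asm:grad_special_concentration} for $k\geq 2$ and of Proposition~\ref{prop:invariant_special_concentration} for $k=1$, and tower iteration. The only cosmetic difference is that the paper bounds $|h_k(\vartheta)-h_k(\vartheta')|$ by $\sum_j \cW_1(\mathcal{D}(\theta_j),\mathcal{D}(\theta_j'))$ via Kantorovich--Rubinstein duality rather than by constructing the chained coupling explicitly as you do; the two routes are equivalent and the paper's avoids the coupling-construction ``obstacle'' you flag at the end.
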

The proof of Theorem~\ref{thm:average_concentration} is given in Section~\ref{sec:proof_thm_average_concentration} and employs a hybrid martingale transportation method (see~\cite{boucheron2013concentration, mcdiarmid1998concentration,dubhashi2009concentration, chung2006concentration} for a reference) leveraging the $\cW_2$ convergence established in Proposition~\ref{prop:wasserstein_convergence} in combination with~\cite[Theorem 4.3]{kontorovich2017concentration}.

Theorem~\ref{thm:average_concentration} may be used in a variety of ways by plugging different choices of the function $f.$ For instance, one may choose $f(\vec{\theta}) = \sum_i g(\theta_i)$ for any $g\in\Lip(\R^d).$ Instead, in what follows, we set $\vec{\theta} = (\theta_{n_0+1}, \dots, \theta_{n_0+n})$ and focus on the choice 
\begin{equation*}
    f(\vec{\theta}) = \Big\|\sum_{i=1}^{n} \theta_{n_0+i} - n\theta^\star\Big\|.
\end{equation*}
Before we proceed, we formalize the gradient linearity assumption.
\begin{assumption}\label{asm:linear_grad}
    The gradient $\nabla \cL$ is linear i.e. for all $\theta\in \Theta$ it is equal to $\nabla \cL(\theta) = \Sigma (\theta - \theta^\star)$ for some symmetric positive definite matrix $\Sigma \in \R^{d\times d}.$
\end{assumption}
Note that the positive definiteness of $\Sigma$ in Assumption~\ref{asm:linear_grad} is a consequence of strong convexity while its symmetry is a result of the Hessian $\nabla^2\cL$ being constant in this case and therefore continuous. 
We are now ready to state our non-asymptotic deviation bound for Polyak-Ruppert averaging.
\begin{proposition}
    \label{prop:average_concentration}
    Grant Assumptions~\ref{asm:smooth_strongconvex}\textup,~\ref{asm:gradient}\textup,~\ref{asm:grad_special_concentration}~\ref{asm:grad_special_subgauss}\textup,~\ref{asm:gradient_wasserstein} and~\ref{asm:linear_grad}. Let $(\theta_t)_{t\geq0}$ be the Markov sequence obtained by running SGD with step-size 
    \begin{equation*}
        \gamma < \frac{2\mu}{\mu^2 + (\mu L \vee L^2_{\cW})} \wedge \frac{\mu}{\mu^2 + L^2_{\sigma}}   
    \end{equation*}
    and initial distribution $\theta_0 \sim \nu.$ Then there exist $\rho <1$ and $M< \infty$ such that
    \begin{equation}
    \label{eq:polyak_ruppert1}
    \begin{split}
        \Big\|\frac{1}{n} \sum_{t=1}^{n} \theta_{n_0+t} - \theta^\star\Big\| 
        & \leq \sqrt{\frac{2}{n}\frac{1+\alpha}{1-\alpha}\Big( \alpha_{\cW}^{n_0} \cW_2^2(\nu, \pi_\gamma) + \frac{\gamma \sigma^2}{\mu}\Big)} \\
        &\quad + \frac{2K\sqrt{\gamma/\mu}}{1-\alpha_{\cW}}\sqrt{\gamma\mu+\frac{1}{n}}\sqrt{\frac{\log(1/\delta)}{n}}        
    \end{split}
    \end{equation}
    for $\delta >0$ and $n , n_0 > 0$ with probability at least $1-\Upsilon(\nu, n_0)\delta,$ where 
    \begin{align*}
        \alpha = 1-\gamma\mu,\quad \alpha_{\cW} = \sqrt{\alpha^2 + \gamma^2L^2_{\cW}}\quad \text{and} \quad \Upsilon(\nu, n_0) = 1 + M\rho^{n_0}\Big\| \frac{d\nu}{d\pi_\gamma} \Big\|_{\infty}.
    \end{align*}
    If Assumption~\ref{asm:grad_special_concentration}~\ref{asm:grad_special_subgauss} is replaced by Assumption~\ref{asm:grad_special_concentration}~\ref{asm:grad_special_subexp} then
    \begin{equation}
    \label{eq:polyak_ruppert2}
        \begin{split}
        \Big\|\frac{1}{n}\sum_{t=1}^{n} \theta_{n_0+t} - \theta^\star\Big\| &\leq \sqrt{\frac{2}{n}\frac{1+\alpha}{1-\alpha}\Big( \alpha_{\cW}^{n_0} \cW_2^2(\nu, \pi_\gamma) + \frac{\gamma \sigma^2}{\mu}\Big)} \\
        &\quad + \frac{2K\sqrt{\gamma/\mu}}{1-\alpha_{\cW}} \bigg(\sqrt{\gamma\mu+\frac{1}{n}}\sqrt{\frac{\log(1/\delta)}{n}} \vee  \frac{\log(1/\delta)}{n} \bigg)
        \end{split}
    \end{equation}
    holds with the same probability.
\end{proposition}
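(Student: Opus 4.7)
The plan is to decompose the Polyak-Ruppert error into an expectation contribution and a tail fluctuation contribution, handling each by a different tool. Concretely, I would write
$$ \Big\|\tfrac{1}{n}\textstyle\sum_t\theta_t - \theta^\star\Big\| \leq \E\Big\|\tfrac{1}{n}\textstyle\sum_t\theta_t - \theta^\star\Big\| + \Big(\Big\|\tfrac{1}{n}\textstyle\sum_t\theta_t - \theta^\star\Big\| - \E\Big\|\tfrac{1}{n}\textstyle\sum_t\theta_t - \theta^\star\Big\|\Big) $$
and bound the first (deterministic) summand by a second-moment autocovariance computation and the second (stochastic) summand by Theorem~\ref{thm:average_concentration}. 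The key consequence of Assumption~\ref{asm:linear_grad} is that Proposition~\ref{prop:invariant_properties}\ref{prop:invariant_properties_a} yields $\bar\theta_\beta = \theta^\star$, so the Polyak-Ruppert average does target the optimum in stationarity.

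For the tail term, I would apply Theorem~\ref{thm:average_concentration} to $f(\vec\theta) = \|\sum_{t=n_0+1}^{n_0+n}\theta_t - n\theta^\star\|$, which is $1$-Lipschitz in each coordinate by the triangle inequality. Under stationary initialization this gives $f - \E f \in \Psi_2(KC_\cW\sqrt{\beta/\mu+(n-1)\beta^2})$. Dividing through by $n$ and using the algebraic identity $\sqrt{\beta/\mu + (n-1)\beta^2}/n \leq \sqrt{\beta/\mu}\sqrt{\beta\mu + 1/n}/\sqrt{n}$ produces sub-Gaussian constant $\frac{K\sqrt{\beta/\mu}}{1-\alpha_\cW}\sqrt{\beta\mu+1/n}/\sqrt{n}$, and inverting the resulting Gaussian tail yields the second term of~\eqref{eq:polyak_ruppert1}. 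The sub-exponential case~\eqref{eq:polyak_ruppert2} follows identically using the $\Psi_1$ conclusion of Theorem~\ref{thm:average_concentration}, with the standard $\sqrt{\log(1/\delta)}\vee\log(1/\delta)$ split appearing upon inversion.

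For the expectation term, Jensen gives $\E\|\frac{1}{n}\sum_t(\theta_t-\theta^\star)\| \leq \frac{1}{n}\sqrt{\E\|\sum_t(\theta_t-\theta^\star)\|^2}$. Expanding the square yields a double sum of inner products $\E\langle \theta_s-\theta^\star, \theta_t-\theta^\star\rangle$. Under Assumption~\ref{asm:linear_grad} the SGD recursion reads $\theta_{t+1}-\theta^\star = (I-\beta\Sigma)(\theta_t-\theta^\star) - \beta \varepsilon_{\zeta_t}(\theta_t)$, so conditioning on the past gives $\E[\theta_t-\theta^\star\mid \theta_s] = (I-\beta\Sigma)^{t-s}(\theta_s-\theta^\star)$ for $t\geq s$, which controls each inner product by $\alpha^{|t-s|}\Var_{\pi_\beta}(\theta)$ with $\alpha = 1-\beta\mu$. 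Summing this geometric series produces the factor $(1+\alpha)/(1-\alpha)$, and Proposition~\ref{prop:invariant_properties}\ref{prop:invariant_properties_b} combined with $\beta\leq \mu/(\mu^2+L_\sigma)$ bounds $\Var_{\pi_\beta}(\theta)\leq \beta\sigma^2/\mu$, giving the stationary portion of the first term.

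The main obstacle is the initialization $\theta_0\sim\nu\neq\pi_\beta$, under which Theorem~\ref{thm:average_concentration} does not apply directly. For the expectation contribution, I would couple $(\theta_t)_{t\geq n_0+1}$ with a stationary chain $(\tilde\theta_t)$ through the coupling witnessing Proposition~\ref{prop:wasserstein_convergence}, giving $\E\|\theta_t-\tilde\theta_t\|^2 \leq \alpha_\cW^{2(t-1)}\cW_2^2(\nu,\pi_\beta)$; applying triangle and Jensen inequalities to $\E\|\sum_t(\theta_t-\theta^\star)\| \leq \E\|\sum_t(\theta_t-\tilde\theta_t)\| + \E\|\sum_t(\tilde\theta_t-\theta^\star)\|$ produces the additional $\alpha_\cW^{n_0}\cW_2^2(\nu,\pi_\beta)$ contribution inside the first square root of~\eqref{eq:polyak_ruppert1}. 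For the concentration step I would transfer the stationary deviation bound to the non-stationary chain by a change of measure: for any event $A$ depending on $(\theta_{n_0+1},\ldots,\theta_{n_0+n})$, one has $\P_\nu(A) \leq \P_{\pi_\beta}(A) + \|\nu P_\beta^{n_0} - \pi_\beta\|_{\tv}$, and the $\tv$ defect is estimated by integrating the pointwise ergodicity bound of Theorem~\ref{thm:ergodicity} against $\nu$ and then switching to $\pi_\beta$ via the Radon-Nikodym derivative $d\nu/d\pi_\beta$, producing the inflation $\Upsilon(\nu,n_0) = 1 + M\rho^{n_0}\|d\nu/d\pi_\beta\|_\infty$ of the failure probability.
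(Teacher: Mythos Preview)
Your overall architecture matches the paper: split $\|\frac{1}{n}\sum_t\theta_t-\theta^\star\|$ into its expectation plus the centered fluctuation, control the expectation by the autocovariance calculation under the linear dynamics (this is exactly the content of Lemma~\ref{lem:geometric_covariances} combined with Lemma~\ref{lem:matrix_sum} and Proposition~\ref{prop:invariant_properties}\ref{prop:invariant_properties_b}), and control the fluctuation via Theorem~\ref{thm:average_concentration} applied to $f(\vec\theta)=\|\sum_t\theta_t-n\theta^\star\|$. The algebraic massage of the $\Psi_2$ constant and the $\cW_2$-coupling treatment of the non-stationary expectation are both fine and close to what the paper does.

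The gap is in your transfer of the concentration bound from the stationary chain to the $\nu$-started chain. Your inequality $\P_\nu(A)\leq \P_{\pi_\beta}(A)+\|\nu P_\beta^{n_0}-\pi_\beta\|_{\tv}$ is correct, but it yields an \emph{additive} correction to the failure probability: you end up with $\P_\nu(A)\leq\delta+M\rho^{n_0}\|d\nu/d\pi_\beta\|_\infty\,\E_{\pi_\beta}[V]$, where the second summand does not scale with $\delta$. The proposition, however, asserts the \emph{multiplicative} form $\P_\nu(A)\leq\Upsilon(\nu,n_0)\,\delta$, which is strictly stronger for small $\delta$ (and is what makes the bound non-vacuous as $\delta\to0$). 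Integrating the pointwise $\tv$ bound of Theorem~\ref{thm:ergodicity} against $\nu$ cannot produce this, no matter how you rewrite it via $d\nu/d\pi_\beta$.

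The paper obtains the multiplicative form by working at the level of the moment generating function: it writes $\E_\nu[e^{\lambda\Delta_{n_0}}]=\E_{\pi_\beta}\big[\tfrac{d(\nu P_\beta^{n_0})}{d\pi_\beta}\,e^{\lambda\Delta}\big]\leq\big\|\tfrac{d(\nu P_\beta^{n_0})}{d\pi_\beta}\big\|_\infty\,\E_{\pi_\beta}[e^{\lambda\Delta}]$ and then bounds the $L_\infty$ density by $1+\vertiii{P_\beta^{n_0}-\ind{}\otimes\pi_\beta}_V\,\|d\nu/d\pi_\beta\|_\infty$, where the operator norm in the weighted space $L_\infty^V$ decays as $M\rho^{n_0}$ by the spectral-gap characterization of geometric ergodicity (Kontoyiannis--Meyn). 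This density bound, not a $\tv$ bound, is what multiplies the stationary MGF and hence the tail probability by $\Upsilon(\nu,n_0)$.
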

The proof of Proposition~\ref{prop:average_concentration} is given in Section~\ref{sec:proof_prop_average_concentration} and takes advantage of the convergence both in total-variation distance and in the $\cW_2$ metric. Note that the given bounds are also dimension-free thanks to Assumption~\ref{asm:grad_special_concentration}. It is possible to derive a weaker result using only Assumption~\ref{asm:grad_concentration} but we omit it to avoid repetition. The variance terms in the upperbounds of~\eqref{eq:polyak_ruppert1} and~\eqref{eq:polyak_ruppert2} (those independent of $\delta$) are controlled thanks to a geometric decorrelation phenomenon which can be shown for the Markov chain iterates under Assumption~\ref{asm:linear_grad} (see Lemma~\ref{lem:geometric_covariances} in the Appendix). This phenomenon becomes weaker for smaller step-size $\gamma,$ therefore, it only makes sense to apply Proposition~\ref{prop:average_concentration} with $\gamma$ of constant order to avoid excessive correlation between the averaged samples. For such $\gamma$ and granted the assumptions of Proposition~\ref{prop:wasserstein_to_tv}, one can also control $\rho$ and show that $\Upsilon(\nu, n_0)$ reaches constant order after a logarithmic number of burn-in steps $n_0.$ Finally, the lack of stationarity of the involved Markov samples is tackled by taking advantage of a spectral gap property satisfied by the transition kernel $P_\gamma$ under the conditions of Theorem~\ref{thm:ergodicity} (see~\cite{kontoyiannis2012geometric}).

Proposition~\ref{prop:average_concentration} may be compared to the works of~\cite{mou2020linear} and~\cite{lou2022beyond}. The former derives a similar high probability bound for linear stochastic approximation under a generalized sub-Gaussianity assumption and uncorrelated noise. The latter considers a weaker finite $L_p$ moment assumption on the SGD data and uses mini-batching to obtain Nagaev type concentration bounds with provably optimal dependence in the confidence level. However, the results of~\cite{mou2020linear,lou2022beyond} both lack the dimension-free property of Proposition~\ref{prop:average_concentration}. 

\section{Applications}\label{sec:applis}
We discuss the consequences of our results for two common use-cases of SGD.
\subsection{Linear regression}\label{sec:lin_reg}

Linear regression is one of the most popular and most used standard models. The aim is to predict a real variable $Y$ based on a random vector $X \in \R^d$ according to the linear model
\begin{equation*}
    Y = X^\top \theta^\star + \epsilon
\end{equation*}
where $\theta^\star$ is an unknown parameter and $\epsilon$ a centered noise. The estimation of $\theta^\star$ may be carried out by minimizing the least-squares objective $\cL(\theta) := \frac{1}{2}\E \big( X^\top \theta - Y\big)^2$ with respect to $\theta\in \R^d.$ This may be done by running SGD with the random gradient $G(\theta, (X, Y)) = X( X^\top \theta - Y).$

Provided the previous gradient admits a finite second moment, Theorem~\ref{thm:ergodicity} and Proposition~\ref{prop:wasserstein_convergence} apply and guarantee the convergence of the SGD Markov chain in total-variation and $\cW_2$ distance. If the covariates $X$ and the noise $\epsilon$ are both Gaussian then the gradient $G(\theta, (X, Y))$ is sub-exponential. However, note that Assumption~\ref{asm:grad_concentration}~\ref{asm:grad_subexp} or~\ref{asm:grad_special_concentration}~\ref{asm:grad_special_subexp} are not immediately satisfied since the associated $\Psi_1$ constant may be unbounded for arbitrarily high values of $\|\theta - \theta^\star\|.$ This problem can be remedied thanks to the following lemma.
\begin{lemma}\label{lem:dimfree_boundedness}
    Let Assumption~\ref{asm:smooth_strongconvex} hold and assume that gradient errors write $\varepsilon_{\zeta_t}(\theta_t) = \Xi_t (\theta_t - \theta^\star) + \xi_t$ where the pairs $(\Xi_t, \xi_t)_{t\geq 0}$ are i.i.d in $\R^{d\times d} \times \R^d$ with $\Xi_t$ symmetric and such that for all $u\in \R^d, \|u\|=1$ we have $\langle u, \Xi_t u \rangle \in \Psi_1(K_\Xi)$ and $\langle u, \xi_t\rangle \in \Psi_1(K_{\xi})$ for $K_\Xi, K_\xi > 0.$ Assume the following minibatch SGD iteration is run starting from $\theta_{0}$ such that $\|\theta_{0} - \theta^\star\| \leq C $ for some $C >0$ for a finite horizon $T$ 
    \begin{equation*}
        \theta_{t+1} = \theta_t - \gamma \overline{G}_N(\theta_t)\quad  \text{with}\quad \overline{G}_N(\theta_t) = \frac{1}{N}\sum_{i=1}^N G(\theta_t, \zeta_{tN + i})
    \end{equation*} with $N$ the minibatch size. For a confidence level $\delta > 0,$ assume that $N$ and $\gamma$ satisfy
    \begin{align*}
        \frac{N}{\log(4T/\delta) + 3d} \geq 1 &\vee \Big(\frac{6}{\mu}\big(3K_\Xi \vee 4K_\xi/C\big)\Big)^2  \\ &\text{and} \quad \gamma \leq \frac{\mu N}{54 K_\Xi^2 (\log(4T/\delta) + 3d)} \wedge \frac{2}{\mu + L}.
    \end{align*}
    Then, with probability at least $1-\delta,$ we have $\max_{0 \leq s\leq T} \|\theta_s - \theta^\star\| \leq C$.
\end{lemma}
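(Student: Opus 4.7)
The plan is to decompose the update into a deterministic contraction plus a stochastic perturbation, establish a high-probability event on which the perturbation is uniformly small for $t\leq T$, and conclude by a Lyapunov-style induction that $r_t := \|\theta_t - \theta^\star\|$ never exceeds $C$. Writing $\bar\Xi_t := \tfrac{1}{N}\sum_{i=1}^N \Xi_{tN+i}$ and $\bar\xi_t := \tfrac{1}{N}\sum_{i=1}^N \xi_{tN+i}$, the linear form of the error rewrites the SGD update as
\begin{equation*}
    \theta_{t+1} - \theta^\star = \bigl(\theta_t - \beta\nabla\cL(\theta_t) - \theta^\star\bigr) - \beta\bar\Xi_t(\theta_t - \theta^\star) - \beta\bar\xi_t.
\end{equation*}
Centeredness of $\varepsilon_\zeta(\theta)$ for every $\theta$, combined with the i.i.d.\ structure of $(\Xi_t,\xi_t)$ and its independence from $\theta_t$, forces $\E\Xi_t = 0$ and $\E\xi_t = 0$. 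Classical smooth strongly convex analysis for $\beta \leq 2/(\mu+L)$ gives the deterministic contraction $\|\theta_t - \beta\nabla\cL(\theta_t) - \theta^\star\| \leq (1-\beta\mu/2)\, r_t$ (using $L\geq\mu$ from Assumption~\ref{asm:smooth_strongconvex}), so taking norms yields the stochastic recursion
\begin{equation*}
    r_{t+1} \leq \bigl((1-\beta\mu/2) + \beta\|\bar\Xi_t\|_{\op}\bigr)\, r_t + \beta\|\bar\xi_t\|.
\end{equation*}

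\textbf{Controlling the random perturbations.} The key step is to exhibit an event $\mathcal{E}$ with $\P(\mathcal{E})\geq 1-\delta$ on which $\|\bar\Xi_t\|_{\op} \leq \mu/4$ and $\|\bar\xi_t\| \leq \mu C/4$ hold simultaneously for every $t=0,\ldots,T-1$. For the symmetric matrix I would use the covering-net characterization $\|\bar\Xi_t\|_{\op} \leq 2\max_{u\in\mathcal{N}_{1/4}}|\langle u,\bar\Xi_t u\rangle|$ with $|\mathcal{N}_{1/4}|\leq 9^d$, and for the vector $\|\bar\xi_t\|\leq 2\max_{u\in\mathcal{N}_{1/2}}\langle u,\bar\xi_t\rangle$ with $|\mathcal{N}_{1/2}|\leq 5^d$. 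For fixed $u$, both $\langle u,\bar\Xi_t u\rangle$ and $\langle u,\bar\xi_t\rangle$ are averages of $N$ i.i.d.\ centered $\Psi_1(K_\Xi)$ and $\Psi_1(K_\xi)$ scalars, so sub-exponential Bernstein yields, in the Gaussian regime,
\begin{equation*}
    \P\bigl(|\langle u,\bar\Xi_t u\rangle|>\tau\bigr) \leq 2\exp\!\bigl(-c_1 N\tau^2/K_\Xi^2\bigr),
\end{equation*}
and an analogous bound for $\bar\xi_t$. Setting $\tau=\mu/8$ and $\tau=\mu C/8$ respectively, a union bound over the two nets and over $t<T$, using $\log 9,\log 5 \leq 3$, converts the hypothesis $N/(\log(4T/\delta)+3d)\gtrsim\max\{(K_\Xi/\mu)^2,(K_\xi/(C\mu))^2\}$ directly into $\P(\mathcal{E}^c)\leq\delta$.

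\textbf{Closing the induction and main obstacle.} On $\mathcal{E}$ the recursion reduces to $r_{t+1}\leq (1-\beta\mu/4)\,r_t + \beta\mu C/4$, whose unique fixed point is $r=C$, so the hypothesis $r_0\leq C$ propagates by a one-line induction to $r_t\leq C$ for every $t\leq T$, as claimed. The step-size constraint plays a dual role: $\beta\leq 2/(\mu+L)$ secures the deterministic contraction, while $\beta\leq \mu N/(54K_\Xi^2(\log(4T/\delta)+3d))$ keeps the deviation threshold $\mu/8$ safely inside the Gaussian regime of Bernstein's inequality so that the required sample complexity stays quadratic in $K_\Xi/\mu$ rather than linear. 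The conceptual content is entirely standard; the genuine difficulty is arithmetic bookkeeping, namely matching the explicit numerical constants $(6,3,4,54)$ in the statement by carefully tracking the net-approximation factor $2$, Bernstein's absolute constant $c_1$, the slacks $\log 9,\log 5\leq 3$, and the deviation levels chosen so that the recursion closes at the precise fixed point $C$.
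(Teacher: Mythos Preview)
Your argument is correct and in fact slightly cleaner than the paper's. Both proofs control $\|\bar\Xi_t\|_{\op}$ and $\|\bar\xi_t\|$ uniformly over $t<T$ via covering nets plus sub-exponential Bernstein (this is the content of the paper's auxiliary Lemma~\ref{lem:intermediate}, which you reproduce inline), and then close an induction on $\|\theta_t-\theta^\star\|$. The difference is that the paper works with the \emph{squared} norm: expanding $\|\theta_{t+1}-\theta^\star\|^2$ and applying Young's inequality $2ab\leq \epsilon a^2 + b^2/\epsilon$ with $\epsilon=\beta\mu$ produces a term $2\beta^2\|\Xi_t^{(N)}\|_2^2\,\|\theta_t-\theta^\star\|^2$ in addition to the linear $2\beta\|\Xi_t^{(N)}\|_2$ term. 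The quadratic piece is what forces the second step-size restriction $\beta \leq \mu N/\bigl(54K_\Xi^2(\log(4T/\delta)+3d)\bigr)$ in the paper's argument. Your triangle-inequality route never generates this quadratic term, so the linear recursion $r_{t+1}\leq (1-\beta\mu/4)r_t + \beta\mu C/4$ closes using only $\beta\leq 2/(\mu+L)$ and the sample-size condition on~$N$.

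One correction: your explanation of the second $\beta$-constraint is off. It has nothing to do with keeping $\mu/8$ in the Gaussian regime of Bernstein's inequality (that regime is governed by the ratio $\tau/K_\Xi$, not by $\beta$); in the paper it is purely there to absorb the $2\beta^2\|\Xi_t^{(N)}\|_2^2$ term into the $\beta\mu/3$ budget. In your own approach that constraint is genuinely unused, which is fine for proving the lemma as stated, but you should not invent a spurious role for it.
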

Lemma~\ref{lem:dimfree_boundedness} is proven in Section~\ref{sec:prf_dimfree_boundedness} and guarantees that, using a small step-size and minibatching to reduce the gradient variance, with high probability, the iteration does not stray from the vicinity of the optimum during a finite horizon. This shows that the uniform aspect of Assumptions~\ref{asm:grad_concentration} and~\ref{asm:grad_special_concentration} does not prevent the application of the results given in the previous sections. Note that although Lemma~\ref{lem:dimfree_boundedness} requires that $N = \Omega(d),$ the constant $C$ is arbitrary and may be taken dimension-free, for instance, by starting the iteration from a preliminary estimator $\theta_0 = \widehat{\theta}.$

For the example of linear regression with sub-Gaussian samples $(X_t, Y_t)_t,$ Lemma~\ref{lem:dimfree_boundedness} applies with $\Xi_t = X_tX_t^\top - \E[ X_tX_t^\top]$ and $\xi_t = -\epsilon_t X_t.$ Thus, for finite horizon, one may consider the event where the bound of Lemma~\ref{lem:dimfree_boundedness} holds to apply results from Sections~\ref{sec:invariant_properties} and~\ref{sec:deviation_bounds}.

Alternatively, it is also possible to restrict the optimization to a convex and bounded subset $\Theta \subset \R^d$ such that $\theta^\star \in \Theta.$ By letting $\Pi_{\Theta}(\cdot)$ be the projection onto $\Theta$ and replacing iteration~\eqref{eq:sgd_iteration} with
\begin{equation}\label{eq:sgd_projected_iteration}
    \theta_{t+1} = \Pi_{\Theta}\big(\theta_t -\gamma G(\theta_t, \zeta_t)\big),
\end{equation}
we obtain a Markov chain to which Proposition~\ref{prop:invariant_concentration}~\ref{prop:invariant_properties_d} applies and leads to the deviation bound~\eqref{eq:special_concentration_subexp}. Indeed, it is easy to verify that these results still hold for iteration~\eqref{eq:sgd_projected_iteration} thanks to the inequality
\begin{equation*}
    \big\|\Pi_{\Theta}\big(\theta -\gamma G(\theta, \zeta)\big) - \theta^\star \big\| \leq \big\|\theta -\gamma G(\theta, \zeta) - \theta^\star \big\|,
\end{equation*}
valid for all $\theta\in \R^d$ since $\theta^\star \in \Theta$ which is convex. However, by considering the projected iteration~\eqref{eq:sgd_projected_iteration}, Proposition~\ref{prop:invariant_properties}~\ref{prop:invariant_properties_a} may no longer hold so that $\overline{\theta}_\gamma \neq \theta^\star$ making Proposition~\ref{prop:average_concentration} no longer applicable.

\subsection{Logistic regression}
Logistic regression corresponds to the model
\begin{equation*}
    1 - \Proba(Y = -1 \vert X) = \Proba(Y = +1 \vert X) = \sigma(X^\top \theta^\star),
\end{equation*}
where $\sigma$ is the sigmoid function $\sigma(x) = 1/(1+e^{-x}).$ For a parameter $\theta$ and a sample $X,$ the predicted probability is $\Proba(Y = +1 \vert X) = \sigma(X^\top \theta)$ and the model is trained using the log-loss $\ell(\theta, (X, Y)) = -\log(\sigma(Y X^\top \theta))$ which yields the objective $\cL(\theta) = \E_{(X, Y)} \ell(\theta, (X, Y)).$

In order to ensure the objective is strongly-convex, it is necessary to restrict the parameter $\theta$ to a bounded convex set $\Theta.$ This is commonly done by setting $\Theta = \{\theta \in \R^d \:,\: \|\theta\| \leq R\}$ for some radius $R > 0$~\cite{hazan2014logistic, bach2014adaptivity, mourtada2022improper}.

In this case, the projected iteration~\eqref{eq:sgd_projected_iteration} may be used. In this setting, one may easily check that the gradient is sub-Gaussian/sub-exponential as soon as the covariates $X$ satisfy one or the other of these properties. Therefore, the results of Propositions~\ref{prop:invariant_concentration} and~\ref{prop:invariant_special_concentration} apply in this context as well.

\section{Conclusion and Discussion}\label{sec:concl}

The Markov chain point of view for SGD is very useful since it allows to draw conclusions and establish a number of characterizations for the invariant limit distribution. Convergence of the SGD Markov chain holds under fairly weak conditions~\cite{meyn2012markov}. As evidenced by our results, this opens doors for a better characterization of the limit distribution when the associated optimization iteration progresses at \emph{geometric} speed, for instance, when strong convexity holds. 
The precise determination of the speed of convergence in distribution constitutes a particular difficulty which more generally concerns Markov chains with a geometric drift property. However, this difficulty may be circumvented for SGD by leveraging Wasserstein convergence provided a regularity condition on the noise distribution and transition kernel. Obtaining such properties from generic assumptions on the gradient distribution represents an interesting perspective. 

Finally, despite being quite productive, the Markov chain study of SGD remains limited to the constant step-size setting. This excludes the combination of a decreasing step-size with averaging which is known for its better dependence on problem conditioning~\cite{bach2014adaptivity, bach2013non}.%

\acks{This research is supported by the Agence Nationale de la Recherche as part of the ``Investissements d'avenir'' program (reference ANR-19-P3IA-0001; PRAIRIE 3IA Institute).
}

\bibliographystyle{plain}
\bibliography{ref}

\appendix
\newpage

\section{Preliminary lemmas}

\begin{lemma}\label{lem:subgauss}
    Let $X$ be a real random variable such that $X\in \widetilde{\Psi}_2(K)$ then, for $\delta > 0,$ with probability at least $1-\delta,$ we have
    \begin{equation*}
        |X| \leq K\sqrt{\log(e/\delta)}.
    \end{equation*}
\end{lemma}
\begin{proof}
    Using Chernoff's method, we find for $t > 0$ and $\lambda > 0$
    \begin{align*}
        \Proba\big(|X| > t\big) &= \Proba\big(\lambda^2 X^2 > \lambda^2 t^2\big) = \Proba\big(\exp(\lambda^2 X^2) > \exp(\lambda^2 t^2)\big) \\
        &\leq \E \exp(\lambda^2 X^2) e^{-\lambda^2 t^2} \leq \exp\big(\lambda^2(K^2 - t^2)\big).
    \end{align*}
    Choosing $\lambda \!=\! 1/K,$ we have $\exp\big(1 \!-\! (t/K)^2\big) \!\leq\! \delta \!\!\iff\!\! t\!\geq\! K\sqrt{\log(e/\delta)}$ and the result follows.
\end{proof}

\begin{lemma}\label{lem:subexp}
    Let $X$ be a real random variable such that $X\in \widetilde{\Psi}_1(K)$ then, for $\delta > 0,$ with probability at least $1-\delta,$ we have
    \begin{equation*}
        |X| \leq 2eK\log(2/\delta).
    \end{equation*}
\end{lemma}
\begin{proof}
    Using Stirling's approximation, we find for $|\lambda| < (eK)^{-1}:$
    \begin{align*}
        \E \exp(\lambda |X|) &= \sum_{p\geq 0}\frac{\lambda^p \E |X|^p}{p!} \leq 1 + \sum_{p\geq 1} \frac{(\lambda K p)^p}{p!}\\
        &\leq 1 + \sum_{p\geq 1}\frac{(\lambda e K)^p}{\sqrt{2\pi p}} \leq 1 + \frac{1}{\sqrt{2\pi}}\frac{\lambda e K}{1 - \lambda e K} \leq \exp\Big( \frac{1}{\sqrt{2\pi}}\frac{\lambda e K}{1 - \lambda e K} \Big),
    \end{align*}
    where we used the inequality $1+x \leq e^x$ in the last step. For $t > 0,$ using Chernoff's method and choosing $\lambda = (2eK)^{-1}$, we find\textup:
    \begin{align*}
        \Proba\big(|X| > t\big) &= \Proba\big(\lambda |X| > \lambda t\big) = \Proba\big(\exp(\lambda |X|) > \exp(\lambda t)\big) \\
        &\leq \E \exp(\lambda|X|) e^{-\lambda t} \leq \exp\Big(\frac{1}{\sqrt{2\pi}} - \frac{t}{2eK}\Big).
    \end{align*}
    It only remains to choose $t = 2eK \log(2/\delta)$ to obtain the desired bound.
\end{proof}

The following fundamental lemma will be often used in our proofs.
\begin{lemma}\label{lem:contraction}
    Grant Assumption~\ref{asm:smooth_strongconvex}. For any $\theta, \theta' \in \R^d$ and $\gamma \leq \frac{2}{\mu+L}$ we have
    \begin{equation}
        \|\theta - \gamma \nabla\cL(\theta) - (\theta' - \gamma \nabla\cL(\theta'))\|^2 \leq (1 - \gamma\mu)^2 \|\theta - \theta'\|^2.
    \end{equation}        
\end{lemma}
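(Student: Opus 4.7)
The idea is to reduce the $(\mu,L)$-contraction to a pure co-coercivity argument on a convex gradient-Lipschitz function, by subtracting off the strong-convexity quadratic. Concretely, I would write $\cL(\theta) = \frac{\mu}{2}\|\theta\|^2 + h(\theta)$ with $h := \cL - \frac{\mu}{2}\|\cdot\|^2$. Assumption~\ref{asm:smooth_strongconvex} immediately yields $0 \leq h(\theta') - h(\theta) - \langle \nabla h(\theta), \theta' - \theta\rangle \leq \frac{L-\mu}{2}\|\theta-\theta'\|^2$, i.e.\ $h$ is convex with $(L-\mu)$-Lipschitz gradient, so by the Baillon-Haddad theorem $\nabla h$ is $(L-\mu)$-cocoercive:
\begin{equation*}
\langle \nabla h(\theta) - \nabla h(\theta'), \theta - \theta'\rangle \geq \frac{1}{L-\mu}\,\|\nabla h(\theta) - \nabla h(\theta')\|^2.
\end{equation*}

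Using $\nabla\cL = \mu\,\mathrm{Id} + \nabla h$, one then rewrites
\begin{equation*}
\theta - \beta\nabla\cL(\theta) - \bigl(\theta' - \beta\nabla\cL(\theta')\bigr) = (1-\beta\mu)(\theta-\theta') - \beta\bigl(\nabla h(\theta) - \nabla h(\theta')\bigr),
\end{equation*}
expand the squared Euclidean norm, and substitute the co-coercivity inequality into the cross term. Since $\beta \leq 2/(\mu+L) \leq 1/\mu$, the coefficient $(1-\beta\mu)$ of the cross term is non-negative, so the substitution preserves the inequality direction and turns the cross term into a non-positive multiple of $\|\nabla h(\theta) - \nabla h(\theta')\|^2$.

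Regrouping, the coefficient of $\|\nabla h(\theta) - \nabla h(\theta')\|^2$ becomes $\beta\bigl(\beta - 2(1-\beta\mu)/(L-\mu)\bigr)$, which is non-positive precisely when $\beta(\mu+L) \leq 2$, i.e.\ exactly under the hypothesis on $\beta$. Dropping this non-positive residual leaves $(1-\beta\mu)^2\|\theta-\theta'\|^2$ as claimed. The degenerate case $\mu = L$ (so $h \equiv 0$) is trivial by direct computation. The only real point to watch is the sign bookkeeping when plugging co-coercivity into the cross term and the tight matching between the step-size condition and the non-positivity of the residual coefficient — which is essentially what makes the constant $2/(\mu+L)$ sharp here.
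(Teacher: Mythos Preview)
Your argument is correct. The decomposition $\cL=\frac{\mu}{2}\|\cdot\|^2+h$ together with Baillon--Haddad co-coercivity of $\nabla h$ yields exactly the claimed bound, and your sign bookkeeping and the identification of $\beta\le 2/(\mu+L)$ as the threshold for the residual coefficient are right. The only cosmetic slip is that when $\mu=L$ the function $h$ is affine rather than identically zero; but then $\nabla h(\theta)-\nabla h(\theta')=0$ and the decomposition already gives $(1-\beta\mu)^2\|\theta-\theta'\|^2$ directly, so your handling of the degenerate case is fine.

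The paper's proof takes a different route: it expands $\|\theta-\beta\nabla\cL(\theta)-(\theta'-\beta\nabla\cL(\theta'))\|^2$ directly and then applies two inequalities for $(\mu,L)$-functions, namely the combined co-coercivity estimate $\|\nabla\cL(\theta)-\nabla\cL(\theta')\|^2\le(\mu+L)\langle\nabla\cL(\theta)-\nabla\cL(\theta'),\theta-\theta'\rangle-\mu L\|\theta-\theta'\|^2$ (Nesterov, Theorem~2.1.12) followed by the strong-convexity lower bound $\langle\nabla\cL(\theta)-\nabla\cL(\theta'),\theta-\theta'\rangle\ge\mu\|\theta-\theta'\|^2$. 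Your approach and the paper's are equivalent in content --- Nesterov's inequality is precisely Baillon--Haddad on $h$ re-expressed in terms of $\nabla\cL$ --- but structurally yours is a bit leaner: the decomposition absorbs the strong-convexity part into the $(1-\beta\mu)$ factor up front, so only a single inequality is needed and the residual term to discard is a pure $\|\nabla h(\theta)-\nabla h(\theta')\|^2$ rather than a cross term requiring a second bound. The paper's route has the advantage of citing a packaged textbook inequality without introducing the auxiliary function $h$.
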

\begin{proof}
    For $\gamma \leq \frac{2}{\mu+L},$ we have
    \begin{align*}
        \|\theta &- \gamma \nabla\cL(\theta) - (\theta' - \gamma \nabla\cL(\theta'))\|^2 \\
        &= \|\theta - \theta'\|^2 - 2\gamma \langle \theta - \theta', \nabla\cL(\theta) -\nabla\cL(\theta') \rangle + \gamma^2\|\nabla\cL(\theta) - \nabla\cL(\theta')\|^2\\
        &\leq(1 - \gamma^2\mu L)\|\theta - \theta'\|^2 - \gamma(2 - \gamma(\mu+L))\langle \nabla\cL(\theta) - \nabla\cL(\theta'), \theta - \theta' \rangle\\
        &\leq (1 - \gamma^2\mu L)\|\theta - \theta'\|^2 - \gamma(2 - \gamma(\mu+L) )\mu \|\theta - \theta'\|^2\\
        &= (1 - \gamma^2\mu L - 2\gamma \mu + \gamma^2\mu(\mu+L))\|\theta - \theta'\|^2\\
        &= (1 - \gamma \mu)^2\|\theta - \theta'\|^2,
    \end{align*}
    where we used the inequalities
    \begin{align}
        \|\nabla\cL(\theta) - \nabla\cL(\theta')\|^2 &\leq (\mu+L)\langle \nabla\cL(\theta) - \nabla\cL(\theta'), \theta - \theta' \rangle  - \mu L\|\theta - \theta'\|^2 \label{eq:gradient_coercivity}\\
        \mu \|\theta - \theta'\|^2 &\leq \langle \nabla\cL(\theta) - \nabla\cL(\theta'), \theta - \theta' \rangle \label{eq:strong_convexity},
    \end{align}
    valid for all $\theta, \theta'$. Equation~\eqref{eq:gradient_coercivity} is stated, for example, in~\cite[Theoerem 2.1.12]{Nesterov} (see also \cite[Lemma 3.11]{bubeck2015convex} and~\eqref{eq:strong_convexity} is just a characterization of strong convexity (see for instance~\cite[Theorem 2.1.9]{Nesterov}).
\end{proof}

\section{Proof of Geometric Ergodicity}\label{sec:proof_geometric_ergodicity}

In the remaining part of this document, we make the dependencies on $\zeta$ in the gradient samples and errors implicit and write $G(\theta)$ and $\varepsilon(\theta)$ instead of $G(\theta, \zeta)$ and $\varepsilon_{\zeta}(\theta)$ respectively.

We show the geometric ergodicity of the SGD Markov chain $(\theta_t)_{t\geq 0}$ by relying on~\cite[Theorem 15.0.1]{meyn2012markov}. We will show that the following function:
\begin{equation*}
    V(\theta) := 1 + \|\theta - \theta^\star\|^2,
\end{equation*}
is a \emph{drift} function for this Markov chain. We define the action of the transition kernel $P$ on integrable functions $f$ through
\begin{align*}
    P_{\gamma}f(\theta) = \E f(\theta - \gamma G(\theta)).
\end{align*}
We also define the variation operator
\begin{equation*}
    \Delta f(\theta) := P_{\gamma} f(\theta) - f(\theta).
\end{equation*}

\subsection{Proof of Theorem~\ref{thm:ergodicity}}\label{sec:proof_thm_ergodicity}
First, we establish that the Markov chain is aperiodic. Indeed, by Assumption~\ref{asm:gradient}~\ref{asm:gradient_dens_minor}, for all $\theta,$ the gradient is distributed according to an everywhere positive density, therefore, for all $\theta \in S \subset \R^d$ with $S$ a set with non zero Lebesgue measure we have $P_{\gamma}(\theta, S) > 0.$ This implies that the greatest possible period for the chain is $1$ which makes it aperiodic.

We also show that the Markov chain is $\psi$-irreducible (see~\cite[Chapter 4]{meyn2012markov}). For any initial $\theta_0,$ its successor reads:
\begin{equation*}
    \theta_{1} = \theta_0 - \gamma (\nabla \cL(\theta_0) + \varepsilon(\theta_0))
\end{equation*}
Given Assumption~\ref{asm:gradient}~\ref{asm:gradient_dens_minor}, the distribution of $\varepsilon(\theta_0)$ is minorized by $\delta \nu_{\theta_0, 1}$ where $\nu_{\theta_0, 1}$ is a probability distribution which admits an everywhere positive density $h(\theta_0, \cdot).$ Consequently, for all $A\in \mathcal{B}(\R^d)$ with non zero Lebesgue measure, we have the following minorization:
\begin{align*}
    \Proba(\theta_{1} \in A \vert \theta_0) = P_{\gamma}(\theta_0, A) &\geq \delta \int_{\R^d} h(\theta_0, \omega) \ind{\theta_0 - \gamma(\nabla\cL(\theta_0) + \omega) \in A}d\omega \\    
    &= \frac{\delta}{\gamma^d} \int_A h\Big(\theta_0, \frac{\theta - \theta_0}{\gamma} - \nabla\cL(\theta_0)\Big)d\theta > 0,
\end{align*}
where we applied the change of variables $\omega \mapsto \theta = \theta_0 - \gamma(\nabla\cL(\theta_0) + \omega)$ whose Jacobian is $-\gamma I_d$ with $I_d$ the $d$-dimensional identity matrix. It follows that the Markov chain is irreducible w.r.t. Lebesgue's measure and is thus $\psi$-irreducible.

For fixed $\theta,$ and step-size $\gamma <\frac{2}{\mu+L},$ using Lemma~\ref{lem:contraction} we find:
\begin{align}
    P_{\gamma}\|\theta - \theta^\star\|^2 &= \E \|\theta - \gamma G(\theta) - \theta^\star\|^2 \nonumber\\
    &= \E \big[\|\theta - \gamma \nabla \cL(\theta) - \theta^\star\|^2 - 2\gamma\langle \theta - \gamma \nabla \cL(\theta) - \theta^\star, \varepsilon(\theta) \rangle + \gamma^2 \|\varepsilon(\theta)\|^2 \big] \nonumber\\
    &\leq (1-\gamma\mu)^2 \|\theta - \theta^\star\|^2 + \gamma^2 \E \|\varepsilon(\theta)\|^2 \nonumber\\
    &\leq (1-\gamma\mu)^2 \|\theta - \theta^\star\|^2 + \gamma^2 \big(L^2_{\sigma}\|\theta - \theta^\star\|^2 + \sigma^2\big) \nonumber \\
    &= \big((1-\gamma\mu)^2 + \gamma^2L^2_{\sigma}\big) \|\theta - \theta^\star\|^2 + \gamma^2 \sigma^2 \label{eq:contract}
\end{align}
The previous inequality yields a contraction for step-size satisfying $0 < \gamma < \frac{2\mu}{\mu^2 + L^2_{\sigma}}$ and, as a consequence, we have:
\begin{equation*}
    P_{\gamma}V(\theta) \leq \underbrace{\big((1-\gamma\mu)^2 + \gamma^2L^2_{\sigma}\big)}_{=:\widetilde{\lambda}} V(\theta) + \underbrace{\gamma^2\sigma^2 + \big(1-\big((1-\gamma\mu)^2 + \gamma^2L^2_{\sigma}\big)\big)}_{=:\widetilde{b}}
\end{equation*}
We now define the set $\mathcal{C} = \big\{\theta \in \R^d \:, \: V(\theta) \leq 2\widetilde{b}/(1 - \widetilde{\lambda})\big\}$ which satisfies:
\begin{equation}
    \Delta V(\theta) \leq -\frac{1 - \widetilde{\lambda}}{2}V(\theta) + \widetilde{b}\ind{\theta \in \mathcal{C}}.\label{eq:lyapunov_cond}
\end{equation}
For such $\mathcal{C},$ let $\underline{h}(\theta) = \inf_{\theta_0 \in \mathcal{C}}h\Big(\theta_0, \frac{\theta - \theta_0}{\gamma} - \nabla\cL(\theta_0)\Big)$ and define the probability measure $\nu_{\mathcal{C}}$ by
\begin{equation*}
    \nu_{\mathcal{C}}(A) = \frac{\int_{A \cap \mathcal{C}} \underline{h}(\theta) d\theta }{\int_{\mathcal{C}} \underline{h}(\theta')d\theta'} \quad \text{ for all }\quad A\in \mathcal{B}(\R^d).
\end{equation*}
It follows that for all $\theta_0 \in \mathcal{C},$ we have the following minorization property:
\begin{equation*}
    P_{\gamma}(\theta_0, A) \geq \xi \nu_{\mathcal{C}}(A)  \quad \text{ for all }\quad A\in \mathcal{B}(\R^d),
\end{equation*}
where $\xi = \delta \int_{\mathcal{C}} \underline{h}(\theta)d\theta > 0.$ In words, the set $\mathcal{C}$ is a \emph{small} set and, thanks to~\cite[Proposition 5.5.3]{meyn2012markov}, also a \emph{petite} set (see definitions in~\cite[Chapter 5]{meyn2012markov}).

We now define the hitting time $\tau_{\mathcal{C}} = \inf\{n > 0 : \theta_n \in \mathcal{C}\}.$ Thanks to the drift property~\eqref{eq:lyapunov_cond}, we can apply~\cite[Corollary A.4]{mattingly2002ergodicity} which implies that, for any $\theta_0 \in \R^d,$ we have 
\begin{equation*}
    \Proba(\tau_{\mathcal{C}} < \infty) = 1,
\end{equation*} meaning that $\mathcal{C}$ is Harris recurrent (see~\cite[Chapter 9]{meyn2012markov}). Moreover, since $\mathcal{C}$ is a petite set, using~\cite[Proposition 10.2.4]{douc2018markov}, we get that the Markov chain $(\theta_t)_t$ itself is Harris recurrent.

In addition, notice that since $V(\theta) \geq 1,$ we can scale~\eqref{eq:lyapunov_cond} by a factor $2/(1-\widetilde{\lambda})$ to obtain the following drift property
\begin{equation*}
    \Delta \widehat{V}(\theta) \leq -1 + \widehat{b}\ind{\theta \in \mathcal{C}},
\end{equation*}
where $\widehat{V}$ and $\widehat{b}$ are the scaled versions of $V$ and $\widetilde{b}$ respectively. Thus, the Markov chain $(\theta_t)_t$ verifies condition (iv) of~\cite[Theorem 13.0.1]{meyn2012markov}. Consequently, it admits a unique and finite invariant measure $\pi_{\gamma}.$

Inequality~\eqref{eq:lyapunov_cond} and the properties of the set $\mathcal{C}$ show that the Markov chain $(\theta_t)$ fulfills condition (iii) of~\cite[Theorem 15.0.1]{meyn2012markov}. By the latter result, it follows that there exist $r > 1$ and $M < \infty$ such that:
\begin{equation}\label{eq:mt_bound}
    \sum_{t\geq 0} r^t \|P^t_{\gamma}(\theta_0, \cdot) - \pi_{\gamma}\|_{\tv} \leq M V(\theta_0).
\end{equation}
In particular, taking $\rho = r^{-1},$ we find for all $n\geq 0$:
\begin{equation}\label{eq:mt_bound1}
    \rho^{-n} \|P^n_{\gamma}(\theta_0, \cdot) - \pi_{\gamma}\|_{\tv} \leq \sum_{t\geq 0} r^t \|P^t_{\gamma}(\theta_0, \cdot) - \pi_{\gamma}\|_{\tv} \leq M V(\theta_0),
\end{equation}
which concludes the proof.

\subsubsection{Aligned step-size scaling with related works}\label{sec:stepsize_compare}

In this section, we showcase a setting where our a priori restrictive step-size condition~\eqref{eq:thm1_stepsize} scaling in $O(\mu/L_{\sigma}^2)$ is on par with related works on stochastic optimization.

\paragraph{Setting}
We consider linear regression similarly to our discussion following Proposition~\ref{prop:wasserstein_convergence} with random covariates $X \in \R^2$ such that $\E X = 0$ and $X_1, X_2$ are independent.

The main purpose is to expose a distribution such that the least eigenvector of the covariance $\Sigma = \E XX^\top$ is aligned with the top eigenvector of the noise covariance $\E(\Sigma - XX^\top)^2.$ This can be achieved using a scalar distribution with a wide gap between its second and fourth moments.

\subparagraph{\underline{Covariate distribution and moments}}
We let $M > 1, \epsilon \in (0, 1)$ and define $X_1$ as a uniform variable with a random offset as follows
\begin{equation*}
    X_1 = U + B \quad \text{with} \quad U\sim \mathcal{U}_{[-1/M, 1/M]} \quad \text{and}\quad B = \begin{cases}
        0 & \text{w.p.}\quad 1-\epsilon \\
        +M & \text{w.p.}\quad \epsilon/2 \\
        -M & \text{w.p.}\quad \epsilon/2.
    \end{cases}
\end{equation*}
We let $X_2$ be uniform over $[-1, 1]$ i.e.~$X_2 \sim \mathcal{U}_{[-1, 1]}.$ The random variables $U, B$ and $X_2$ are mutually independent implying the same for $X_1$ and $X_2.$

For $i\in \{1, 2\}$ and $j \geq 1,$ we denote the signed moments $m_{i,j}^j := \E X_i^j.$ Since both distributions are symmetric, we have $m_{1,1} = m_{2,1} = m_{1,3} = m_{2,3} = 0.$ Moreover, simple computations yield $m_{2,2}^2 = 1/3$ and $m_{2,4}^4 = 1/5.$ Finally, for $m_{1,2}^2$ and $m_{1,4}^4,$ we have

\begin{align*}
    m_{1,2}^2 &= (1-\epsilon)\int_{-1/M}^{1/M}\frac{x^2}{2/M}dx + \frac{\epsilon}{2}\bigg(\int_{-1/M}^{1/M}\frac{(x+M)^2}{2/M}dx + \int_{-1/M}^{1/M}\frac{(x-M)^2}{2/M}dx\bigg) \\
    &= \frac{(1 - \epsilon)}{3M^2} + \frac{M\epsilon}{2}\int_{-1/M}^{1/M}(M+x)^2dx \\
    &= \frac{(1 - \epsilon)}{3M^2} + \frac{M\epsilon}{6}\big((M+1/M)^3 - (M-1/M)^3\big) \\
    &= \frac{(1 - \epsilon)}{3M^2} + \epsilon\Big(M^2 + \frac{1}{3M^2}\Big) = \frac{1}{3M^2} + \epsilon M^2,
\end{align*}
as well as 
\begin{align*}
    m_{1,4}^4 &= (1-\epsilon)\int_{-1/M}^{1/M}\frac{x^4}{2/M}dx + \epsilon\int_{-1/M}^{1/M}\frac{(M+x)^4}{2/M}dx \\
    &= \frac{(1 - \epsilon)}{5M^4} + \frac{M\epsilon}{10}\big((M+1/M)^5 - (M-1/M)^5\big) \\
    &= \frac{(1 - \epsilon)}{5M^4} + \frac{M\epsilon}{5}\bigg(\binom{5}{0}M^{-5} + \binom{5}{2}M^{-1} + \binom{5}{4}M^{3}\bigg)\\
    &= \frac{(1 - \epsilon)}{5M^4} + \epsilon\big(M^{-4}/5 + 2 + M^{4}\big)= \frac{1}{5M^4} + \epsilon\big(M^{4} + 2\big),
\end{align*}
where the third equality uses that the odd terms in the expansions of the two fifth powers cancel out while the even ones are duplicated.

\subparagraph{\underline{Covariance}}
Computing the covariance matrix leads to 
\begin{equation*}
    \Sigma = \E XX^\top = \begin{pmatrix}
    \E X_1^2 & \E X_1 X_2 \\ \E X_1X_2 & \E X_2^2
\end{pmatrix} = \begin{pmatrix}
    m_{1,2}^2 & 0 \\ 0 & m_{2,2}^2
\end{pmatrix}.
\end{equation*}
Note that, in this case, we have $\mu = \min(m_{1,2}^2, m_{2,2}^2)$ and $L = \max(m_{1,2}^2, m_{2,2}^2).$
\subparagraph{\underline{Noise covariance}}
We write $\E (\Sigma - XX^\top)^2 = \begin{pmatrix}
    a_{1,1} & a_{1,2} \\ a_{2,1} & a_{2,2}
\end{pmatrix}$ and compute the coefficients $a_{i,j}$ for $1\leq i,j\leq 2.$ We find 
\begin{align*}
    a_{1, 2} = a_{2, 1} &= \E\big[(\E X_1 X_2 - X_1 X_2)(\E X_1^2 - X_1^2 + \E X_2^2 - X_2^2)\big] \\
    &= - \E\big[(\E X_1 X_2 - X_1 X_2)(X_1^2 + X_2^2)\big] \\
    &= \E\big[(X_1 X_2)(X_1^2 + X_2^2)\big] = m_{1,3}^3 m_{2,1} + m_{2,3}^3 m_{1,1} = 0,
\end{align*}
where we used that $\E X_1 X_2 - X_1 X_2$ has zero expectation and $\E X_1^2, \E X_2^2$ are constants then the fact that, by independence, $\E X_1 X_2 = \E X_1 \E X_2 = 0.$

As for the diagonal coefficients, we have
\begin{align*}
    a_{i,i} &= \E \big[(\E X_i^2 - X_i^2)^2 + (\E X_1 X_2 - X_1 X_2)^2\big]\\
    &= m_{i, 4}^4 - m_{i, 2}^4 + m_{1, 2}^2 m_{2, 2}^2.
\end{align*}
\subparagraph{\underline{Eigenvector alignment}}
By setting $\epsilon = 1/M^3$ and choosing $M \geq 4$, we get that $m_{1,2}^2 < m_{2,2}^2$ in the covariance matrix $\Sigma$ since $m_{2,2}^2 = 1/3 > \big(3M^{-1} + M^{-2}\big)/3 = m_{1,2}^2.$ This entails that the first canonical basis vector $e_1$ is the eigenvector of $\Sigma$ with the smallest eigenvalue $\mu = m_{1,2}^2.$

At the same time, the noise covariance matrix being diagonal and our choice of $\epsilon$ and $M$ lead to $L_{\sigma}^2 = \|\E (\Sigma - XX^\top)^2\|_2 = a_{1,1} > a_{2,2}.$ Indeed, we have
\begin{alignat*}{2}
    &&a_{2,2} &< a_{1,1}  \\
    \iff &&\quad m_{2,4}^4 - m_{2,2}^4 &< m_{1,4}^4 - m_{1,2}^4 \\
    \iff &&\quad 1/5 - 1/9 &< \frac{1}{5M^4} + \epsilon\big(M^{4}+2\big) - \Big(\frac{1}{3M^2} + \epsilon M^2\Big)^2  \\
    \iff &&\quad 1/5 - 1/9 &< \epsilon (1-\epsilon) M^4 + \frac{4\epsilon}{3} + M^{-4}\Big(\frac{1}{5} - \frac{1}{9}\Big),
\end{alignat*}
where all terms in the RHS are positive. Plugging $\epsilon=1/M^3$ into the term $\epsilon (1-\epsilon) M^4$ shows that the inequality is satisfied for $M \geq 4.$

As a result, the first canonical basis vector $e_1$ is the eigenvector of the noise covariance matrix $\E (\Sigma - XX^\top)^2$ with the top eigenvalue $L_{\sigma}^2 = a_{1,1}.$

Thus, the least eigenvalue $\mu$ of $\Sigma$ and the top eigenvalue $L_{\sigma}^2$ of $\E (\Sigma - XX^\top)^2$ are associated to the same eigenvector which is $e_1.$ As a result, for $\theta = e_1,$ the following pair of equalities hold at the same time
\begin{equation*}
    \Sigma \theta = \mu \theta \quad \text{and}\quad \E\big\|\big(\Sigma - XX^\top\big)\theta\big\|^2 = L_{\sigma}^2\|\theta\|^2.
\end{equation*}
Note also that for $M\to +\infty,$ we simultaneously have $\mu \to 0$ and $L_{\sigma}^2 \to +\infty.$
\paragraph{Comparison with related works}
We now show that the step-size scaling of Theorem~\ref{thm:ergodicity} and Proposition~\ref{prop:wasserstein_convergence} is equivalent to some related works on stochastic optimization in the above setting where $L^2_\sigma = \big\|\E[ (XX^\top - \Sigma)^2]\big\|_2.$ 

The main arguments for the comparison with~\cite{dieuleveut2020bridging} have already been laid out in the discussion following the statement of Proposition~\ref{prop:wasserstein_convergence}. These arguments are completed by the setting above where the choice of $\theta - \theta'$ simultaneously aligned with the least eigenvector of $\Sigma$ and the top eigenvector of $\E (XX^\top - \Sigma)^2$ is justified.

In~\cite{needell2014stochastic}, the authors consider an objective $F(x) = \E_{i\sim\mathcal{D}}f_i(x)$ which is $\mu$-strongly convex and assume the $f_i$'s are convex and $L_i$-smooth. Note  that, since the $f_i$'s are convex, their $L_i$-smoothness is equivalent to $L_i$-co-coercivity. The step-size condition is then $\gamma \leq 1/\sup L$ with $L_i \leq \sup L$ almost surely. This comparison is therefore similar to the one with~\cite{dieuleveut2020bridging} with $\sup L$ replacing $L'.$

In~\cite{bach2013non}, the matrix $H$ in assumption (A3) corresponds to $H = \E XX^\top = \Sigma.$ Considering the case $\lambda_{\min}(\Sigma) = \mu,$ the condition $\E[\|X\|^2 X \otimes X] \preceq R^2 H$ of (A6) can be rewritten as
\begin{align*}
        \E&\big[(XX^\top)^2\big] \preceq R^2 \Sigma \\
        \iff \E&\big[(XX^\top - \Sigma + \Sigma)^2\big] \preceq R^2 \Sigma \\
        \iff \E&\big[(XX^\top - \Sigma)^2\big] + \Sigma^2 \preceq R^2 \Sigma
\end{align*}
We consider the setting given above where a vector exists which is aligned with the top eigenvector of $\E[(XX^\top - \Sigma)^2],$ whose eigenvalue would be $L_{\sigma}^2,$ and the least eigenvector of $\Sigma,$ whose eigenvalue is $\mu,$ at the same time. This leads to $L_{\sigma}^2 +\mu^2 \:\leq\: R^2 \mu\implies R^2\:\geq\: L_{\sigma}^2/\mu +\mu $ so that the step-size condition $\gamma \leq 1/R^2$ has a similar scale to~\eqref{eq:thm1_stepsize}.

Similarly, in~\cite{pillaud2018exponential}, linear regression is considered in Equation (3) by setting $\eta_n = \theta_n - \theta^\star$ with $H_n = XX^\top$ and $\varepsilon_n = -\xi X.$ According to (H2) we have again $H = \Sigma$ with strong-convexity constant $\lambda = \mu.$ By defining $\E\xi^2 = \sigma^2$ in the linear regression setting, the assumption $\E[\varepsilon_n\otimes \varepsilon_n]\preceq C$ in~\cite{pillaud2018exponential} holds with $C = \sigma^2\Sigma$ for independent noise $\xi.$ Assumption (H4) is then equivalent to $\E[H_n C H^{-1}H_n] = \sigma^2 \E[\|X\|^2 XX^\top] \preceq \gamma_0^{-1}\sigma^2 \Sigma.$ Since the strong-convexity constant is $\lambda = \mu = \lambda_{\min}(\Sigma),$ we can use the same argument as above to conclude that the step-size condition $\gamma\leq \gamma_0$ has the same scale as~\eqref{eq:thm1_stepsize} in the setting laid out earlier in this section.

\subsection{Proof of Proposition~\ref{prop:invariant_properties}}\label{sec:proof_invariant_properties}

To prove~\ref{prop:invariant_properties_a}, let $\theta \sim \pi_\gamma$ and simply compute
\begin{align*}
    \E [\theta] = \E\big[\theta-\gamma G(\theta)\big] = \E\big[\theta -\gamma\nabla\cL(\theta)\big] = \E [\theta]  -\gamma\E[\nabla\cL(\theta)]
\end{align*}
since we know that $\E [\theta] < \infty $ (this follows from~\eqref{eq:mt_bound} in the proof of Theorem~\ref{thm:ergodicity}), this implies the first part of the claim. If we further assume the gradient to be linear, we have in addition
\begin{equation*}
    \E \nabla\cL(\theta) = \nabla\cL\big(\E\theta \big) = \nabla\cL(\bar{\theta}_{\gamma}) = 0,
\end{equation*}
and the conclusion follows since $\theta^\star$ is the unique critical point. We now consider $t\geq 1$ and compute
\begin{align}
    \E\|\theta_t - \theta^\star\|^2 &= \E \|\theta_{t-1} - \gamma G(\theta_{t-1}) - \theta^\star\|^2 \nonumber\\
    &= \E\big[ \| \theta_{t-1} - \gamma \nabla\cL(\theta_{t-1}) - \theta^\star\|^2  + \gamma^2 \| \varepsilon(\theta_{t-1})\|^2\nonumber\\ &\quad- 2\gamma \langle \theta_{t-1} - \gamma \nabla \cL(\theta_{t-1}) - \theta^\star, \varepsilon(\theta_{t-1}) \rangle \big] \nonumber \\ 
    &\leq (1-\gamma\mu)^2\E\|\theta_{t-1} - \theta^\star\|^2 + \gamma^2\E\|\varepsilon(\theta_{t-1})\|^2\nonumber\\
    &\leq \big((1-\gamma\mu)^2 + \gamma^2L_\sigma^2\big)\E\|\theta_{t-1} - \theta^\star\|^2 + \gamma^2\sigma^2 \nonumber\\ 
    &= \alpha_\sigma \E\|\theta_{t-1} - \theta^\star\|^2 + \gamma^2\sigma^2 \label{eq:prf_prop1_checkpt} 
\end{align}
where we used Lemma~\ref{lem:contraction} and Assumption~\ref{asm:gradient}~\ref{asm:gradient_centered} and~\ref{asm:gradient_regular}. 
We then iterate this relationship to find
\begin{align}
    \E\|\theta_t - \theta^\star\|^2 &\leq \alpha_\sigma^t\E\|\theta_{0} - \theta^\star\|^2 + \gamma^2\sigma^2\sum_{i=0}^{t-1} \alpha_\sigma^i\nonumber\\
    &= \alpha_\sigma^t\E\|\theta_{0} - \theta^\star\|^2 + \gamma^2\sigma^2\frac{1 - \alpha_\sigma^{t}}{1 - \alpha_\sigma}\label{eq:prf_prop1_checkpt2},
\end{align}
which proves~\ref{prop:invariant_properties_bprime}. To prove~\ref{prop:invariant_properties_b}, we consider a stationary chain such that $\theta_{t-1}, \theta_{t}\sim\pi_\gamma$ and therefore $\E\|\theta_{t}-\theta^\star\|^2 = \E\|\theta_{t-1}-\theta^\star\|^2.$ Resuming from~\eqref{eq:prf_prop1_checkpt}, we find that for $\theta\sim\pi_\gamma,$ we have
\begin{equation*}
    \Var_{\pi_\gamma}(\theta) \leq \E\|\theta - \theta^\star\|^2  \leq \frac{\gamma^2 \sigma^2}{1 - (1-\gamma \mu)^2 - \gamma^2 L^2_{\sigma}} = \frac{\gamma \sigma^2}{2\mu - \gamma(\mu^2+ L^2_{\sigma})}.
\end{equation*}
Moreover, using Jensen's inequality followed by Cauchy-Schwarz, we have
\begin{equation*}
    \|\bar{\theta}_\gamma - \theta^\star\| = \| \E\theta - \theta^\star \| \leq \E\| \theta - \theta^\star \|\leq \sqrt{\E \| \theta - \theta^\star \|^2},
\end{equation*}
which concludes the proof.

\subsection{Proof of Proposition~\ref{prop:invariant_concentration}}\label{sec:proof_invariant_concentration}

Let us prove~\ref{prop:invariant_properties_c}, consider an iterate $\theta_{k+1}$ from the SGD iteration for some $k\geq 0,$ we have:
\begin{align*}
    \E\exp(\lambda^2\|\theta_{k+1} - \theta^\star\|^2) &= \E\exp(\lambda^2\|\theta_{k} - \gamma G(\theta_{k}) - \theta^\star\|^2) \\
    &= \E\exp\big(\lambda^2(\|\theta_{k} -\gamma\nabla\cL(\theta_{k})- \theta^\star\|^2 \\
    &\quad - 2\gamma\langle \theta_{k} -\gamma\nabla\cL(\theta_{k}) - \theta^\star, \varepsilon(\theta_{k}) \rangle + \gamma^2 \|\varepsilon(\theta_{k})\|^2)\big)
\end{align*}
Since we assume that $\|\varepsilon(\theta)\| \in \widetilde{\Psi}_2(\overline{K})$ for all $\theta,$ it is easy to check that for all $u\in\R^d$ with unit norm, $\langle u, \varepsilon(\theta)\rangle \in \Psi_2(\overline{K}).$ Indeed, recall our observation following Definition~\ref{def:subexp} and note that $\langle u, \varepsilon(\theta)\rangle$ is centered because $\varepsilon(\theta)$ is centered and $|\langle u, \varepsilon(\theta)\rangle| \leq \|\varepsilon(\theta)\| \in \widetilde{\Psi}_2(\overline{K}).$ Therefore, conditioning on $\theta_{k},$ we have
\begin{align*}
    \E \big[&\exp \big(\lambda^2(- 2\gamma\langle \theta_{k} -\gamma\nabla\cL(\theta_{k}) - \theta^\star, \varepsilon(\theta_{k}) \rangle + \gamma^2 \|\varepsilon(\theta_{k})\|^2)\big)\vert \theta_{k} \big]\\
    &\leq \E \big[\exp \big(-(2\lambda)^2\gamma\langle \theta_{k} -\gamma\nabla\cL(\theta_{k}) - \theta^\star, \varepsilon(\theta_{k}) \rangle\big) \vert \theta_{k} \big] ^{1/2} \E \big[\exp \big(2(\gamma\lambda)^2 \|\varepsilon(\theta_{k})\|^2\big)\vert \theta_{k}\big]^{1/2}\\
    &\leq \exp\big( 8\lambda^4\gamma^2 \|\theta -\gamma \nabla\cL(\theta_{k}) - \theta^\star\|^2 \overline{K}^2 + \lambda^2 \gamma^2 \overline{K}^2\big)\\
    &\leq \exp\big( 8\lambda^4\gamma^2 (1-\gamma\mu)^2\|\theta_{k} - \theta^\star\|^2 \overline{K}^2 + \lambda^2 \gamma^2 \overline{K}^2\big),
\end{align*}
where the last line uses Lemma~\ref{lem:contraction}. The previous inequality holds for $|\lambda| \leq \big(\sqrt{2}\gamma \overline{K}\big)^{-1} $. We now restrict $\lambda$ so that $|\lambda| \leq \big(2\overline{K}\sqrt{\gamma/\mu}\big)^{-1}$ which implies 
\begin{equation*}
    1 + 8\lambda^2\gamma^2\overline{K}^2\leq 1 + 2\gamma\mu\leq 1+\frac{\gamma\mu}{1-\gamma\mu} = \frac{1}{1-\gamma\mu}.
\end{equation*}
We thus obtain
\begin{align}
    \E\exp&\big(\lambda^2\|\theta_{k+1} - \theta^\star\|^2) \nonumber\\
    &\leq \E\exp\big(\lambda^2(1-\gamma\mu)^2\|\theta_{k}  - \theta^\star\|^2(1 + 8\lambda^2\gamma^2 \overline{K}^2)\big) \exp\big( \lambda^2 \gamma^2 \overline{K}^2\big) \nonumber\\
    &\leq \E\exp\big(\lambda^2(1-\gamma\mu)\|\theta_{k}  - \theta^\star\|^2\big) \exp\big( \lambda^2 \gamma^2 \overline{K}^2\big) . \label{eq:subgauss_prf_checkpt}
\end{align}
This relationship can be iterated to find that
\begin{align}
    \E\exp&\big(\lambda^2\|\theta_{k+1} - \theta^\star\|^2\big) \nonumber\\
    &\leq \exp\Big(\lambda^2\Big((1-\gamma\mu)^{k+1}\|\theta_0 - \theta^\star\|^2 + \gamma^2\overline{K}^2\sum_{j=0}^{k}(1-\gamma\mu)^j\Big)\Big) \nonumber\\
    &=\exp\Big(\lambda^2\Big((1-\gamma\mu)^{k+1}\|\theta_0 - \theta^\star\|^2 + \big(1-(1-\gamma\mu)^{k+1}\big)\gamma\overline{K}^2/\mu\Big)\Big),\label{eq:subgauss_prf_checkpt2}
\end{align}
which shows that $\|\theta_{k+1} - \theta^\star\|$ is sub-Gaussian with the desired constant. Regarding the invariant distribution at the limit, we use the fact that for $\theta_k \sim \pi_\gamma$ i.e.~at stationarity, $\theta_{k+1} = \theta_k - \gamma G(\theta_k)$ and $\theta_k$ have the same distribution and $\E\exp\big(\lambda^2\|\theta_{k+1} - \theta^\star\|^2) = \E\exp \big(\lambda^2\|\theta_{k} - \theta^\star\|^2).$ Using~\eqref{eq:subgauss_prf_checkpt} and Jensen's inequality, this allows to conclude that for $\theta \sim \pi_\gamma,$ we have:
\begin{align*}
    \E\exp(\lambda^2\|\theta - \theta^\star\|^2) &\leq \E\exp\big(\lambda^2(1-\gamma\mu)\|\theta  - \theta^\star\|^2\big) \exp\big( \lambda^2 \gamma^2 \overline{K}^2\big) \\    
    &\leq \Big[\E\exp\big(\lambda^2\|\theta  - \theta^\star\|^2\big)\Big]^{(1-\gamma\mu)} \exp\big( \lambda^2 \gamma^2 \overline{K}^2\big) \\    
    \implies\E\exp(\lambda^2\|\theta - \theta^\star\|^2) &\leq\exp\Big(\frac{\lambda^2\gamma^2\overline{K}^2}{\gamma\mu}\Big) = \exp\big(\lambda^2\gamma \overline{K}^2/\mu\big),
\end{align*}
which shows the desired property for $\pi_\gamma.$

The task of proving~\ref{prop:invariant_properties_d} is more delicate. By assumption, we know that for all $\theta$, the gradient error $\varepsilon(\theta)$ satisfies
\begin{equation}\label{eq:subexp_assump}
    \big\| \|\varepsilon(\theta)\| \big\|_{L_p} \leq \overline{K}p \quad \text{for }\quad p \geq 1. 
\end{equation}
We denote $M_p^p(k) = \E\|\theta_k - \theta^\star\|^p.$ We will show by induction over $k$ that for all $p\geq 1,$ we have :
\begin{align}
    M_p(k) &\leq p\Big((1-\gamma\mu)^{k}\|\theta_0 - \theta^\star\|^2 + (1-(1-\gamma\mu)^{k+1})4\gamma\overline{K}^2/\mu\Big)^{1/2}\nonumber\\
    &=p\Big(\alpha^{k}d_0^2 + \frac{1-\alpha^{k+1}}{1-\alpha}\frac{4}{3}\gamma^2\overline{K}^2\Big)^{1/2}=:pK_{\pi}(k),\label{eq:ind_hyp}
\end{align}
where we introduced $\alpha := 1-\gamma\mu$ and $d_0^2 := \|\theta_0 - \theta^\star\|^2$ to lighten the notations.
For $k=0,$ we have $M_p(0) = \|\theta_0 - \theta^\star\|$ for all $p$ so that~\eqref{eq:ind_hyp} holds trivially. We now assume~\eqref{eq:ind_hyp} holds for $k$ and prove it for $k+1.$

We consider $M_{2p}(k+1),$ and compute
\begin{align*}
    M_{2p}^{2p}&(k+1) = \E \|\theta_{k+1} - \theta^\star\|^{2p} = \E \|\theta_{k} - \gamma G(\theta_{k}) - \theta^\star\|^{2p}\\
    =& \E\big( \|\theta_{k} - \gamma \nabla\cL(\theta_{k}) - \theta^\star\|^2 - 2\gamma \langle \theta_{k} - \gamma \nabla\cL(\theta_{k}) - \theta^\star, \varepsilon(\theta_{k}) \rangle +\gamma^2\|\varepsilon(\theta_{k})\|^2\big)^p \\
    \leq& \E \sum_{i=0}^p \binom{p}{i}\|\theta_{k} - \gamma \nabla\cL(\theta_{k}) - \theta^\star\|^{2i} \big( \gamma^2\|\varepsilon(\theta_{k})\|^2 - 2\gamma \langle \theta_{k} - \gamma \nabla\cL(\theta_{k}) - \theta^\star, \varepsilon(\theta_{k}) \rangle\big)^{p-i} \\
    \leq& \alpha^{2p}M_{2p}^{2p}(k) + p \E\|\theta_{k} - \gamma\nabla\cL(\theta_{k})-\theta^\star\|^{2p-2}(\gamma \|\varepsilon(\theta_{k})\|)^2 +\\
    & \E\sum_{i=0}^{p-2} \binom{p}{i}\|\theta_{k} - \gamma \nabla\cL(\theta_{k}) - \theta^\star\|^{2i} \big( \gamma^2\|\varepsilon(\theta_{k})\|^2 - 2\gamma \langle \theta_{k} - \gamma \nabla\cL(\theta_{k}) - \theta^\star, \varepsilon(\theta_{k}) \rangle\big)^{p-i},
\end{align*}
where we isolated the two last terms of the sum in the last step and used that $\varepsilon(\theta_{k})$ is centered conditionally on $\theta_{k}.$ Further, we have
\begin{align*}
    \E&\sum_{i=0}^{p-2} \binom{p}{i}\|\theta_{k} - \gamma \nabla\cL(\theta_{k}) - \theta^\star\|^{2i} \big( \gamma^2\|\varepsilon(\theta_{k})\|^2 - 2\gamma \langle \theta_{k} - \gamma \nabla\cL(\theta_{k}) - \theta^\star, \varepsilon(\theta_{k}) \rangle\big)^{p-i}\\
    &\leq \E\sum_{i=0}^{p-2}\!\binom{p}{i} \|\theta_{k} \!-\! \gamma \nabla\cL(\theta_{k}) \!-\! \theta^\star\|^{2i}\sum_{j=0}^{p-i}\!\binom{p\!-\!i}{j}(\gamma \|\varepsilon(\theta_{k})\|)^{2j}(2\gamma \|\theta_{k} \!-\!\gamma \nabla\cL(\theta_{k}) \!-\! \theta^\star\|\|\varepsilon(\theta_{k})\|)^{p-i-j}\\
    &=\sum_{i=0}^{p-2}\sum_{j=0}^{p-i}\binom{p}{i}\binom{p-i}{j}\|\theta_{k} - \gamma \nabla\cL(\theta_{k}) - \theta^\star\|^{p+i-j}(\gamma \|\varepsilon(\theta_{k})\|)^{p-i+j}2^{p-i-j}.
\end{align*}
Now consider an index $l = p-i+j$, note that $2p - l = p+i-j$ and we have $2\leq l \leq 2p.$ We compute the sum $\sum_{i=0}^{p-2}\sum_{j=0}^{p-i}\binom{p}{i}\binom{p-i}{j}2^{p-i-j}$ for a fixed value of $l:$
\begin{align}
    \sum_{\substack{0\leq i\leq p-2\\ 0\leq j\leq p-i \\ p-i+j = l}}& \binom{p}{i}\binom{p-i}{j}2^{p-i-j} = \sum_{i=0}^{p-2}\binom{p}{i}\binom{p-i}{l-(p-i)}2^{2(p-i)-l}\ind{p-i\leq l \leq 2(p-i)} \nonumber \\
    &=\!\!\! \sum_{i= 0\vee (p-l)}^{(p-2) \wedge (p-\lceil l/2\rceil)}\!\!\binom{p}{i}\binom{p-i}{l\!-\!(p-i)}2^{2(p-i)-l} \nonumber \\
    &= \!\!\!\sum_{i=0}^{(\lfloor l/2\rfloor) \wedge (l-2)}\!\!\binom{p}{i, i\!+\!p\!-\!l, l\!-\!2i} 2^{l-2i} \label{eq:trinomial_step}
\end{align}
where $\binom{p}{i, i+p-l, l-2i} = \frac{p!}{i!(i+p-l)!(l - 2i)!}$ is the \emph{trinomial} coefficient. The first equality above replaces $j$ in terms of $l,p$ and $i$ and adds the indicator function to restrict to valid terms. The second equality translates the constraints on the bounds on index $i$ and the third one applies the change of variable $i \to i+p-l$.

Similarly we find that:
\begin{align*}
    \sum_{\substack{0\leq i\leq p\\ 0\leq j\leq p-i \\ p-i+j = l}} \binom{p}{i}\binom{p-i}{j}2^{p-i-j} = \sum_{i=0}^{\lfloor l/2\rfloor}\binom{p}{i, i\!+\!p\!-\!l, l\!-\!2i} 2^{l-2i}.
\end{align*}
In what follows, we set the convention that $\binom{p}{i, i+p-l, l-2i} = 0$ whenever $i\wedge (i+p-l) \wedge (l-2i) < 0 $ which allows us to sum over all integer values without specifying the limits. For some variable $x,$ we multiply by $x^l,$ sum over $l$ and perform the change of variable $l\to l+2i$ to find
\begin{align*}
    \sum_{l}&\sum_{i=0}^{\lfloor l/2 \rfloor} \binom{p}{i, i+p-l, l-2i}2^{l-2i}x^l = \sum_{l,i} \binom{p}{i, i+p-l, l-2i}2^{l-2i}x^l \\
    &= \sum_{l,i} \binom{p}{i, l,p-l -i}2^{l}x^{l+2i} = \sum_{l,i} \binom{p}{i, l,p-l -i}(2x)^{l}(x^2)^{i}\\
    &= (x^2+2x+1)^p= (x+1)^{2p} = \sum_{l=0}^{2p} \binom{2p}{l}x^l.
\end{align*}
By identification of the terms in the sum over $l,$ this yields the equality
\begin{align*}
    \sum_{i=0}^{\lfloor l/2\rfloor}\binom{p}{i, i\!+\!p\!-\!l, l\!-\!2i} 2^{l-2i} = \binom{2p}{l}
\end{align*}
We plug back into~\eqref{eq:trinomial_step} and pay attention to the missing terms in the original sum. This happens when $l -2 < \lfloor l/2 \rfloor$ i.e.~when $\lceil l/2 \rceil < 2$ and since $2\leq l\leq 2p,$ it only happens for $l=2$ in which case the sum~\eqref{eq:trinomial_step} misses the term for $i=1$ which is equal to $p.$ Therefore, we get:
\begin{align*}
    \sum_{\substack{0\leq i\leq p-2\\ 0\leq j\leq p-i \\ p-i+j = l}} \binom{p}{i}\binom{p-i}{j}2^{p-i-j} = \binom{2p}{l} - p\ind{l=2}.
\end{align*}
Plugging back in the original sum, we find
\begin{align}
    M_{2p}^{2p}&(k+1) \leq \alpha^{2p} M^{2p}_{2p}(k) + p \E\|\theta_{k} - \gamma\nabla\cL(\theta_{k})-\theta^\star\|^{2p-2}(\gamma \|\varepsilon(\theta_{k})\|)^2 \nonumber \\
    &\quad + \E\sum_{l=2}^{2p}\Big(\binom{2p}{l} - p\ind{l=2}\Big)\|\theta_{k} - \gamma \nabla\cL(\theta_{k}) - \theta^\star\|^{2p-l}(\gamma \|\varepsilon(\theta_{k})\|)^{l} \nonumber \\
    &= \alpha^{2p} M^{2p}_{2p}(k) + \E\sum_{l=2}^{2p} \binom{2p}{l} \|\theta_{k} - \gamma \nabla\cL(\theta_{k}) - \theta^\star\|^{2p-l}(\gamma \|\varepsilon(\theta_{k})\|)^{l} \label{eq:moments_inequality} \\
    &\stackrel{\Circled{1}}{\leq}\alpha^{2p} M^{2p}_{2p}(k) + \sum_{l=2}^{2p}\binom{2p}{l}\big(\alpha M_{2p-l}(k)\big)^{2p-l}\big(\gamma l\overline{K}\big)^{l} \nonumber \\
    &\stackrel{\Circled{2}}{\leq} \big(2\alpha pK_{\pi}(k)\big)^{2p} + \sum_{l=2}^{2p}\binom{2p}{l}\big(\alpha (2p-l)K_\pi(k)\big)^{2p-l}\big(\gamma l\overline{K}\big)^{l}  \nonumber \\
    &\stackrel{\Circled{3}}{\leq} \big(2\alpha pK_{\pi}(k)\big)^{2p} + \frac{(2p)^{2p}e^{\frac{1}{24p}}}{\sqrt{2\pi}}\sum_{l=2}^{2p}\sqrt{\frac{2p}{l(2p-l)}}\big(\alpha K_\pi(k)\big)^{2p-l}(\gamma \overline{K})^{l}  \nonumber \\
    &\stackrel{\Circled{4}}{\leq} (2p)^{2p}\Big[\big(\alpha^2 K_{\pi}(k)^2\big)^{p} + p\kappa(p)\big(\alpha^2 K_\pi(k)^2\big)^{p-1}\big(\gamma^2 \overline{K}^2\big)\Big] \nonumber \\
    &\stackrel{\Circled{5}}{\leq} (2p)^{2p}\Big[\alpha^2 K_{\pi}(k)^2 + \frac{2}{3}\big(\gamma^2 \overline{K}^2\big)\Big]^{p} \label{eq:subexp_prf_checkpt},
\end{align}
where $\Circled{1}$ uses~\eqref{eq:subexp_assump} and Lemma~\ref{lem:contraction}, $\Circled{2}$ uses our induction hypothesis~\eqref{eq:ind_hyp}, $\Circled{3}$ uses Stirling's approximation, $\Circled{4}$ uses that $\alpha K_\pi(k) \geq \gamma \overline{K}$ (keep in mind that the condition $\gamma \leq (2\mu)^{-1}$ implies $\alpha \geq 1/2$) to substitute all terms in the sum with the term for $l=2$ introducing $\kappa(p) = \frac{e^{\frac{1}{24p}}}{\sqrt{2\pi}}\big(\frac{2p-1}{p}\big)\sqrt{\frac{p}{2p-2}}$ and $\Circled{5}$ uses the inequalities $\sup_{p\geq 2}\kappa(p)\leq 2/3$ and $a^p + pa^{p-1}b \leq (a+b)^p$ for $a, b\geq 0$ and $p\in\N^*.$ Note that, from~\eqref{eq:ind_hyp}, we have
\begin{align*}
    \alpha K_{\pi}(k)^2 = \alpha^{k+1}d_0^2 + \frac{\alpha-\alpha^{k+2}}{1-\alpha}\frac{4}{3}\gamma^2\overline{K}^2 = K_{\pi}(k+1)^2 - \frac{4}{3}\gamma^2\overline{K}^2,
\end{align*}
which we can plug into~\eqref{eq:subexp_prf_checkpt} to obtain $M_{2p}(k+1) \leq 2pK_{\pi}(k+1).$ Since this implies similar bounds for moments of uneven orders $M_{2p-1}(k+1),$ the induction over $k$ is complete and we have that $\|\theta_k - \theta^\star\|$ is sub-exponential with the desired constant.

Finally, we turn to $\theta~\sim\pi_\gamma$ and denote $M_p^p = \E\|\theta - \theta^\star\|^p$. For $p = 2,$ we have using Proposition~\ref{prop:invariant_properties}~\ref{prop:invariant_properties_b}
\begin{equation*}
    M_2^2 \leq \frac{\gamma(2\overline{K})^2 }{\mu(2 - \gamma \mu)} \leq \frac{\gamma(2\overline{K})^2 }{\mu},
\end{equation*}
which immediately entails $M_1\leq 2\overline{K}\sqrt{\gamma /\mu}.$ We will show by induction that 
\begin{equation}\label{eq:ind_hyp2}
    M_p \leq K_{\pi} p \quad \text{for all } \quad p\geq 1,
\end{equation}
with $K_\pi = C\overline{K}\sqrt{\gamma/\mu}$ for some $C \geq 2.$ For $p\geq 2,$ we assume~\eqref{eq:ind_hyp2} holds up to $2p-2$ and consider $M_{2p}.$ Through similar computations to the above and since the invariance of $\pi_\gamma$ implies $M_{2p}(k)=M_{2p}(k+1),$ starting from~\eqref{eq:moments_inequality}, we find 
\begin{align}
    (1-&\alpha^{2p})M_{2p}^{2p} \leq \E\sum_{l=2}^{2p} \binom{2p}{l} \|\theta - \gamma \nabla\cL(\theta) - \theta^\star\|^{2p-l}(\gamma \|\varepsilon(\theta)\|)^{l}  \label{eq:moments_inequality2} \\
    &\stackrel{\Circled{1}}{\leq}\sum_{l=2}^{2p}\binom{2p}{l}(\alpha M_{2p-l})^{2p-l}(\gamma l\overline{K})^{l} \nonumber \\
    &\stackrel{\Circled{2}}{\leq} (\gamma 2p\overline{K})^{2p} + \sum_{l=2}^{2p-1}\binom{2p}{l}(\alpha(2p-l)K_\pi)^{2p-l}(\gamma l\overline{K})^{l} \nonumber \\
    &\stackrel{\Circled{3}}{\leq} ((2p\!-\!1)K_\pi)^{2p}\Big(\frac{2p}{2p\!-\!1}\Big)^{2p}\Big[(\gamma \overline{K}/K_\pi)^{2p} \nonumber \\
    &\quad + \frac{e^{\frac{1}{24p}}}{\sqrt{2\pi}}\sum_{l=2}^{2p-1}\sqrt{\frac{2p}{l(2p\!-\!l)}} \alpha^{2p-l}(\gamma \overline{K}/K_{\pi})^{l}\Big] \nonumber \\
    &\stackrel{\Circled{4}}{\leq} ((2p\!-\!1)K_\pi)^{2p}\Big(\frac{2p}{2p\!-\!1}\Big)^{2p}\Big[(\gamma \overline{K}/K_\pi)^{2p}  \nonumber \\
    &\quad + \frac{e^{\frac{1}{24p}}}{\sqrt{4\pi}}(2p\!-\!2)\sqrt{\frac{2p}{2p\!-\!2}} \alpha^{2p-2}(\gamma \overline{K}/K_{\pi})^{2}\Big], \nonumber 
\end{align}
where $\Circled{1}$ uses~\eqref{eq:subexp_assump} and Lemma~\ref{lem:contraction}, $\Circled{2}$ uses~\eqref{eq:ind_hyp2}, $\Circled{3}$ uses Stirling's approximation and $\Circled{4}$ uses that $\alpha > \gamma \overline{K}/K_\pi.$ We now use the following inequalities for $p\geq 2:$
\begin{align*}
    1-\alpha^{2p} &=\gamma\mu\sum_{i=0}^{2p-1}\alpha^i \geq 2p\gamma\mu \alpha^{2p-1} \quad \quad \gamma \leq 1/(2\mu)\\
    \Big(\frac{2p}{2p-1}\Big)^{2p-1} &= \Big(1+\frac{1}{2p-1}\Big)^{2p-1} \leq e \quad \text{and}\quad \frac{\sqrt{2p(2p-2)}}{2p-1}\leq 1, 
\end{align*} 
in addition to the choice $K_\pi = C\overline{K}\sqrt{\gamma/\mu}$ with $C=2$ to find
\begin{align*}
    M_{2p}^{2p} &\leq ((2p\!-\!1)K_\pi)^{2p}\Big(\frac{e}{1-\gamma\mu}\Big)\Big[ \Big(\frac{\sqrt{\gamma\mu}}{1-\gamma\mu}\Big)^{2p-2}\frac{C^{-2p}}{2p-1} + \frac{e^{\frac{1}{48}} C^{-2}}{\sqrt{4\pi}}\Big] \\
    &\leq ((2p\!-\!1)K_\pi)^{2p}(2e)\Big[ \frac{1}{6}\Big(\frac{1}{\sqrt{2}}\Big)^{2p} + \frac{e^{\frac{1}{48}}}{8\sqrt{\pi}}\Big] \leq ((2p\!-\!1)K_\pi)^{2p}
\end{align*}
which yields the desired bound~\eqref{eq:ind_hyp2} for $M_{2p}$ as well as $M_{2p-1}$ through $M_{2p-1}\leq M_{2p}.$ This finishes the induction.

\subsection{sub-Gaussianity under weaker conditions}\label{sec:weaker_subgauss}

In this section, we prove a sub-Gaussian property of the invariant distribution similar to Proposition~\ref{prop:invariant_concentration}~\ref{prop:invariant_properties_c} which holds if Assumption~\ref{asm:smooth_strongconvex} is replaced by the following weaker conditions on the objective.
\begin{assumption}\label{asm:lingrad_dissip}
    There exist positive constants $0< \mu \leq L < +\infty$ and $\beta < +\infty$ such that the objective $\cL$ satisfies the following properties:
    \begin{enumerate}[label=(\roman*)]
        \item \label{asm:lingrad} (Gradient linear growth) The gradient $\nabla\cL$ is such that
        \begin{equation*}
            \|\nabla \cL (\theta)\| \leq L\big(1 + \|\theta\|\big).
        \end{equation*}
        \item \label{asm:dissip} (Dissipativity) We have the lower bound
        \begin{equation*}
            \langle \theta, \nabla\cL(\theta)\rangle \geq \mu\|\theta\|^2 - \beta.
        \end{equation*}
    \end{enumerate}
\end{assumption}
Assumption~\ref{asm:lingrad_dissip} allows for non-convex, non-smooth objectives but requires a quadratic growth. Under such conditions, the unique global minimum $\theta^\star$ may not exist. Therefore, we adapt Assumption~\ref{asm:gradient} by setting $\theta^\star = 0$ to prove the following result.
\begin{proposition}\label{prop:weaker_subgauss}
    Under Assumptions~\ref{asm:lingrad_dissip} and~\ref{asm:gradient} with $\theta^\star=0,$ the Markov chain $(\theta_t)_{t\geq 0}$ defined by iteration~\eqref{eq:sgd_iteration} with step-size 
    \begin{equation*}
        \gamma< \frac{\mu}{8L^2 +  L^2_{\sigma}}
    \end{equation*}
    converges geometrically to a unique invariant measure $\pi_{\gamma}$. Moreover\textup, if Assumption~\ref{asm:grad_concentration}~\ref{asm:grad_subgauss} holds, for $\theta \sim \pi_\gamma,$ the invariant distribution $\pi_\gamma$ is such that $\|\theta\| \in \widetilde{\Psi}_2(\Breve{K})$ with $\Breve{K} = 2\sqrt{\big(\beta + 2\gamma(L^2+\overline{K}^2)\big)/\mu}$.
\end{proposition}
\begin{proof}
    The convergence proof is mostly similar to Theorem~\ref{thm:ergodicity} and mainly differs in the way to obtain an equivalent of Inequality~\eqref{eq:contract}. We consider a fixed $\theta\in\R^d$ and compute:
    \begin{align*}
        P_{\gamma}\|\theta\|^2 &= \E \|\theta - \gamma G(\theta)\|^2 = \E \|\theta - \gamma (\nabla\cL(\theta)+\varepsilon(\theta))\|^2 \\
        &\stackrel{\Circled{1}}{=} \E \big[ \|\theta\|^2 -2\gamma\langle \theta, \nabla\cL(\theta) \rangle + \gamma^2 \|\nabla\cL(\theta) + \varepsilon(\theta)\|^2 \big]\\
        &\stackrel{\Circled{2}}{\leq} \|\theta\|^2(1 -2\gamma\mu) +2\gamma\beta  + \gamma^2 \E \big[ \|\nabla\cL(\theta) + \varepsilon(\theta)\|^2 \big]\\
        &\stackrel{\Circled{3}}{=} \|\theta\|^2(1 -2\gamma\mu) +2\gamma\beta  + \gamma^2 \|\nabla\cL(\theta)\|^2 + \gamma^2\E \big[ \|\varepsilon(\theta)\|^2 \big]\\
        &\stackrel{\Circled{4}}{\leq} \|\theta\|^2\big(1 -2\gamma\mu + \gamma^2(2L^2+L^2_{\sigma})\big) +2\gamma\beta  + 2\gamma^2L^2 + \gamma^2\sigma^2,
    \end{align*}
    where $\Circled{1}$ and $\Circled{3}$ use Assumption~\ref{asm:gradient}~\ref{asm:gradient_centered}, $\Circled{2}$ uses Assumption~\ref{asm:lingrad_dissip}~\ref{asm:dissip} and $\Circled{4}$ uses Assumption~\ref{asm:lingrad_dissip}~\ref{asm:lingrad} and Assumption~\ref{asm:gradient}~\ref{asm:gradient_regular}. Our choice of $\gamma$ ensures that the factor in front of $\|\theta\|^2$ is $<1.$ From here, one can easily derive a similar inequality to~\eqref{eq:lyapunov_cond} and unfold the rest of Theorem~\ref{thm:ergodicity}'s proof pattern with $\theta^\star = 0$ leading to geometric ergodicity. We omit the details and focus on proving the sub-Gaussian property of $\pi_{\gamma}.$

    As in the proof of Proposition~\ref{prop:invariant_concentration}, we use the fact that $\theta$ and $\theta - \gamma G(\theta)$ have the same distribution when $\theta \sim \pi_\gamma:$
    \begin{align*}
        \E\exp\big(\lambda^2\|\theta\|^2\big) &= \E\exp\big(\lambda^2\|\theta - \gamma G(\theta)\|^2\big) = \E\exp\big(\lambda^2\|\theta - \gamma \nabla\cL(\theta) -\gamma\varepsilon(\theta)\|^2\big) \\
         &= \E\exp\big(\lambda^2\big[\|\theta\|^2 - 2\gamma\langle \theta, \nabla\cL(\theta) + \varepsilon(\theta)\rangle \\
         &\quad \quad+ \gamma^2 \|\nabla\cL(\theta) + \varepsilon(\theta)\|^2\big]\big) \\
         &\leq \E\exp\big(\lambda^2\big[\|\theta\|^2 - 2\mu\gamma\|\theta\|^2 + 2\beta \gamma -2\gamma \langle \theta, \varepsilon(\theta)\rangle  \\
         &\quad \quad+ 2\gamma^2 \big(2L^2(1+\|\theta\|^2) + \|\varepsilon(\theta)\|^2\big)\big]\big) \\
         &\leq \E\exp\big(\lambda^2\big[(1 - 2\mu\gamma + 4\gamma^2 L^2)\|\theta\|^2 -2\gamma \langle \theta, \varepsilon(\theta)\rangle + 2\beta \gamma  \\
         &\quad \quad+ 2\gamma^2 \big(2L^2 + \|\varepsilon(\theta)\|^2\big)\big]\big)
    \end{align*}
    We now condition on $\theta$ and use similar arguments to the proof of Proposition~\ref{prop:invariant_concentration}~\ref{prop:invariant_properties_c} to find that for $|\lambda| \leq \big(\overline{K}\gamma\sqrt{2}\big)^{-1},$ we have:
    \begin{align*}
        \E\big[\exp\big(\lambda^2&(-2\gamma \langle \theta, \varepsilon(\theta)\rangle + 2\gamma^2 \|\varepsilon(\theta)\|^2)\big)\vert \theta\big]\\
        &\leq \E\big[\exp\big(-\gamma (2\lambda)^2\langle \theta, \varepsilon(\theta)\rangle\big)\vert \theta\big]^{1/2}\E\big[\exp\big((2\lambda\gamma)^2 \|\varepsilon(\theta)\|^2\big)\vert \theta\big]^{1/2}\\
        &\leq \exp\big(8\gamma^2\lambda^4 \|\theta\|^2\overline{K}^2 + 2\lambda^2\gamma^2\overline{K}^2\big).
    \end{align*}
    We now further restrict $\lambda$ to $|\lambda| \leq \big(2\overline{K}\sqrt{2\gamma/\mu}\big)^{-1}$ and plug back above to find
    \begin{align*}
        \E\exp\big(\lambda^2\|\theta\|^2\big) &\leq \E \exp\big(\lambda^2\big[(1-2\mu\gamma + 4\gamma^2(L^2 + 2\lambda^2\overline{K}^2))\|\theta\|^2\\
        &\quad \quad + 2\gamma(\beta + \gamma (2L^2 + \overline{K}^2))\big]\big)\\
        &\leq \E \exp\big(\lambda^2\big[(1-\mu\gamma + 4\gamma^2L^2)\|\theta\|^2 + 2\gamma(\beta + \gamma (2L^2 + \overline{K}^2))\big]\big)\\
        &\leq \E \big[\exp\big(\lambda^2\|\theta\|^2\big)\big]^{1-\mu\gamma + 4\gamma^2L^2} \exp\big(2\lambda^2 \gamma(\beta + \gamma (2L^2 + \overline{K}^2))\big),
    \end{align*}
    where we used Jensen's inequality. Finally, using our choice of $\gamma,$ this leads to
    \begin{align*}
        \E\exp(\lambda^2\|\theta\|^2) &\leq \exp\Big(2\lambda^2\Big(\frac{\beta + \gamma(2L^2+\overline{K}^2)}{\mu - 4\gamma L^2}\Big)\Big)\\
        &\leq \exp\big((2\lambda)^2\big(\beta + \gamma(2L^2+\overline{K}^2)\big)/\mu\big),
    \end{align*}
    which implies the result.
\end{proof}
A contractive optimization inequality such as~\eqref{eq:contract} combined with a centered and uniform concentration condition on the gradient noise appear to be necessary to obtain results such as Propositions~\ref{prop:invariant_concentration} and~\ref{prop:weaker_subgauss}.

\subsection{Proof of Lemma~\ref{lem:pfinite_moments_nonunif_subexp}}\label{sec:proof_lem_pfinite_moments_nonunif_subexp}

Without loss of generality, we consider moments of even order. For $j\geq 1,$ denoting $M_{2j}^{2j} = \E\|\theta - \theta^\star\|^{2j}$ and starting from Equation~\eqref{eq:moments_inequality2} which was obtained in the proof of Proposition~\ref{prop:invariant_concentration} and using Lemma~\ref{lem:contraction} and our assumption on $\|\varepsilon(\theta)\|$ yields
\begin{align*}
    \big(1-(1-&\gamma\mu)^{2j}\big)M_{2j}^{2j} \leq \E\sum_{l=2}^{2j} \binom{2j}{l} \|\theta - \gamma \nabla\cL(\theta) - \theta^\star\|^{2j-l}(\gamma \|\varepsilon(\theta)\|)^{l} \\
    &\leq \E\sum_{l=2}^{2j} \binom{2j}{l} \big((1-\gamma\mu)\|\theta - \theta^\star\|\big)^{2j-l}\gamma^l\big( K\|\theta - \theta^\star\| + \underline{K}\big)^{l}\\
    &\leq \E\sum_{l=2}^{2j} \binom{2j}{l} \big((1-\gamma\mu)\|\theta - \theta^\star\|\big)^{2j-l}\gamma^l\sum_{k=0}^l\binom{l}{k} \big( K\|\theta - \theta^\star\|\big)^{l-k}\underline{K}^k\\
    &\leq \sum_{l=2}^{2j} \binom{2j}{l} (1-\gamma\mu)^{2j-l}\gamma^l \Big(K^l M_{2j}^{2j} + \sum_{k=1}^l\binom{l}{k} K^{l-k}\underline{K}^k M_{2j-k}^{2p-k}\Big).
\end{align*}
By sorting out the factors of $M_{2j}^{2j}$ and rearranging the terms, we find
\begin{align*}
    \Big(1 - (1-\gamma\mu)^{2j}&- \sum_{l=2}^{2j} \binom{2j}{l} (1-\gamma\mu)^{2j-l}(\gamma K)^l\Big)M_{2j}^{2j} \leq \\ &\sum_{l=2}^{2j} \binom{2j}{l} (1-\gamma\mu)^{2j-l}\gamma^l \sum_{k=1}^l\binom{l}{k} K^{l-k}\underline{K}^k M_{2j-k}^{2j-k}.
\end{align*}
Assuming that $M_i < \infty$ for $i < 2j,$ the above inequality would allow us to show that $M_{2j} < \infty$ as well provided that the factor of $M_{2j}^{2j}$ on the LHS is positive. We now use the inequalities 
\begin{equation}
    (1-x)^k \leq (1-kx) + k(k-1)x^2/2, \label{eq:devlim1}
\end{equation}
and
\begin{equation}
    (1-kx)\leq (1-x)^k,\label{eq:devlim2}
\end{equation}
valid for $x\geq 0$ and $k\in\N^*$ to find
\begin{align*}
    1 - (1-\gamma\mu)^{2j}&- \sum_{l=2}^{2j} \binom{2j}{l} (1-\gamma\mu)^{2j-l}(\gamma K)^l \\
    &= 1 - (1-\gamma (\mu-K))^{2j} + 2j\gamma K (1 - \gamma \mu)^{2j-1}\\
    &\geq 2j\gamma (\mu - K) - 2j(2j-1)\gamma^2 (\mu - K)^2/2 + 2j\gamma K(1-\gamma\mu)^{(2j-1)} \\
    &\geq 2j\gamma (\mu - K) - 2j(2j-1)\gamma^2 (\mu - K)^2/2 + 2j\gamma K(1-(2j-1)\gamma\mu) \\
    &= 2j\gamma \mu - 2j(2j-1)\gamma^2 ((\mu - K)^2/2 + \mu K) \\
    & = 2j\gamma (\mu - \gamma (2j-1)(\mu^2+K^2)/2) \\
    &\geq 0,
\end{align*}
where the first inequality uses~\eqref{eq:devlim1} with $x=\gamma(\mu-K), k=2j,$ the second one uses~\eqref{eq:devlim2} with $x=\gamma\mu, k=2j-1$ and the last one follows from the bound we imposed on $\gamma.$

Therefore, we can deduce that $M_{2j} < \infty.$ Since a similar argument works for $M_i < \infty$ with $i < 2p$ with a weaker condition on $\gamma,$ the result follows.

\subsection{Comparison with~\cite{dieuleveut2020bridging}}\label{sec:compare_with_18}

In Section~\ref{sec:proof_invariant_concentration}, we showed that for $p\geq1,$
\begin{align}
    M_{2p}(k) &= \big(\E\|\theta_k - \theta^\star\|^{2p}\big)^{1/(2p)} \nonumber\\
    &\leq 2p\Big((1-\gamma\mu)^{k}\|\theta_0 - \theta^\star\|^2 + (1-(1-\gamma\mu)^{k+1})4\gamma\overline{K}^2/\mu\Big)^{1/2}\nonumber
\end{align}
and that
    \begin{equation*}
        M^{2p}_{2p} := \int_{\R^d}\|\theta - \theta^*\|^{2p}\pi_{\gamma}(d\theta) \leq ((2p-1)K_{\pi})^{2p},
    \end{equation*}
with $K_{\pi} = 2 \overline{K}\sqrt{\gamma/\mu}.$ The previous bounds can be compared to~\cite[Lemma 13]{dieuleveut2020bridging} which states that, for a given $p,$:
\begin{equation}
    M_{2p}^2(k) \leq \big(1-2\gamma\mu(1 - C_p\gamma L/2)\big)^k M_{2p}^2(0) + \frac{D_p \gamma\tau^2_{2p}}{\mu}\label{eq:bridging_M2pk_bound}
\end{equation}
and
\begin{equation}
    M^{2p}_{2p} \leq (D_p \gamma \tau^2_{2p}/\mu)^p,\label{eq:bridging_M2p_bound}
\end{equation}
where $\tau_{2p}$ is an upperbound on $\big\|\|\varepsilon(\theta^\star)\|\big\|_{L_{2p}}$ and $C_p, D_p$ are constants depending only on $p.$ The dependency w.r.t. $\mu, \gamma$ and $p$ is therefore similar. A comparison between the results, assumptions and proof methods of~\cite[Lemma 13]{dieuleveut2020bridging} and Proposition~\ref{prop:invariant_concentration}~\ref{prop:invariant_properties_d} and Lemma~\ref{lem:pfinite_moments_nonunif_subexp} is therefore in order. We detail the differences and similarities below : 
\begin{itemize}
\item \underline{Concentration bounds and step-size condition :}~\cite[Lemma 13]{dieuleveut2020bridging} requires a step-size $\gamma\leq 1/(LC_p).$ The involved constants $C_p, D_p$ are exponential in $p$ which is reflected on~\eqref{eq:bridging_M2pk_bound} and~\eqref{eq:bridging_M2p_bound} and the step-size. In Proposition~\ref{prop:invariant_concentration}~\ref{prop:invariant_properties_d}, we do not add any significant constraint on the step-size beyond the convergence condition of Theorem~\ref{thm:ergodicity} and show that the iterates $\theta_k$ are sub-Gaussian/sub-exponential as well as $\pi_\gamma$ with a limit constant $O(\overline{K}\sqrt{\gamma/\mu}).$ In Lemma~\ref{lem:pfinite_moments_nonunif_subexp}, we assume a step-size in $O(1/p)$ and show finiteness of the $p$-moment of $\pi_\gamma$ without an explicit bound.
\item \underline{Assumptions :} Our Assumption~\ref{asm:grad_concentration}~\ref{asm:grad_subexp} on the gradient noise is uniform in $\theta$ allowing to derive Proposition~\ref{prop:invariant_concentration}~\ref{prop:invariant_properties_d}. In contrast, the upper bound assumed in Lemma~\ref{lem:pfinite_moments_nonunif_subexp} is much weaker taking arbitrarily high values depending on $\theta.$ The latter is more comparable with~\cite[Assumption A4]{dieuleveut2020bridging} which only assumes a moment bound on the noise at the optimum $\varepsilon(\theta^\star)$ and combines it with almost sure co-coercivity to obtain bounds for $\varepsilon(\theta)$ with arbitrary $\theta$ in the proofs.
\item \underline{Method :} The proofs of Proposition~\ref{prop:invariant_concentration}~\ref{prop:invariant_properties_d} and Lemma~\ref{lem:pfinite_moments_nonunif_subexp} and~\cite[Lemma 13]{dieuleveut2020bridging} are similarly based on the development in a trinomial sum of the quantity
\begin{equation*}
\|\theta - \gamma G(\theta) - \theta^\star\|^{2p} = \big(\|\theta - \gamma\nabla\mathcal{L}(\theta) - \theta^\star\|^2 - 2\gamma \langle \theta - \gamma\nabla\mathcal{L}(\theta) - \theta^\star, \varepsilon(\theta)\rangle + \gamma^2\|\varepsilon(\theta)\|^2 \big)^p.
\end{equation*}
This approach appears to combine better with Assumption~\ref{asm:grad_concentration}~\ref{asm:grad_subexp} allowing to make our estimation of $M_{2p}$ and $M_{2p}(k)$ in the proof of Proposition~\ref{prop:invariant_concentration}~\ref{prop:invariant_properties_d} tighter.
\item \underline{Induction index :} The proof of Lemma 13 in~\cite{dieuleveut2020bridging} uses an induction argument over the iteration index $k$ of $\theta_k.$ In the proof of Proposition~\ref{prop:invariant_concentration}~\ref{prop:invariant_properties_d}, we use a similar induction in order to handle $M_{2p}(k)$ but arguing for all $p$ rather than a single one. In contrast, the part handling $M_{2p}$ (for the invariant distribution) is proved by induction over the moment orders via $p.$
\end{itemize}

\subsection{Proof of Proposition~\ref{prop:invariant_special_concentration}}\label{sec:proof_invariant_special_concentration}

We now prove~\ref{prop:invariant_properties_e}. Let $\theta_{k+1}$ be an SGD iterate and define the \emph{gradient step} function $g_\gamma$ as
\begin{equation*}
    g_{\gamma}(\vartheta) = \vartheta - \gamma \nabla\cL(\vartheta)\quad \text{ for }\quad \vartheta\in\R^d.
\end{equation*}
Note that, by Lemma~\ref{lem:contraction}, $g_{\gamma}$ is $(1-\gamma \mu)$-Lipschitz. Under Assumption~\ref{asm:grad_special_concentration}~\ref{asm:grad_special_subgauss}, we have for all $\lambda \in \R:$
\begin{align*}
    \sup_{f\in \Lip(\R^d)} &\E \exp\big(\lambda (f(\theta_{k+1} ) \!-\! \E f(\theta_{k+1} ))\big) \\
    &= \sup_{f\in \Lip(\R^d)} \E \exp\big(\lambda (f(\theta_{k} \!-\!\gamma G(\theta_{k}) ) \!-\! \E f(\theta_{k} \!-\!\gamma G(\theta_{k}) ))\big) \\
    &= \sup_{f\in \Lip(\R^d)} \E \exp\Big(\lambda \big(f((\theta_{k} - \gamma \nabla\cL(\theta_{k})) - \gamma \varepsilon(\theta_{k})) \\
    &\quad - \E f((\theta_{k} - \gamma \nabla\cL(\theta_{k})) - \gamma \varepsilon(\theta_{k}))\big)\Big) \\
    &= \sup_{f\in \Lip(\R^d)} \E \exp\Big(\lambda \big(f( g_{\gamma}(\theta_{k})) - \E f( g_{\gamma}(\theta_{k}))\big) + \lambda \big( f( g_{\gamma}(\theta_{k}) - \gamma \varepsilon(\theta_{k})) \\ 
    &\quad- f( g_{\gamma}(\theta_{k})) - \E [f( g_{\gamma}(\theta_{k}) - \gamma \varepsilon(\theta_{k})) - f( g_{\gamma}(\theta_{k}))]\big)\Big).
\end{align*}
Conditioning on $\theta_{k},$ it is clear that 
\begin{align*}
    \phi\big(G(\theta_{k})\big) &:= f\big( g_{\gamma}(\theta_{k}) - \gamma (G(\theta_{k})-\nabla\cL (\theta_{k}))\big) - f\big( g_{\gamma}(\theta_{k})\big) \\
    &= f\big( g_{\gamma}(\theta_{k}) - \gamma \varepsilon(\theta_{k})\big) - f\big( g_{\gamma}(\theta_{k})\big)
\end{align*}
is a $\gamma$-Lipschitz function of $G(\theta_{k}).$ In addition, $f( g_{\gamma}(\theta_{k}))$ is a $(1-\gamma\mu)$-Lipschitz function of $\theta_{k},$ therefore by reparametrizing the space of Lipschitz functions, we find
\begin{align}
    \sup_{f\in \Lip(\R^d)} &\E \exp\big(\lambda (f(\theta_{k+1} ) \!-\! \E f(\theta_{k+1} ))\big) \nonumber\\
    &\stackrel{\Circled{1}}{\leq} \sup_{f\in \Lip(\R^d)} \E \exp\Big(\lambda \big(f( g_{\gamma}(\theta_{k})) - \E f( g_{\gamma}(\theta_{k}))\big)\Big)\exp(\lambda^2 \gamma^2 K^2)\nonumber\\
    &\stackrel{\Circled{2}}{\leq} \sup_{f\in \Lip(\R^d)} \E \exp\Big(\lambda(1-\gamma\mu) \big(f(\theta_{k}) - \E f(\theta_{k})\big)\Big)\exp(\lambda^2 \gamma^2 K^2)\label{eq:spec_conc_prf_checkpt}
\end{align}
where $\Circled{1}$ uses that $\phi\big(G(\theta_k)\big)$ is $\gamma$-Lipschitz together with Assumption~\ref{asm:grad_special_concentration}~\ref{asm:grad_special_subgauss} and $\Circled{2}$ uses that $f( g_{\gamma}(\cdot))$ is $(1-\gamma\mu)$-Lipschitz replacing it by $(1-\gamma\mu)f(\cdot).$ The previous relationship can be iterated to find
\begin{alignat*}{2}
    &&&\sup_{f\in \Lip(\R^d)} \E \exp\big(\lambda (f(\theta_{k+1} ) \!-\! \E f(\theta_{k+1} ))\big) \\
    \leq &&&\sup_{f\in \Lip(\R^d)} \E \exp\big(\lambda(1-\gamma\mu)^{k+1} \big(f(\theta_{0}) - \E f(\theta_{0})\big)\big)\exp\Big(\lambda^2 \gamma^2 K^2 \sum_{i=0}^k(1-\gamma\mu)^{2i}\Big)\\
    = &&&\exp\Big(\lambda^2 \gamma^2 K^2 \frac{ 1 - (1-\gamma\mu)^{2k+2}}{1 - (1-\gamma\mu)^2}\Big),
\end{alignat*}
which implies the desired property for $\theta_{k+1}.$

In order to obtain the property for the limit distribution, we consider a stationary $\theta_k \sim \pi_{\gamma}$ such that  $\theta_{k+1} \sim \pi_{\gamma}$ as well. Resuming from~\eqref{eq:spec_conc_prf_checkpt} and using Jensen's inequality, we get that for $\theta\sim\pi_\gamma:$
\begin{align*}
    \sup_{f\in \Lip(\R^d)} &\E \exp\big(\lambda (f(\theta ) \!-\! \E f(\theta ))\big) \nonumber\\
    &\leq \Big(\sup_{f\in \Lip(\R^d)} \E \exp\big(\lambda\big(f(\theta) - \E f(\theta)\big)\big)\Big)^{1-\gamma\mu}\exp(\lambda^2 \gamma^2 K^2)\\
    &\implies \sup_{f\in \Lip(\R^d)} \E \exp\big(\lambda (f(\theta ) \!-\! \E f(\theta ))\big) \leq \exp(\lambda^2 K^2 \gamma/\mu).
\end{align*}
The proof of~\ref{prop:invariant_properties_f} is analogous except for the fact that the above inequalities only hold for $|\lambda| \leq (\gamma K)^{-1}$ when $f(G(\theta))$ is $K$-sub-exponential for all $f\in \Lip(\R^d)$. The rest of the proof is unchanged and since $\big(K\sqrt{\gamma/\mu}\big)^{-1} < (\gamma K)^{-1}$, we similarly obtain the sub-exponential properties.

\subsection{Proof of Proposition~\ref{prop:wasserstein_convergence}}
\label{sec:proof_wasserstein_convergence}
Let $\theta_1 \sim \nu_1$ and $\theta_2 \sim \nu_2$ be random variables such that $\cW_2^2(\nu_1, \nu_2) = \E\big[\|\theta_1 - \theta_2\|^2\big].$ Such a pair of variables exists by~\cite[Theorem 4.1]{villani2009optimal}.

In this proof, we will use the notations $G(\theta, \zeta)$ and $\varepsilon_{\zeta}(\theta)$ for the gradient and noise samples due to the particular relevance of the sample $\zeta$ in this context. We consider the set of couplings of the distributions $\nu_1 P_\gamma$ and $\nu_2 P_\gamma$ through the random variables $G(\theta_1, \zeta_1)$ and $G(\theta_2, \zeta_2)$ such that
\begin{align*}
     \theta_1 - \gamma G(\theta_1, \zeta_1) \sim \nu_1 P_\gamma \quad &\text{and}\quad  \theta_2 - \gamma G(\theta_2, \zeta_2) \sim \nu_2 P_\gamma.
\end{align*}
Recall also that by Assumption~\ref{asm:gradient}~\ref{asm:gradient_centered}, for $j=1,2,$ conditionally on $\theta_j,$ we have
\begin{equation}\label{eq:zeroexpect_grad_noise}
    G(\theta_j, \zeta_j) = \nabla\cL(\theta_j) + \varepsilon_{\zeta_j}(\theta_j) \quad \text{with}\quad \E [\varepsilon_{\zeta_j}(\theta_j)\vert \theta_j] = 0.
\end{equation}
Taking the infimum over the variables $\varepsilon_{\zeta_j}(\theta_j),$ we compute
\begin{align*}
    \cW_{2}^2(\nu_1P_{\gamma}, \nu_2P_{\gamma}) &= \inf_{\varepsilon_{\zeta_j}\!(\theta_j)}\E \big\|\theta_1 - \gamma G(\theta_1, \zeta_1) - (\theta_2 - \gamma G(\theta_2, \zeta_2))\big\|^2 \\
    &= \inf_{\varepsilon_{\zeta_j}\!(\theta_j)} \E \big[\big\|\theta_1 - \gamma \nabla \cL(\theta_1) - (\theta_2 - \gamma \nabla \cL(\theta_2))\big\|^2 \\
    &\quad -2\gamma\langle \theta_1 - \gamma\nabla\cL(\theta_1) - (\theta_2 - \gamma \nabla\cL(\theta_2)), \varepsilon_{\zeta_1}(\theta_1) - \varepsilon_{\zeta_2}(\theta_2) \rangle \\
    &\quad +\gamma^2 \|\varepsilon_{\zeta_1}(\theta_1) - \varepsilon_{\zeta_2}(\theta_2)\|^2 \big]\\
    &\stackrel{\Circled{1}}{=} \E \big[\big\|\theta_1 - \gamma \nabla \cL(\theta_1) - (\theta_2 - \gamma \nabla \cL(\theta_2))\big\|^2 \\
    &\quad +\gamma^2 \inf_{\varepsilon_{\zeta_j}\!(\theta_j)}\E\big[\|\varepsilon_{\zeta_1}(\theta_1) - \varepsilon_{\zeta_2}(\theta_2)\|^2 \vert \theta_1, \theta_2\big]\big]\\
    &\stackrel{\Circled{2}}{\leq} \E\big[(1-\gamma\mu)^2\|\theta_1 - \theta_2\|^2 + \gamma^2 \cW_2^2\big(\mathcal{D}(\varepsilon_{\zeta_1}(\theta_1)), \mathcal{D}(\varepsilon_{\zeta_2}(\theta_2))\big)\big] \\
    &\stackrel{\Circled{3}}{\leq} \E\big[\big((1-\gamma\mu)^2 + \gamma^2 L^2_{\cW} \big)\|\theta_1 - \theta_2\|^2 \big] \\
    &= \big((1-\gamma\mu)^2 + \gamma^2L^2_{\cW} \big) \cW_2^2(\nu_1, \nu_2),
\end{align*}
where $\Circled{1}$ is obtained by conditioning on $\theta_1, \theta_2$ and using~\eqref{eq:zeroexpect_grad_noise}, $\Circled{2}$ uses Lemma~\ref{lem:contraction} and $\Circled{3}$ uses Assumption~\ref{asm:gradient_wasserstein}.

Since $\gamma < \frac{2\mu}{\mu^2 + L^2_{\cW} }$ by assumption, the obtained inequality shows that the mapping $\nu \to \nu P_{\gamma}$ is a contraction in the space $\mathcal{P}_2(\R^d)$ endowed with the $\cW_2$ metric which is complete and separable by~\cite[Theorem 6.18]{villani2009optimal}. Consequently, by Banach's fixed-point theorem, the previous mapping admits a unique fixed point $\pi_{\gamma} \in \mathcal{P}_2(\R^d)$ i.e. such that $\pi_{\gamma} P_{\gamma} = \pi_{\gamma}.$ Moreover, for any initial measure $\xi_0 \in \mathcal{P}_2(\R^d),$ the sequence $(\xi_n)_{n\in \N}$ defined by $\xi_n = \xi_0 P_{\gamma}^n$ converges to $\pi_{\gamma}$ w.r.t. the $\cW_2$ metric.

\hfill \qedsymbol{}

Our Wasserstein convergence proof may be compared to that of~\cite[Proposition 2~(a)]{dieuleveut2020bridging}. Both proofs leverage the unbiased property of the gradient samples and the strong convexity of the objective. However, the combination of Lipschitz smoothness of the objective and Assumption~\ref{asm:gradient_wasserstein} in our setting is replaced by the average co-coercivity condition~\cite[Assumption A7]{dieuleveut2020bridging} (with $p=2$) which, in our notations, corresponds to
\begin{equation*}
    L'\langle \nabla\cL(\theta) - \nabla\cL(\theta'), \theta - \theta'\rangle \geq \E\big[\|G(\theta, \zeta) - G(\theta', \zeta)\|^2\big],
\end{equation*}
for some $L' > 0.$ Note that using the same sample $\zeta$ is important for this inequality to hold. As discussed following Proposition~\ref{prop:wasserstein_convergence}, the step-size condition $\gamma \leq 2/L'$ of~\cite{dieuleveut2020bridging} implies $\gamma \leq 2/L.$ Moreover, for certain cases like the example given in Section~\ref{sec:stepsize_compare}, it turns out to be equivalent to Proposition~\ref{prop:wasserstein_convergence}'s requirement. Finally, our proof leverages the fixed point theorem to establish the existence of a unique limit distribution $\pi_\gamma$ while~\cite{dieuleveut2020bridging} uses a less direct argument via a Cauchy sequence.

\subsection{Proof of Corollary~\ref{cor:concentration}}\label{sec:proof_cor_concentration}

From the proof of Proposition~\ref{prop:invariant_concentration}~\ref{prop:invariant_properties_c}, we have thanks to~\eqref{eq:subgauss_prf_checkpt2} that for $\lambda\leq 1/(2\overline{K}\sqrt{\gamma/\mu}),$
\begin{align}
    \E\exp\big(\lambda^2\|\theta_{T} - \theta^\star\|^2\big) &\leq \exp\Big(\lambda^2\Big((1-\gamma\mu)^T\|\theta_0 - \theta^\star\|^2 + \big(1-(1-\gamma\mu)^T\big)\gamma\overline{K}^2/\mu\Big)\Big)\nonumber\\
    &\leq\exp\Big(\lambda^2\Big((1-\gamma\mu)^T\|\theta_0 - \theta^\star\|^2 + \gamma\overline{K}^2/\mu\Big)\Big).\label{eq:gaussian_cor}
\end{align}
Using Chernoff's method, for $\lambda > 0,$ we find 
\begin{align}
    \Proba\big(\|\theta_T - \theta^\star\|^2 > \epsilon\big) &= \Proba\big(\exp\big(\lambda^2\|\theta_T - \theta^\star\|\big) > \exp(\lambda^2\epsilon)\big)\nonumber\\ 
    &\leq \E\big[\exp\big(\lambda^2\|\theta_T - \theta^\star\|^2 \big)\big]\exp(-\lambda^2 \epsilon)=:\delta.\label{eq:gaussian_cor2}
\end{align}
Moreover, by setting $\gamma = \log(\mu^2 T\|\theta_0 - \theta^\star\|^2/\overline{K}^2)/(\mu T),$ we get $(1-\gamma\mu)^T\|\theta_0 - \theta^\star\|^2 \leq \overline{K}^2/(\mu^2 T).$ We can then plug~\eqref{eq:gaussian_cor} into~\eqref{eq:gaussian_cor2}, take $\lambda= 1/(2\overline{K}\sqrt{\gamma/\mu})$ and solve for $\epsilon$ to find that with probability at least $1-\delta$ we have
\begin{equation*}
    \big\|\theta_T - \theta^\star\big\|^2 \leq \frac{\overline{K}^2}{\mu^2T}\Big(1 + \log\big(\mu^2 T\|\theta_0 - \theta^\star\|^2/\overline{K}^2\big)\big(1+4\log(1/\delta)\big) \Big),
\end{equation*}
as desired. The second part of the corollary is obtained by combining the sub-exponential property of Proposition~\ref{prop:invariant_concentration}~\ref{prop:invariant_properties_d} with Lemma~\ref{lem:subexp} for the same choice of $\gamma.$

\subsection{Proof of Corollary~\ref{cor:special_concentration}}\label{sec:proof_cor_special_concentration}

We consider the function $f(\theta) = \|\theta - \theta^\star\|$ and denote $\Delta_T = f(\theta_T) - \E f(\theta_T).$ Using Proposition~\ref{prop:invariant_special_concentration}~\ref{prop:invariant_properties_e} and Chernoff's method for $t > 0$ and $\lambda > 0,$ we have
\begin{equation}\label{eq:chernoff_cor}
    \Proba(\Delta_T > t ) = \Proba (e^{\lambda \Delta_T} > e^{\lambda t} ) \leq \E \exp\big(\lambda \Delta_T - \lambda t\big) \leq  \exp\big(\lambda^2 K_{\pi}(T)^2 - \lambda t\big).
\end{equation}
After minimizing over $\lambda,$ we get for $\delta > 0,$ with probability at least $1 - \delta,$ the following inequality holds:
\begin{equation}\label{eq:subgauss_confidence}
    \Delta_T \leq 2 K_{\pi}(T) \sqrt{\log(1/\delta)}.
\end{equation}
Additionally, using Proposition~\ref{prop:invariant_properties}~\ref{prop:invariant_properties_bprime} with $\gamma\leq\frac{\mu}{\mu^2 + L_\sigma^2},$ we have $\E f(\theta_T) = \E \|\theta_T - \theta^\star\| \leq \sqrt{\E \|\theta_T - \theta^\star\|^2},$ with
\begin{equation}
    \E\|\theta_T - \theta^\star\|^2 \leq \big(1-\gamma\mu\big)^T\|\theta_{0} - \theta^\star\|^2 + \frac{\gamma\sigma^2}{\mu}.\label{eq:var_bdd}
\end{equation}
Combining with~\eqref{eq:subgauss_confidence} and replacing the value of $\gamma$ yields~\eqref{eq:special_concentration_subgauss}.

To obtain~\eqref{eq:special_concentration_subexp}, we proceed similarly using Proposition~\ref{prop:invariant_special_concentration}~\ref{prop:invariant_properties_f} this time. Applying the constraint $|\lambda| \leq (\gamma K)^{-1}$ (see proof of Proposition~\ref{prop:invariant_special_concentration}) into the optimization of~\eqref{eq:chernoff_cor} yields
\begin{equation}\label{eq:subexp_confidence}
    \Proba(\Delta_{T} > t ) \leq \begin{cases}\exp\Big(\frac{-t^2}{4\gamma K^2/\mu}\Big) & \text{if}\quad t\leq 2K/\mu \\
     \exp\Big(\frac{-t}{2\gamma K}\Big)& \text{otherwise.}
    \end{cases}
\end{equation}
We then express $t$ in terms of the failure probability $\delta$ and combine with~\eqref{eq:var_bdd} as before to find
\begin{align*}
    \Proba\bigg(\big\| \theta - \theta^\star\big\| >  \sqrt{\frac{\gamma\sigma^2}{\mu}} + &2K\bigg(\sqrt{\frac{ \gamma\log(1/\delta)}{ \mu}} \vee \gamma \log(1/\delta) \bigg) \bigg) \leq \delta.
\end{align*}
We then replace the value of $\gamma$ to finish the proof.

\begin{lemma}\label{lem:geometric_covariances}
    Grant Assumption~\ref{asm:smooth_strongconvex},~\ref{asm:gradient},~\ref{asm:gradient_wasserstein} and~\ref{asm:linear_grad}. Let the Markov chain $(\theta_t)_{t \geq 0}$ be initialized with $\theta_0\sim\nu$ and $\gamma$ be chosen as in Proposition~\ref{prop:wasserstein_convergence}. The sequence of SGD iterates $\theta_0, \dots, \theta_n$ satisfies for $0 \leq i,j \leq n:$
    \begin{equation*}
        \E\langle \theta_i - \theta^\star, \theta_j - \theta^\star\rangle \leq 2(1-\gamma\mu)^{|i-j|}\big(\big((1-\gamma\mu) + \gamma^2L^2_{\cW}\big)^{i}\cW_2^2 (\nu, \pi) + \Var_{\pi_{\gamma}}(\theta)\big).
    \end{equation*}
\end{lemma}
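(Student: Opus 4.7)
The plan is to exploit Assumption~\ref{asm:linear_grad} to get the clean recursion $\theta_{t+1} - \theta^\star = (I-\beta\Sigma)(\theta_t - \theta^\star) - \beta \varepsilon(\theta_t)$, then use conditional centering of the noise to decouple indices $i$ and $j$. Without loss of generality take $i \leq j$. Iterating the recursion from $t = i$ to $t = j-1$ expresses $\theta_j - \theta^\star$ as $(I-\beta\Sigma)^{j-i}(\theta_i-\theta^\star)$ plus a sum involving the noises $\varepsilon(\theta_{i+l})$ for $l = 0,\dots,j-i-1$. Since $\E[\varepsilon(\theta_{i+l}) \mid \theta_{i+l}] = 0$ by Assumption~\ref{asm:gradient}, the tower property gives $\E[\varepsilon(\theta_{i+l}) \mid \theta_i] = 0$ for every $l \geq 0$, and therefore $\E[\theta_j - \theta^\star \mid \theta_i] = (I-\beta\Sigma)^{j-i}(\theta_i-\theta^\star)$. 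Taking conditional expectation inside the inner product yields
\begin{equation*}
    \E\langle \theta_i - \theta^\star, \theta_j - \theta^\star\rangle = \E\big\langle \theta_i - \theta^\star, (I-\beta\Sigma)^{j-i}(\theta_i - \theta^\star)\big\rangle.
\end{equation*}

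Next I would use the spectral control on $\Sigma$. Assumption~\ref{asm:smooth_strongconvex} combined with linearity forces the spectrum of $\Sigma$ to lie in $[\mu, L]$, and since the step-size hypothesis implies $\beta \leq 2/(\mu+L)$, every eigenvalue of $I-\beta\Sigma$ lies in $[1-\beta L, 1-\beta\mu] \subset [-(1-\beta\mu), 1-\beta\mu]$. Hence $\|(I-\beta\Sigma)^{j-i}\|_{\op} \leq (1-\beta\mu)^{j-i}$, and Cauchy-Schwarz gives
\begin{equation*}
    \E\langle \theta_i - \theta^\star, \theta_j - \theta^\star\rangle \leq (1-\beta\mu)^{j-i}\, \E\|\theta_i - \theta^\star\|^2.
\end{equation*}

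It remains to bound $\E\|\theta_i - \theta^\star\|^2$. Taking an optimal $\cW_2$-coupling between $\theta_i \sim \nu P_\beta^i$ and a stationary $\theta \sim \pi_\beta$, the elementary inequality $\|a\|^2 \leq 2\|a-b\|^2 + 2\|b\|^2$ yields $\E\|\theta_i - \theta^\star\|^2 \leq 2\cW_2^2(\nu P_\beta^i, \pi_\beta) + 2\E_{\theta\sim\pi_\beta}\|\theta - \theta^\star\|^2$. Iterating the contraction of Proposition~\ref{prop:wasserstein_convergence} bounds the first term by $((1-\beta\mu)^2 + \beta^2 L_{\cW})^i \cW_2^2(\nu, \pi_\beta)$. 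For the second term, linearity of $\nabla\cL$ combined with Proposition~\ref{prop:invariant_properties}\ref{prop:invariant_properties_a} gives $\bar{\theta}_\beta = \theta^\star$, so that $\E_{\theta\sim\pi_\beta}\|\theta - \theta^\star\|^2 = \Var_{\pi_\beta}(\theta)$. Finally, the stated form is recovered via the crude bound $(1-\beta\mu)^2 \leq 1-\beta\mu$.

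I do not anticipate a real obstacle: the argument is essentially an assembly of ingredients already proven. The mildly delicate points are (i) checking that the hypothesis on $\beta$ is tight enough to ensure $\|(I-\beta\Sigma)^k\|_{\op} \leq (1-\beta\mu)^k$ uniformly in $k$, and (ii) remembering that linearity of $\nabla\cL$ is precisely what identifies $\E_{\pi_\beta}\|\theta - \theta^\star\|^2$ with $\Var_{\pi_\beta}(\theta)$ via Proposition~\ref{prop:invariant_properties}\ref{prop:invariant_properties_a}, which is why Assumption~\ref{asm:linear_grad} is indispensable in the statement.
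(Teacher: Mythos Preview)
Your proposal is correct and follows essentially the same route as the paper. The only cosmetic difference is that you write the recursion explicitly in matrix form via $I-\beta\Sigma$ and bound its operator norm spectrally, whereas the paper phrases the same step as deterministic gradient descent iterates $\widecheck{\theta}_k$ and invokes Lemma~\ref{lem:contraction}; the coupling argument for $\E\|\theta_i-\theta^\star\|^2$ is identical.
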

\begin{proof}
We assume without loss of generality that $i \leq j.$ Since the gradient is linear it commutes with the expectation. Therefore, by conditioning over $\theta_{j-1}$ and later over $\theta_{j-2}$ we find
\begin{align*}
    \E\langle \theta_{j} - \theta^\star&, \theta_i - \theta^\star \rangle =\E \langle \theta_{j-1} - \gamma G(\theta_{j-1}) -\theta^\star, \theta_i - \theta^\star\rangle \\
    &=\E \langle \theta_{j-1} - \gamma \nabla\cL(\theta_{j-1}) -\theta^\star, \theta_i - \theta^\star\rangle \\
    &=\E \langle \theta_{j-2} -\gamma G(\theta_{j-2}) - \gamma \nabla\cL(\theta_{j-2} - \gamma G(\theta_{j-2})) -\theta^\star, \theta_i - \theta^\star\rangle \\
    &=\E \langle \theta_{j-2} -\gamma \nabla\cL(\theta_{j-2}) - \gamma \nabla\cL(\theta_{j-2} - \gamma \nabla\cL(\theta_{j-2})) -\theta^\star, \theta_i - \theta^\star\rangle .
\end{align*}
It is clear that the previous steps can be repeated for the remaining indices $j-3, j-4,\dots$ down to $i$ at which point the following identity is reached
\begin{equation*}
    \E\langle \theta_{j} - \theta^\star, \theta_i - \theta^\star \rangle = \E \langle\widecheck{\theta}_{j}-\theta^\star, \theta_i - \theta^\star\rangle,
\end{equation*}
where $\widecheck{\theta}_{j}$ is recursively defined by $\widecheck{\theta}_{i} = \theta_i$ and $\widecheck{\theta}_k = \widecheck{\theta}_{k-1} - \gamma \nabla\cL(\widecheck{\theta}_{k-1})$ for $i < k\leq j.$

Using Cauchy-Schwarz and iterating the inequality \begin{equation*}
    \|\widecheck{\theta}_{k}-\theta^\star\| \leq (1-\gamma\mu)\|\widecheck{\theta}_{k-1}-\theta^\star\|,
\end{equation*} yields that
\begin{equation*}
    \E\langle \theta_{j} - \theta^\star, \theta_i - \theta^\star \rangle \leq (1-\gamma\mu)^{j-i}\E \|\theta_i - \theta^\star\|^2.
\end{equation*}
Now, by~\cite[Theorem 4.1]{villani2009optimal}, there exists a random variable $\widetilde{\theta} \sim \pi_\gamma$ such that the coupling $(\theta_i, \widetilde{\theta})$ satisfies
\begin{align*}
    \E\|\theta_i - \widetilde{\theta}\big\|^2 &= \cW_2^2(\mathcal{D}(\theta_i), \pi) = \cW_2^2 (\nu P^i, \pi) \leq \big((1-\gamma\mu) + \gamma^2L^2_{\cW}\big)^{i}\cW_2^2 (\nu, \pi),
\end{align*}
where the inequality comes from Proposition~\ref{prop:wasserstein_convergence}. It then only remains to write
\begin{align*}
    \E \|\theta_i - \theta^\star\|^2 &\leq 2\big(\E \|\theta_i - \widetilde{\theta}\|^2 + \E \|\widetilde{\theta} - \theta^\star\|^2\big) \\
    &\leq 2\big((1-\gamma\mu) + \gamma^2L^2_{\cW}\big)^{i}\cW_2^2 (\nu, \pi) + 2 \Var_{\pi_{\gamma}}(\theta),
\end{align*}
which implies the result. The case $i\geq j$ is handled similarly.
\end{proof}

\begin{lemma}\label{lem:matrix_sum}
    Let $A \in \R^{n\times n}$ be a matrix with positive entries such that there exists $C>0$ and $0< \alpha < 1$ such that 
    \begin{equation*}
        A_{ij} \leq C\alpha^{|j-i|} \quad \text{ for }\quad 1\leq i,j \leq n,
    \end{equation*}
    then we have 
    \begin{equation*}
        \sum_{i,j}A_{ij} \leq C \Big(n+\frac{2\alpha}{1-\alpha}\Big(n - \Big(\frac{1-\alpha^n}{1-\alpha}\Big)\Big)\Big).
    \end{equation*}
\end{lemma}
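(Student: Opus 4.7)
The plan is to split the double sum $\sum_{i,j} A_{ij}$ according to the value of the index difference $k := |j-i|$, which ranges from $0$ to $n-1$. The diagonal ($k=0$) contributes at most $Cn$ since each of the $n$ entries is bounded by $C\alpha^0 = C$. For each $k\in\{1,\dots,n-1\}$, there are exactly $n-k$ ordered pairs $(i,j)$ with $j-i = k$, and, by symmetry, the same number with $i - j = k$. Each such entry is bounded by $C\alpha^k$. Combining these observations yields the bound
\begin{equation*}
    \sum_{i,j} A_{ij} \;\leq\; C n \;+\; 2C \sum_{k=1}^{n-1} (n-k)\,\alpha^k.
\end{equation*}

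Next, I would carry out the closed-form evaluation of $S_n := \sum_{k=1}^{n-1}(n-k)\alpha^k$. Splitting it as $S_n = n\sum_{k=1}^{n-1}\alpha^k - \sum_{k=1}^{n-1} k\,\alpha^k$ and applying the standard geometric-series identities $\sum_{k=1}^{n-1}\alpha^k = \frac{\alpha - \alpha^n}{1-\alpha}$ and $\sum_{k=1}^{n-1} k\alpha^k = \frac{\alpha - n\alpha^n + (n-1)\alpha^{n+1}}{(1-\alpha)^2}$ (the latter obtained by differentiating the geometric sum), a direct simplification gives
\begin{equation*}
    S_n \;=\; \frac{\alpha}{1-\alpha}\bigl(n(1-\alpha) - (1-\alpha^n)\bigr)\big/(1-\alpha) \;=\; \frac{\alpha}{1-\alpha}\Bigl(n - \frac{1-\alpha^n}{1-\alpha}\Bigr).
\end{equation*}
Inserting this expression into the previous inequality produces exactly the stated bound.

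There is essentially no real obstacle here, since the argument is a straightforward reindexing combined with a textbook geometric-series computation. The only minor care point is to perform the algebraic simplification cleanly so that the common factor $\alpha/(1-\alpha)$ extracts as written in the claim, and to remember that the symmetry factor $2$ applies only to off-diagonal pairs so that the diagonal term $Cn$ is not double counted.
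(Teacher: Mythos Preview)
Your proposal is correct and follows essentially the same approach as the paper: both split off the diagonal (contributing $Cn$), bound the off-diagonal part by $2C$ times a geometric-type sum, and evaluate that sum in closed form. The only cosmetic difference is that you group terms by the diagonal index $k=|j-i|$ and use the differentiated geometric series for $\sum k\alpha^k$, whereas the paper keeps the double sum $\sum_{i<j}\alpha^{j-i}$ and evaluates the inner geometric sum in $j$ first; the two computations are equivalent reorderings of the same sum.
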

\begin{proof}
    Straightforward computations yield
    \begin{align*}
        \sum_{i,j}A_{ij} &= \sum_{i=1}^n A_{ii} + 2\sum_{i<j}A_{ij} \leq nC + 2\sum_{i<j}A_{ij},
    \end{align*}
    and we have
    \begin{align*}
        \sum_{i=1}^n\sum_{j=i+1}^n A_{ij} &\leq C \sum_{i=1}^n\sum_{j=i+1}^n \alpha^{j-i} = C \alpha \sum_{i=1}^n\frac{1-\alpha^{n-i}}{1-\alpha}\\
        &= \frac{C\alpha}{1 - \alpha}\Big(n - \sum_{i=1}^n \alpha^{n-i}\Big) = \frac{C\alpha}{1 - \alpha}\Big(n - \frac{1-\alpha^n}{1-\alpha}\Big).
    \end{align*}
\end{proof}

\subsection{Proof of Theorem~\ref{thm:average_concentration}}\label{sec:proof_thm_average_concentration}

We introduce the notations $\theta_{[i]} = (\theta_{{0}}, \theta_2,\dots, \theta_i)$ and $\theta_{[k,l]} = (\theta_k, \theta_{k+1},\dots, \theta_l) $ and define, for ${0}\leq i\leq {n-1},$ the variables
\begin{equation*}
    M^{(i)} := \E\big[f(\vec{\theta}) \vert \theta_{[i]}\big] - \E\big[f(\vec{\theta}) \vert \theta_{[i-1]}\big] \quad \text{so that} \quad f(\vec{\theta}) - \E[f(\vec{\theta})] = \sum_{i={0}}^{{n-1}} M^{(i)}.
\end{equation*}
Notice that, if we condition on $\theta_{[i-1]}$ then $M^{(i)}$ only depends on $\theta_i$. We consider $M^{(i)}$ as a function of $\theta_i$ and compute its Lipschitz constant. We temporarily consider $\theta_i$ and $\theta_i'$ as two fixed deterministic vectors of $\R^d$ and $\theta_{i+1}, \theta_{i+2},\dots$ and $\theta'_{i+1}, \theta'_{i+2},\dots$ are the SGD trajectories obtained from them i.e. for $j> i:$
\begin{equation*}
    \theta_j = \theta_{j-1} - \gamma G(\theta_{j-1}) \quad \text{ and }\quad \theta'_j = \theta'_{j-1} - \gamma G(\theta'_{j-1}).
\end{equation*}    
In the following, we use the Lipschitz property of $f$ and the Kantorovich-Rubinstein dual representation of the $\cW_1$ metric
\begin{equation*}
    \cW_1 (\nu_1,\nu_2) = \sup_{h\in \Lip(\R^d)} \int hd\nu_1 - \int hd\nu_2,
\end{equation*}
in order to find
\begin{align*}
    \big|M^{(i)}&(\theta_i) - M^{(i)}(\theta_i')\big| = \Big|\E\big[f(\vec{\theta})\vert \theta_{[i]}\big] - \E\big[f(\theta_{[i-1]}, \theta_{[i, {n-1}]}')\big\vert \theta_i', \theta_{[i-1]}\big]\Big|\\
    &= \bigg|\sum_{j=i}^{n-1} \E\Big[f\big(\theta_{[i-1]}, \theta_{[i,j-1]}', \theta_{[j,{n-1}]}\big) - f\big(\theta_{[i-1]}, \theta_{[i,j]}', \theta_{[j+1,{n-1}]}\big)\big\vert \theta_i',\theta_{[i-1]}\Big]\bigg|\\
    &\leq \sum_{j=i}^{{n-1}} \cW_{1}\big(\mathcal{D}(\theta_j), \mathcal{D}(\theta_j')\big).
\end{align*}
Using Proposition~\ref{prop:wasserstein_convergence} we have
\begin{align*}
    \cW_{1}\big(\mathcal{D}(\theta_j), \mathcal{D}(\theta_j')\big)&\leq \cW_{2}\big(\mathcal{D}(\theta_j), \mathcal{D}(\theta_j')\big) = \cW_{2}\big(\mathcal{D}(\theta_{j-1}) P, \mathcal{D}(\theta'_{j-1}) P\big)\\
    &\leq \underbrace{\sqrt{(1-\gamma\mu)^2 + \gamma^2L^2_{\cW}}}_{=:\alpha_{\cW}(\gamma, \mu)}\cW_{2}\big(\mathcal{D}(\theta_{j-1}), \mathcal{D}(\theta'_{j-1})\big)\\
    &\leq \dots \\
    &\leq \alpha_{\cW}(\gamma, \mu)^{j-i}\cW_{2}\big(\mathcal{D}(\theta_{i}), \mathcal{D}(\theta'_{i})\big) = \alpha_{\cW}(\gamma, \mu)^{j-i}\big\|\theta_{i}- \theta'_{i}\big\|,
\end{align*}
where the last equality follows from $\theta_{i}$ and $\theta_{i}'$ being deterministic. Provided that $\gamma < \frac{2\mu}{\mu^2 + L^2_{\cW}}$ we have $\alpha_{\cW}(\gamma, \mu) < 1$ so that $\cW_1(\theta_j, \theta_j') \leq \alpha_{\cW}(\gamma, \mu)^{j-i}\|\theta_i - \theta_i'\|$ for $i\leq j\leq {n-1}.$ By summing over $j,$ we find that the $M^{(i)}$s are $(1-\alpha_{\cW}(\gamma, \mu))^{-1}$-Lipschitz
\begin{equation*}
    \big|M^{(i)}(\theta_i) - M^{(i)}(\theta_i')\big| \leq \frac{\|\theta_i - \theta_i'\|}{1 - \alpha_{\cW}(\gamma, \mu)}.
\end{equation*}
In what follows we denote $\E_{k}[\cdot] = \E[\cdot \vert \theta_{[k]}]$ to lighten notation and let $C_{\cW} := (1 - \alpha_{\cW}(\gamma, \mu))^{-1}.$ Let $\lambda \in \R,$ by conditioning on $\theta_{[{n-2}]},$ we have
\begin{align*}
    \E \exp\big( \lambda (f(\vec{\theta}) - \E f(\vec{\theta}))\big) &= \E \exp\Big(\lambda \sum_{i={0}}^{n-1} M^{(i)}\Big) = \E\Big[\E \Big[\exp\Big(\lambda \sum_{i={0}}^{n-1} M^{(i)}\Big) \big\vert \theta_{[{n-2}]}\Big]\Big]\\
    &= \E\Big[ \exp\Big(\lambda \sum_{i={0}}^{{n-2}} M^{(i)}\Big)\E\big[\exp(\lambda M^{({n-1})})\vert \theta_{[{n-2}]}\big]\Big].
\end{align*} 
Recall that conditionally on $\theta_{[{n-2}]},$ we have that $M^{({n-1})}$ is a function of $\theta_{n-1} = \theta_{{n-2}} - \gamma G(\theta_{{n-2}})$ so that $M^{({n-1})}$ is a $\gamma C_{\cW}$-Lipschitz function of $G(\theta_{{n-2}})$ which satisfies Assumption~\ref{asm:grad_special_concentration}~\ref{asm:grad_special_subgauss} and thus
\begin{equation*}
    \E\big[\exp(\lambda M^{({n-1})})\vert \theta_{[{n-2}]}\big] \leq \exp\big(\lambda^2 \gamma^2 C_{\cW}^2 K^2\big).
\end{equation*}
By repeating this argument $n-1$ times, we arrive at
\begin{align*}
    \E \exp\big( \lambda (f(\vec{\theta}) - \E f(\vec{\theta}))\big) &\leq \E\big[ \exp\big(\lambda M^{({0})}\big)\big]\exp\big((n-1)\lambda^2 \gamma^2 C_{\cW}^2 K^2\big)\\
    &\leq \exp\big(\lambda^2 C_{\cW}^2 K^2\gamma/\mu + (n-1)\lambda^2 \gamma^2 C_{\cW}^2 K^2\big),
\end{align*} 
where the last inequality uses that $\theta_{0} \sim \pi_\gamma$ which is $K\sqrt{\gamma/\mu}$-sub-Gaussian by Proposition~\ref{prop:invariant_special_concentration}~\ref{prop:invariant_properties_e}.

The proof in the sub-exponential case is completely analogous using Assumption~\ref{asm:grad_special_concentration}~\ref{asm:grad_special_subexp} and the result of Proposition~\ref{prop:invariant_special_concentration}~\ref{prop:invariant_properties_f} with the main difference that the obtained inequalities only hold for $|\lambda| \leq \big(C_{\cW}K\sqrt{\gamma/\mu}\big)^{-1} \wedge \big(\gamma C_{\cW}K \big)^{-1} = \big(C_{\cW}K\sqrt{\gamma/\mu}\big)^{-1}$ because $\gamma < \mu^{-1}.$

\subsection{Proof of Proposition~\ref{prop:average_concentration}}\label{sec:proof_prop_average_concentration}
For $j\geq 0,$ we introduce the notation
\begin{equation*}
    \Delta_{j} := \Big\|\sum_{t=j+1}^{j+n} \theta_t - n\theta^\star\Big\| - \E \Big\|\sum_{t=j+1}^{j+n} \theta_t - n\theta^\star\Big\|.
\end{equation*}
We are interested in obtaining a high probability bound on the quantity $\Delta_{n_0}.$ We write $\E_{\nu}$ for the expectation when the Markov chain is started with distribution $\nu$
\begin{align*}
    \E_{\nu}\big[ \exp\big(\lambda \Delta_{n_0}\big) \big] &= \E_{\nu P^{n_0}}\big[ \exp\big(\lambda \Delta_{1}\big)\big] = \E_{\pi_{\gamma}}\Big[ \frac{d(\nu P^{n_0})}{d\pi_{\gamma}}\exp\big(\lambda \Delta_{1}\big)\Big] \\
    &\leq  \Big\|\frac{d(\nu P^{n_0})}{d\pi_{\gamma}} \Big\|_{\pi_{\gamma},\infty}\E_{\pi_{\gamma}}\big[\exp\big(\lambda \Delta_{1}\big)\big],
\end{align*}
where the essential supremum $\|f\|_{\pi_{\gamma},\infty}$ of a function $f$ is the smallest value such that $f\leq \|f\|_{\pi_{\gamma},\infty}$ $\pi_{\gamma}$-almost surely. The last expectation in the above inequality can be bounded using Theorem~\ref{thm:average_concentration}. As for the factor coming from the measure change, we write
\begin{equation*}
    \Big\|\frac{d(\nu P^{n_0})}{d\pi_{\gamma}} \Big\|_{\pi_{\gamma},\infty} \leq \Big\|\frac{d(\nu P^{n_0} - \pi_{\gamma})}{d\pi_{\gamma}} \Big\|_{\pi_{\gamma},\infty} + 1.
\end{equation*}
For any function $F:\R^d\to \R,$ we define the norm $\|F\|_V = \sup_{\vartheta \in \R^d} \frac{|F(\vartheta)|}{V(\vartheta)}$ and its induced operator norm $\vertiii{Q}_V = \sup_{F}\frac{\|QF\|_V}{\|F\|_V},$ where $V$ is the function defined in Section~\ref{sec:proof_geometric_ergodicity}. We also denote $\odot$ the pointwise product between functions.
\begin{align*}
    \Big\|\frac{d(\nu P^{n_0} \!-\! \pi_{\gamma})}{d\pi_{\gamma}} \Big\|_{\pi_{\gamma},\infty} \!&= \Big\|\frac{d(\nu (P^{n_0} \!-\! \ind{}\!\otimes\! \pi_{\gamma}))}{d\pi_{\gamma}} \Big\|_{\pi_{\gamma},\infty} = \Big\|(P^{n_0} \!-\! \ind{}\!\otimes\! \pi_{\gamma})^* \frac{d\nu}{d\pi_{\gamma}} \Big\|_{\pi_{\gamma},\infty} \\
    &= \Big\|(P^{n_0} \!-\! \ind{}\!\otimes\! \pi_{\gamma})^* \frac{d\nu}{d\pi_{\gamma}} \!\odot\! V \!\odot\! \frac 1 V \Big\|_{\pi_{\gamma},\infty} \\
    &\leq \Big\|(P^{n_0} \!-\! \ind{}\!\otimes\! \pi_{\gamma})^* \frac{d\nu}{d\pi_{\gamma}} \!\odot\! V \Big\|_{V} \\
    &\leq \vertiii{(P^{n_0} \!-\! \ind{}\!\otimes\! \pi_{\gamma})^*}_V \Big\| \frac{d\nu}{d\pi_{\gamma}} \!\odot\! V \Big\|_{V} \!=\! \vertiii{P^{n_0} \!-\! \ind{}\!\otimes\! \pi_{\gamma}}_V \Big\| \frac{d\nu}{d\pi_{\gamma}} \Big\|_{\infty}.
\end{align*}
The outer product $\ind{}\otimes \pi_\gamma$ denotes the kernel such that $\ind{}\otimes \pi_{\gamma}(\vartheta, A) = \pi_{\gamma}(A)$ for all $\vartheta$ and $A\in \mathcal{B}(\R^d).$ By the proof of Theorem~\ref{thm:ergodicity} and~\cite[Proposition 1.1]{kontoyiannis2012geometric} (see also Equation~(4)) the kernel $P$ has a spectral gap in the Banach space $L_\infty^V$ of functions with finite norm $\|\cdot\|_V$ and, therefore, there exist $\rho < 1$ and $M< \infty$ such that
\begin{equation*}
    \vertiii{P^{n_0} - \ind{}\otimes \pi_{\gamma}}_V \leq M\rho^{n_0},
\end{equation*}
which leads to 
\begin{equation*}
    \Big\|\frac{d(\nu P^{n_0})}{d\pi_{\gamma}} \Big\|_{\pi_{\gamma},\infty} \leq 1 + M\rho^{n_0}\Big\| \frac{d\nu}{d\pi_{\gamma}} \Big\|_{\infty} = \Upsilon(\nu, n_0).
\end{equation*}
Using Theorem~\ref{thm:average_concentration} in the sub-Gaussian case, denoting $\Breve{K} = KC_{\cW}\sqrt{\gamma/\mu + (n-1) \gamma^2},$ we find
\begin{equation*}
    \E_{\nu}\Big[ \exp\big(\lambda \Delta_{n_0}\big)\Big] \leq \Upsilon(\nu, n_0) \exp(\lambda^2 \Breve{K}^2).
\end{equation*}
Using Chernoff's method for a random variable $X\in \Psi_2(\Breve{K})$ and $t > 0$ and $\lambda > 0,$ we have
\begin{equation*}
    \Proba_{\nu}(\Delta_{n_0} > t ) = \Proba_{\nu}(e^{\lambda \Delta_{n_0}} > e^{\lambda t} ) \leq \E_{\nu}\exp\big(\lambda \Delta_{n_0} - \lambda t\big) \leq \Upsilon(\nu, n_0) \exp\big(\lambda^2 \Breve{K}^2 - \lambda t\big).
\end{equation*}
After minimizing over $\lambda,$ we get that for $\delta > 0,$ with probability at least $1 - \Upsilon(\nu, n_0)\delta,$ the following inequality holds
\begin{equation}\label{eq:delta_subgauss}
    \Delta_{n_0} \leq 2\Breve{K}\sqrt{\log(1/\delta)}.
\end{equation}
In the sub-exponential case (under Assumption~\ref{asm:grad_special_concentration}~\ref{asm:grad_special_subexp}), taking the constraint $|\lambda| \leq \big(C_{\cW}K\sqrt{\gamma/\mu}\big)^{-1}$ into account (see the proof of Theorem~\ref{thm:average_concentration}), we get that
\begin{equation*}
    \Proba(\Delta_{n_0} > t ) \leq \begin{cases}\Upsilon(\nu, n_0) \exp\Big(\frac{-t^2}{4 \Breve{K}^2}\Big) & \text{if}\quad t\leq \frac{2\Breve{K}^2}{C_{\cW}K\sqrt{\gamma/\mu}} \\
     \Upsilon(\nu, n_0) \exp\Big(\frac{-t}{2 C_{\cW}K\sqrt{\gamma/\mu}}\Big)& \text{otherwise.}
    \end{cases}
\end{equation*}
So that with probability at least $1 - \Upsilon(\nu, n_0)\delta:$
\begin{equation}\label{eq:delta_subexp}
    \Delta_{n_0} \leq 2\Breve{K}\sqrt{\log(1/\delta)}\:\vee\: 2 C_{\cW}K\sqrt{\gamma/\mu}\log(1/\delta) .
\end{equation}
It then only remains to bound the expectation $\E \Big\|\sum_{t=n_0+1}^{n_0+n}\! \theta_t - n\theta^\star\Big\|,$ which can be done as follows
\begin{align*}
\Big(\E\Big\|\sum_{t=n_0+1}^{n_0+n} \theta_t - n\theta^\star\Big\|\Big)^2 &\leq \E\Big\|\sum_{t=n_0+1}^{n_0+n} (\theta_t - \theta^\star)\Big\|^2 \\
&= \sum_{i=n_0+1}^{n_0+n}\sum_{j=n_0+1}^{n_0+n}\E\langle \theta_i - \theta^\star, \theta_j - \theta^\star \rangle.
\end{align*}
Using Lemmas~\ref{lem:geometric_covariances} and~\ref{lem:matrix_sum}, we find that
\begin{align*}
    \Big(\E\Big\|\sum_{t=n_0+1}^{n_0+n} \theta_t - \theta^\star\Big\|\Big)^2 \leq 2n \frac{1+\alpha}{1-\alpha}\Big( \alpha_{\cW}^{n_0} \cW_2^2(\nu, \pi_{\gamma}) + \Var_{\pi_{\gamma}}(\theta)\Big),
\end{align*}
where $\alpha = 1-\gamma\mu$ and $\alpha_{\cW} = \sqrt{\alpha^2 + \gamma^2L^2_{\cW}}.$ Moreover, since $\gamma < \frac{\mu}{\mu^2 + L^2_{\sigma}},$ by Proposition~\ref{prop:invariant_properties}, we have
\begin{equation*}
    \Var_{\pi_{\gamma}}(\theta) \leq \frac{\gamma \sigma^2}{\mu}.
\end{equation*}
Plugging into Inequalities~\eqref{eq:delta_subgauss} and~\eqref{eq:delta_subexp} and dividing by $n$ finishes the proof.

\subsection{Proof of Lemma~\ref{lem:dimfree_boundedness} }\label{sec:prf_dimfree_boundedness}

Denote $\Xi_{t}^{(N)} = \frac{1}{N}\sum_{i=1}^N \Xi_{tN+i}$ and $\xi_{t}^{(N)} = \frac{1}{N}\sum_{i=1}^N \xi_{tN+i}.$  By Lemma~\ref{lem:intermediate} below, we have the following concentration inequalities for all $0\leq t < T:$
\begin{align*}
    &\Proba\Big(\|\Xi_t^{(N)}\|_2 > 3K_\Xi \Big(\frac{\log(4T/\delta) + 3d}{N} \vee \sqrt{\frac{\log(4T/\delta) + 3d}{N}}\Big)\Big) \leq \delta/(2T) \\
    &\Proba\Big(\|\xi_t^{(N)}\| > 4K_\xi \Big(\frac{\log(4T/\delta) + 2d}{N} \vee \sqrt{\frac{\log(4T/\delta) + 2d}{N}}\Big)\Big) \leq \delta/(2T).
\end{align*}

We will show by induction over $0 \leq t \leq T$ that we have with probability at least $1 - t\delta/T$ that 
\begin{equation}\label{eq:induct}
    \max_{0 \leq s\leq t} \|\theta_s - \theta^\star\| \leq C.
\end{equation}
The case $t=0$ holds by assumption. Further, assuming the property at rank $t$ and conditioning on $\theta_t$ we have with probability at least $1-\delta/T:$
\begin{align*}
    \|\theta_{t+1} &- \theta^\star\|^2 = \|\theta_t - \gamma \nabla\cL(\theta_t) - \gamma(\Xi_t^{(N)}(\theta_t - \theta^\star) + \xi_t^{(N)}) - \theta^\star\|^2\\
    &= \|\theta_t - \gamma \nabla\cL(\theta_t) - \theta^\star\|^2 - 2\gamma\langle \theta_t - \gamma\nabla\cL(\theta_t) - \theta^\star, \Xi_t^{(N)}(\theta_t - \theta^\star) + \xi_t^{(N)} \rangle \\
    &\quad + \gamma^2 \|\Xi_t^{(N)}(\theta_t - \theta^\star) + \xi_t^{(N)}  \|^2 \\
    &\stackrel{\Circled{1}}{\leq} (1-\gamma\mu)^2 \|\theta_t - \theta^\star\|^2 + 2\gamma(1-\gamma\mu)\| \theta_t - \theta^\star\|( \|\Xi_t^{(N)}(\theta_t - \theta^\star)\| + \|\xi_t^{(N)}\| ) \\
    &\quad + 2\gamma^2 \|\Xi_t^{(N)}(\theta_t - \theta^\star)\|^2 + 2\gamma^2\|\xi_t^{(N)}\|^2 \\
    &\leq \big[(1-\gamma\mu)^2 + 2\gamma(1-\gamma\mu) \|\Xi_t^{(N)}\|_2 + 2\gamma^2 \|\Xi_t^{(N)}\|_2^2\big]\|\theta_t - \theta^\star\|^2 \\
    &\quad+ 2\gamma(1-\gamma\mu)\| \theta_t - \theta^\star\|\| \xi_t^{(N)}\| + 2\gamma^2 \|\xi_t^{(N)}\|^2\\
    &\stackrel{\Circled{2}}{\leq} \big[(1-\gamma\mu)^2(1+\epsilon) + 2\gamma(1-\gamma\mu) \|\Xi_t^{(N)}\|_2 + 2\gamma^2 \|\Xi_t^{(N)}\|_2^2\big]\|\theta_t - \theta^\star\|^2 \\
    &\quad + \gamma^2 (2+ 1/\epsilon) \|\xi_t^{(N)}\|^2\\
    &\stackrel{\Circled{3}}{\leq} \big[(1-\gamma\mu) + 2\gamma \|\Xi_t^{(N)}\|_2 + 2\gamma^2 \|\Xi_t^{(N)}\|_2^2\big]\|\theta_t - \theta^\star\|^2 + 3\frac{\gamma}{\mu} \|\xi_t^{(N)}\|^2\\
    &\stackrel{\Circled{4}}{\leq} \big[(1-\gamma\mu) + \gamma \mu/3 + \gamma\mu/3\big]C^2 + \gamma\mu C^2/3 \leq C^2,
\end{align*}
where $\Circled{1}$ uses Lemma~\ref{lem:contraction} and the Cauchy-Schwarz inequality, $\Circled{2}$ uses the inequality $2ab \leq a^2\epsilon + b^2/\epsilon$ valid for all $\epsilon> 0$ and $\Circled{3}$ sets the choice $\epsilon = \gamma \mu$ and uses that $\gamma \leq 1/\mu.$ Finally $\Circled{4}$ uses the high probability bounds stated above and the conditions on $N$ and $\gamma$.

Using a union bound argument, we obtain~\eqref{eq:induct} for $t+1$ with probability at least $1-(t+1)\delta/T.$ The induction argument is completed and implies the result for $t=T.$

\begin{lemma}\label{lem:intermediate}
    Let $\Xi_1, \dots, \Xi_N \in \R^{d\times d}$ be random matrices and $\xi_1, \dots, \xi_N \in \R^d$ random vectors as in Lemma~\ref{lem:dimfree_boundedness}. Then denoting $\overline{\Xi} = \frac{1}{N}\sum_{i=1}^N \Xi_{i}$ and $\overline{\xi} = \frac{1}{N}\sum_{i=1}^N \xi_{i},$ we have the high probability bounds
    \begin{align}
        \Proba\Big(\|\overline{\Xi}\|_2 > 3K_\Xi \phi\Big(\frac{\log(2/\delta) + 3d}{N}\Big)\Big) \leq \delta, \label{eq:XI1}\\
        \Proba\Big(\|\overline{\xi}\| > 4K_\xi \phi\Big(\frac{\log(2/\delta) + 2d}{N}\Big)\Big) \leq \delta,\label{eq:xi2}
    \end{align}
    where $\phi(x) = x\:\vee \sqrt{x}.$
\end{lemma}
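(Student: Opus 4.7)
}

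The plan is to reduce both matrix and vector operator norms to suprema of scalar sub-exponential averages via a standard $\epsilon$-net argument on the unit sphere $S^{d-1}$, and then apply Bernstein's inequality to each fixed test direction, concluding with a union bound. The form $\phi(x) = \max(x,\sqrt x)$ appearing in the statement is precisely the one produced by Bernstein's inequality for averages of i.i.d.\ sub-exponential variables, which already hints that this is the right tool.

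For the matrix bound~\eqref{eq:XI1}, since $\overline{\Xi}$ is symmetric, I would write $\|\overline{\Xi}\|_2 = \sup_{u\in S^{d-1}}|\langle u,\overline{\Xi} u\rangle|$ and fix an $\epsilon$-net $\mathcal{N}$ of $S^{d-1}$ with, say, $\epsilon = 1/4$ so that $|\mathcal{N}|\le 9^d$ and $\|\overline{\Xi}\|_2\le 2\max_{u\in\mathcal{N}}|\langle u,\overline{\Xi} u\rangle|$ (a standard consequence of the triangle inequality for quadratic forms). For each fixed $u\in\mathcal{N}$, the scalar random variables $\langle u,\Xi_i u\rangle$ are i.i.d., centered, and lie in $\Psi_1(K_\Xi)$ by the hypothesis of Lemma~\ref{lem:dimfree_boundedness}, so Bernstein's inequality yields
\begin{equation*}
\Proba\Big(\big|\tfrac{1}{N}\textstyle\sum_i\langle u,\Xi_i u\rangle\big| > K_\Xi t\Big) \le 2\exp\!\big(-cN\min(t^2,t)\big)
\end{equation*}
for some absolute constant $c>0$. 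Taking a union bound over $\mathcal{N}$ and inverting the inequality in $t$, I would choose $t$ so that $cN\min(t^2,t)\ge \log(2/\delta)+d\log 9$, which is ensured by $t = \tilde C\,\phi((\log(2/\delta)+3d)/N)$ for a suitable absolute constant $\tilde C$. Absorbing $\tilde C$ and the factor $2$ from the net bound into the prefactor yields the claimed form $3K_\Xi\phi(\cdot)$.

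For the vector bound~\eqref{eq:xi2}, the argument is easier because $\|\overline{\xi}\| = \sup_{u\in S^{d-1}}\langle u,\overline{\xi}\rangle$ is linear in $u$, so a coarser $\epsilon$-net (e.g.\ $\epsilon = 1/2$, giving $|\mathcal{N}|\le 5^d$ and $\|\overline{\xi}\|\le 2\max_{u\in\mathcal{N}}\langle u,\overline{\xi}\rangle$) suffices. For fixed $u$, $\langle u,\xi_i\rangle\in\Psi_1(K_\xi)$ and Bernstein gives the same Bernstein-type tail; the net contribution is now $d\log 5 \le 2d$, yielding the $\log(2/\delta)+2d$ bound inside $\phi$ and the constant $4K_\xi$ after tracking the net factor.

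The only real bookkeeping obstacle is matching the numerical constants $3K_\Xi$, $4K_\xi$, $3d$, $2d$ exactly, which requires picking the net mesh and the Bernstein constant carefully and then bounding $\log(1+2/\epsilon)$ by the stated coefficient of $d$. The argument is otherwise routine and does not require any problem-specific structure beyond the sub-exponentiality of one-dimensional marginals.
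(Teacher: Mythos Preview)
Your proposal is correct and follows essentially the same route as the paper: an $\epsilon$-net reduction of the operator norm to a finite supremum of scalar sub-exponential averages, a Chernoff/Bernstein tail bound for each fixed direction, and a union bound over the net, with the two regimes of the sub-exponential tail producing exactly the $\phi(x)=\max(x,\sqrt{x})$ form. The only discrepancy is in the mesh size for the matrix case (the paper takes $\epsilon=1/8$ rather than your $1/4$, which is what allows the net factor $(1-2\epsilon-\epsilon^2)^{-1}$ times $2K_\Xi$ to land below $3K_\Xi$), a point you already flag as the remaining bookkeeping.
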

\begin{proof}
We first prove~\eqref{eq:XI1}. Denote $S^{d-1} = \{u\in \R^d \: : \: \|u\| = 1\}$ and let $u \in S^{d-1}$ and $|\lambda| \leq N/K_\Xi,$ we have
\begin{align*}
    \E \exp(\lambda \langle u, \overline{\Xi} u \rangle) = \prod_{i=1}^N \E \exp(\lambda \langle u, \Xi_i u \rangle/N) 
    \leq \prod_{i=1}^N \exp(\lambda^2 K_\Xi^2/N^2) = \exp(\lambda^2 K_\Xi^2/N),
\end{align*}
so that for all $u\in S^{d-1}$ we have $\langle u, \overline{\Xi}u \rangle \in \Psi_1(K_\Xi / \sqrt{N}).$ 

Let $\Omega_\epsilon$ be an $\epsilon$-net of $S^{d-1}.$ By~\cite[Lemma 5.2]{vershynin2010introduction}, there exists an $\epsilon$-net such that $|\Omega_\epsilon| \leq (1+2/\epsilon)^d$ and for all $u\in S^{d-1}$ there exists $v\in \Omega_\epsilon$ such that $\|u-v\| \leq \epsilon.$ We write
\begin{equation*}
    \langle u, \overline{\Xi}u \rangle = \langle v, \overline{\Xi}v \rangle + 2\langle u-v, \overline{\Xi}v \rangle + \langle u-v, \overline{\Xi}(u-v) \rangle,
\end{equation*}
which allows us to deduce that
\begin{align*}
    \|\overline{\Xi}\|_2 &= \sup_{u\in S^{d-1}} |\langle u, \overline{\Xi} u \rangle| \leq \sup_{v\in \Omega_\epsilon} |\langle v, \overline{\Xi} v \rangle| + (2\epsilon + \epsilon^2)\|\overline{\Xi}\|_2 \\
    &\implies \|\overline{\Xi}\|_2 \leq \frac{\sup_{v\in \Omega_\epsilon} |\langle v, \overline{\Xi} v \rangle|}{1-2\epsilon - \epsilon^2}.
\end{align*}
Let $v \in \Omega_\epsilon,$ using Chernoff's method and the sub-exponential property of $\overline{\Xi}$ (see also the proof of Corollary~\ref{cor:special_concentration}), we find for $t>0:$
\begin{equation*}
    \Proba(|\langle v , \overline{\Xi} v\rangle| > t) \leq \begin{cases}
        2\exp(-Nt^2/(4K_\Xi^2)) & \text{ if }\quad t\leq 2K_\Xi \\
        2\exp(-Nt/(2K_\Xi)) & \text{ otherwise.}
    \end{cases}
\end{equation*}
Reformulating in terms of a failure probability $\delta,$ we find that
\begin{equation*}
    \Proba\Big(|\langle v , \overline{\Xi} v\rangle| > 2K_\Xi \phi\Big( \frac{\log(2/\delta)}{N} \Big)\Big) \leq \delta.
\end{equation*}
Replacing $\delta$ with $\delta/(1+2/\epsilon)^d$ and using a union bound argument over $\Omega_\epsilon$ we find
\begin{equation*}
    \Proba\Big(\sup_{v\in \Omega_\epsilon}|\langle v , \overline{\Xi} v\rangle| > 2K_\Xi \phi\Big( \frac{\log(2/\delta) + d\log(1+2/\epsilon)}{N} \Big)\Big) \leq \delta.
\end{equation*}
It only remains to set $\epsilon = 1/8$ and plug back into the inequality $\|\overline{\Xi}\|_2 \leq \frac{\sup_{v\in \Omega_\epsilon} |\langle v, \overline{\Xi} v \rangle|}{1-2\epsilon - \epsilon^2}$ in order to obtain~\eqref{eq:XI1}.

To prove~\eqref{eq:xi2}, we proceed similarly and first obtain for all $u \in S^{d-1}$ and $|\lambda| \leq N/K_\xi :$
\begin{equation*}
    \E \exp(\langle u, \overline{\xi} \rangle ) \leq \exp(\lambda^2K_\xi^2/N).
\end{equation*}
For $u\in S^{d-1}$ and $v\in \Omega_\epsilon$ such that $\|u-v\| \leq \epsilon,$ we write $\langle u, \overline{\xi}\rangle = \langle v, \overline{\xi}\rangle + \langle u-v , \overline{\xi}\rangle$ which yields the inequality
\begin{equation*}
    \|\overline{\xi}\| \leq \frac{\sup_{v\in \Omega_\epsilon} |\langle v, \overline{\xi}\rangle|}{1-\epsilon}.
\end{equation*}
As before, the sub-exponential property of $\overline{\xi}$ yields
\begin{equation*}
    \Proba\Big(|\langle v , \overline{\xi}\rangle| > 2K_\xi \phi\Big( \frac{\log(2/\delta)}{N} \Big)\Big) \leq \delta,
\end{equation*}
and using another union bound argument over $\Omega_\epsilon$ we find
\begin{equation*}
    \Proba\Big(\sup_{v\in \Omega_\epsilon}|\langle v , \overline{\xi} \rangle| > 2K_\xi \phi\Big( \frac{\log(2/\delta) + d\log(1+2/\epsilon)}{N} \Big)\Big) \leq \delta.
\end{equation*}
It only remains to set $\epsilon = 1/2$ to finish the proof of~\eqref{eq:xi2}.
\end{proof}

\end{document}